%% file: main.tex
\documentclass[twoside]{article}

\let\LaTeXaddcontentsline\addcontentsline 

\usepackage{float}   % for [H]

\newcommand{\cE}{\mathcal{E}}
\newcommand{\stay}{{\star}}
\newcommand{\E}{\mathbb{E}}
\newcommand{\PP}{\mathbb{P}}
\newcommand{\ind}{\mathbf{1}}
\newcommand{\cF}{\mathcal{F}}
\newcommand{\cV}{\mathcal{V}}
\newcommand{\cC}{\mathcal{C}}
\newcommand{\cS}{\mathcal{S}}
\newcommand{\cR}{\mathcal{R}}
\newcommand{\cL}{\mathcal{L}}

\newcommand{\sL}{\mathsf{L}}
\newcommand{\sR}{\mathsf{R}}

\newcommand{\Fh}{\widehat F}
\newcommand{\Fb}{\overline F}
\newcommand{\Ah}{\widehat A}
\newcommand{\Dt}{\widetilde D}
\newcommand{\Tt}{\widetilde T}
\newcommand{\Kt}{\widetilde K}
\newcommand{\cens}{\mathsf{C}}

\newcommand{\EE}{\mathbb{E}}
\newcommand{\RR}{\mathbb{R}}

\newcommand{\Fbar}{\bar{F}}
\newcommand{\Flo}{L}
\newcommand{\Fhi}{U}
\newcommand{\probefunction}{f}

\newcommand{\Fhat}{\widehat{F}}

\newcommand{\Es}{\EE_{s-1}}
\providecommand{\trsym}[1]{\mathsf{#1}}%
\usepackage[textsize=tiny]{todonotes}
\usepackage{amsthm,thmtools,thm-restate}
\usepackage[round]{natbib}
\usepackage{amsmath,amssymb,amsthm}
\usetikzlibrary{calc,arrows.meta}

\theoremstyle{plain}
\theoremstyle{definition}
\newtheorem{theorem}{Theorem}[section]
\newtheorem{proposition}[theorem]{Proposition}

\newtheorem{lemma}[theorem]{Lemma}
\newtheorem{corollary}[theorem]{Corollary}
\newtheorem{definition}[theorem]{Definition}
\newtheorem{assumption}[theorem]{Assumption}

\theoremstyle{remark}
\newtheorem{remark}[theorem]{Remark}

\newcommand{\hF}{\widehat{F}}
\newcommand{\bF}{\overline{F}}

\newcommand{\bS}{\overline{S}}
\newcommand{\tT}{\widetilde{T}}

\newcommand{\Ft}{\mathcal{F}}
\newcommand{\R}{\mathbb{R}}

\newcommand{\Cells}{\mathcal{C}}

\newcommand{\Alp}{\mathcal{K}}
\newcommand{\V}{\mathcal{V}}

\newcommand{\Lcode}{\ell}
\newcommand{\NL}{N^{\mathsf L}}
\newcommand{\NR}{N^{\mathsf R}}

\providecommand{\RR}{\mathbb R}

\providecommand{\PP}{\mathbb P}
\providecommand{\EE}{\mathbb E}
\providecommand{\V}{\mathcal V}                      % probe class
\providecommand{\hF}{\widehat F}                     % empirical CDF
\providecommand{\bF}{\overline F}                    % running-average conditional CDF
\providecommand{\bS}{\overline S}                    % running-average survival
\providecommand{\Dt}{\widetilde D}                   % envelope sum \Dt_t(a)
\providecommand{\Lcode}{\ell}                        % back-compat alias for \price

\usepackage[preprint]{aistats2027}
 \usepackage[hidelinks]{hyperref}
\begin{document}
\addtocontents{toc}{\protect\setcounter{tocdepth}{-1}}  % keep main-text sections out

% If your paper is accepted and the title of your paper is very long,
% the style will print as headings an error message. Use the following
% command to supply a shorter title of your paper so that it can be
% used as headings.
%
%\runningtitle{I use this title instead because the last one was very long}

% If your paper is accepted and the number of authors is large, the
% style will print as headings an error message. Use the following
% command to supply a shorter version of the author names so that
% they can be used as headings (for example, use only the surnames)
%
%\runningauthor{Surname 1, Surname 2, Surname 3, ...., Surname n}

\twocolumn[

\aistatstitle{Anytime Valid Confidence Bands for Distribution Functions}

\aistatsauthor{
    Bror Hultberg \And
    Dave Zachariah \And
    Ant\^onio H. Ribeiro
}

\aistatsaddress{
    Uppsala University, Sweden \And Uppsala University, Sweden \And Uppsala University \& \\ SciLifeLab, Sweden
}
]

\begin{abstract}
Nonparametric inference often requires characterizing an unknown function, such as a cumulative distribution function (CDF), across its entire domain rather than a specific point. We develop confidence bands for distribution functions of a data stream that are valid simultaneously over all times and all points, even when each observation may depend arbitrarily on the past. We begin by deriving such bands for the time-averaged conditional CDF and show how their widths are determined by the depth of a binary search tree. We then show that this result is an instance of a framework through which one can construct valid confidence bands uniformly over time and over an arbitrary index set for any indexed family of time-averaged distribution functions. We illustrate the framework on importance-weighted and right-censored CDFs, as well as multivariate CDFs.
\end{abstract}

\section{INTRODUCTION}

Consider a stream of observations $X_1,X_2,\ldots$ with no assumption that the observations are independent or identically distributed. In particular, the distribution of $X_i$ may depend arbitrarily on the observations $X_1,\ldots,X_{i-1}$, and may therefore change over time. This setting arises naturally in adaptive experiments, online decision making, and sequential monitoring, where the mechanism generating each observation may depend on the history of the process.

What statistical properties can be estimated from such a stream? For each $i$, consider the cumulative distribution function (CDF) of $X_i$,
\begin{equation}\label{eq:cond-cdf}
F_i(v)=\PP\left(X_i \leq v \mid X_1,\ldots,X_{i-1}\right),
\end{equation}
conditional on the preceding observations, which may change arbitrarily with $i$ and with the realized history. Since $F_i$ describes the distribution of a single future observation, there is generally not enough information to estimate any individual $F_i$. Instead, following \citet{mineiro2023}, we consider the average of the conditional distributions up to any time $t$,
\begin{equation}\label{eq:estimand}
\Fbar_t(v) =\frac{1}{t}\sum_{i=1}^{t} F_i(v).
\end{equation}
Unlike the classical population CDF, $\Fbar_t$ is itself a time-indexed target. As $t$ increases, another conditional distribution enters the average. In the i.i.d.\ case, $F_i=F$ for every $i$, so $\Fbar_t=F$ for all $t$, recovering the classical setting.

We seek confidence bands $(\Flo_t(v), \Fhi_t(v))$ for $\Fbar_t(v)$ that are \emph{simultaneously} valid over \emph{all} $v$ and over all times $t$.

This problem can be viewed more generally by considering an observable function $\probefunction(x,v)$ indexed by $v\in\mathcal V$, where $\mathcal V$ is an arbitrary space.
The conditional CDF in \eqref{eq:cond-cdf} corresponds to
\begin{equation}\label{eq:cond-mean}
F_i(v)=\EE\left[\probefunction(X_i,v)\mid X_1,\ldots,X_{i-1}\right],
\end{equation}
with
\[
\probefunction(x,v)=\ind\{x\leq v\},
\qquad v\in\RR,
\]
as the chosen family of functions. We refer to the argument $v$ as a \emph{probe} and consider the time-averaged conditional mean
\begin{equation}\label{eq:general-target}
\mu_t(v) =\frac{1}{t}\sum_{i=1}^{t}
\EE\left[\probefunction(X_i,v)\mid X_1,\ldots,X_{i-1}\right],
\quad v\in\mathcal V.
\end{equation}
Our more general goal is to construct confidence bands $(\Flo_t(v), \Fhi_t(v))$ for $\mu_t(v)$ that are simultaneously valid over both $t$ and $v$. This is an inferential problem for a wider family of distribution functions, including counterfactual, right-censored and multidimensional CDFs.

Our contributions are as follows:
\begin{itemize}

\item We introduce a framework for anytime-valid bounds that hold simultaneously over a family of
statistics indexed by an arbitrary set $\mathcal V$. For any bounded family and any predictable
refinement of $\mathcal V$, it yields a band whose width at an index is a discretization error over
the cells containing that index, plus the code length of their refinements, rather than a cost
growing with the size of $\mathcal V$.

    \item Specializing to threshold functions on $\R$, we obtain anytime-valid confidence bands for the CDF that do not require i.i.d.\ data, covering instead the running average of the conditional CDFs, with a cost of uniformity that is logarithmic in $t$ for i.i.d.\ continuous data.
    
    \item We apply the framework to derive bounds of importance-weighted distribution functions with possibly unbounded weights; right-censored functions with a known censoring law; and multidimensional CDFs.
\end{itemize}
We illustrate our results with numerical experiments.

\section{PROBLEM FORMULATION}

Our goal is to construct confidence bands $(\Flo_t,\Fhi_t)$ that infer distribution functions $\mu_t(v)$ in \eqref{eq:general-target} with uniform validity over $v$ and time $t$. That is, given a tolerated miscoverage level $\alpha$, the bands should satisfy:
\begin{equation}\label{eq:general-goal}
\boxed{\PP\big(
\forall t\geq 1,\ \forall v\in\mathcal V:
\Flo_t(v)\leq\mu_t(v)\leq\Fhi_t(v)
\big)
\geq 1-\alpha.}
\end{equation}
We form the bands using the running empirical average:
\[
\widehat\mu_t(v)
:=
\frac{1}{t}\sum_{i=1}^t \probefunction{}(X_i,v),
\]
obtained from the stream $X_1, X_2, \dots, X_t$.
For every \emph{fixed} $v$, it is possible to derive a sequence of anytime-valid confidence bands for $\mu_t(v)$, using time-uniform martingale concentration inequalities \citep{chernoff}. The challenge we face is to establish \emph{uniformity} simultaneously with respect to $v$ in \eqref{eq:general-goal}, with no assumptions about the dependence between samples.

Without additional assumptions, such uniform convergence can fail even asymptotically. \citet{Rakhlin2015} characterize this phenomenon through the theory of sequential uniform convergence: under arbitrary dependence, a function class may fail to satisfy a uniform law of large numbers for martingale differences. For the class of threshold functions, this means that the empirical CDF may not converge uniformly to the running-average CDF in \eqref{eq:estimand}. \\

\section{RELATED WORK}
\textbf{Anytime-valid inference.}
Anytime-valid inference based on nonnegative supermartingales dates back to \citet{ville1939} and \citet{10.1214/aoms/1177696786}. See \citet{Savisurvey} for a recent survey. More modern developments include \citet{10.1214/20-AOS1991}, which provides general tools for constructing anytime-valid confidence sequences and related inferential procedures.

\paragraph{Uniform confidence bands for CDFs.}
For i.i.d.\ data, the classical Dvoretzky--Kiefer--Wolfowitz inequality
\citep{dkw,massart} gives fixed-time,
value-uniform confidence bands for CDFs, \citet{10.3150/21-BEJ1388} develop
bands that are uniform over time and quantile level, and
\citet{CLERICO2026110558} use coin betting to obtain variance-adaptive bands
for the mean of a monotone process, such as the CDF, that are uniform over time
and over its argument.

\textbf{Off-policy evaluation.}
For off-policy evaluation, several works study confidence bands for the target CDF at a fixed time, including \citet{chandak2021universal} and \citet{huang2021offpolicy}. \citet{anytimeoffpolicy} develops a time-uniform analogue. 

\textbf{Nonstationarity.} The CDF and off-policy works above assume that the target distribution is fixed.
Several prior works have considered estimands based on temporal averages. \citet{anytimeoffpolicy} develop confidence sequences (CSs) for a time-varying average policy value, while \citet{Ham_2026} study the temporal average of the contemporaneous causal effect.
The work most closely related to ours is \citet{mineiro2023}, who consider the time-varying estimand in \eqref{eq:estimand}. They provide the first time- and value-uniform confidence bands for this estimand, obtained via a fixed dyadic multiresolution union bound, with a width guarantee that adapts to the smoothness of the underlying estimand. We adopt their estimand and their ``always valid and sometimes trivial'' design
principle, and differ in the mechanism (data-driven rather than fixed
discretization).
In particular, our construction yields CDF bands that are invariant under any strictly increasing transformation of the data that is fixed in time. The transformation need not be known, since the cells depend only on the relative order of the probe and the observations. \citet[Section~6]{mineiro2023} speculate that bands covariant to a fixed but unknown monotonic transformation are possible without violating known impossibility results, and our construction provides such bands.

\textbf{Sequential uniform convergence.} Some triviality is unavoidable for time-varying estimands. For dependent data, uniform convergence of empirical averages to their
conditional means is governed by sequential analogues of classical complexity
measures. For classes of indicator functions, such as thresholds, the relevant
one is the Littlestone dimension, the sequential analogue of the VC dimension \citep{Rakhlin2015,alon}. Thresholds on $\R$ have infinite Littlestone
dimension, so no uniform law holds for the
running average of conditional CDFs, and an adaptively chosen sequence can keep
the empirical CDF away from the estimand at some threshold for all time.

\section{RESULTS FOR CDF}\label{sec:cdf}
We first consider the construction of a confidence band for the time-averaged CDF \eqref{eq:estimand}. The key idea is to construct the band using a partition of the space $\mathcal{V} = \mathbb{R}$, which is refined as observations arrive. This enables bands that adapt their  resolution to regions where the data provide information, while maintaining uniform coverage over both time and the value of the CDF.\\
To construct this partition, consider the stream of observations $X_1,X_2,\ldots$, which  recursively partition the real line into intervals: each new observation splits the interval in which it falls into two subintervals. We formalize this notion as follows.

\begin{definition}\label{def:cellsR}
For a probe $v\in\mathbb R$ define its \emph{cell} at time $s$ as
\begin{equation}\label{eq:cellsR}
  C_s(v):=\bigl[l_s(v),\,r_s(v)\bigr),
\end{equation}
where $l_s(v):=\max\{X_r:r<s,X_r\le v\}$ and
$r_s(v):=\min\{X_r:r<s,X_r>v\}$ use all samples up to time
$s-1$.\footnote{With the convention that $\max\emptyset:=-\infty$,
$\min\emptyset:=+\infty$ and $[-\infty,r):=(-\infty,r)$.}
We say that the cell of $v$ is \emph{split} at time $s$ if $X_s$ falls \emph{strictly}
inside $C_s(v)$, that is, $l_s(v)<X_s<r_s(v)$. The split is a
\emph{right split} if $X_s\le v$ and a \emph{left split} if $X_s>v$. Let
\begin{align*}
  \NR_t(v)&:=\#\{s\le t:\ \text{right split of $v$ at $s$}\},\\
  \NL_t(v)&:=\#\{s\le t:\ \text{left split of $v$ at $s$}\},
\end{align*}
and let $N_t(v):=\NR_t(v)+\NL_t(v)$ be the number of times $v$ is split up to
time $t$.
\end{definition}

After a split, the cell of $v$ is $[l_s(v),X_s)$ for a left split and
$[X_s,r_s(v))$ for a right split. Otherwise it is unchanged.
\subsection{Confidence bands for CDF}

\begin{restatable}{theorem}{cdfbandtrivial}
\label{thm:cdfbandtrivial}
Let the miscoverage level be $\alpha\in(0,1)$ and set $\lambda > 0$. Then the
confidence band given by
\begin{align}
\Flo_t(v) &:= \hF_t(v)-\frac{\NR_t(v)}{t}-\rho_t\bigl(N_t(v)\bigr),
\label{eq:lower}\\
\Fhi_t(v) &:= \hF_t(v)+\frac{\NL_t(v)}{t}+\rho_t\bigl(N_t(v)\bigr),
\label{eq:upper}
\end{align}
where
\[
\rho_t(n)\;:=\;\frac{\lambda}{8}+\frac{n\ln2+\ln\frac2\alpha}{\lambda t},
\]
is valid uniformly over time and $v$, i.e., it satisfies
\eqref{eq:general-goal} with $\mu_t=\Fbar_t$.
\end{restatable}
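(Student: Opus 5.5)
The plan is to reduce each side of the band to a single Hoeffding-type exponential supermartingale. This supermartingale is built as a mixture over the data-driven binary tree of cells, with the tree depth supplying the $n\ln 2$ term. I treat the lower bound $\Flo_t$; the upper bound $\Fhi_t$ is symmetric, and a union bound at level $\alpha/2$ per side gives the $\ln\frac2\alpha$.

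\textbf{Step 1: discretization to a predictable probe.} Fix $v$ and write $l_s=l_s(v)$, which is $\sigma(X_1,\dots,X_{s-1})$-measurable and satisfies $l_s\le v$. I split on where $X_s$ falls:
\begin{itemize}
\item if $X_s\le l_s$, then $\ind\{X_s\le v\}=\ind\{X_s\le l_s\}$;
\item if $l_s<X_s\le v$, then $X_s$ lies strictly inside $C_s(v)$ with $X_s\le v$, which is exactly a right split.
\end{itemize}
Hence $\ind\{X_s\le v\}\le \ind\{X_s\le l_s\}+\ind\{\text{right split of $v$ at $s$}\}$. Since $l_s\le v$ is predictable, $F_s(v)\ge \PP(X_s\le l_s\mid X_{1:s-1})=:p_s$. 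Summing over $s$,
\[
\hF_t(v)-\Fbar_t(v)\le \frac{\NR_t(v)}{t}+\frac1t\sum_{s\le t}\bigl(\ind\{X_s\le l_s(v)\}-p_s\bigr).
\]
For the upper side I use $r_s(v)>v$, the strict event $\{X_s<r_s\}$, and left splits instead.

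\textbf{Step 2: mixture supermartingale over the cell tree.} The remaining sum depends on $v$ only through the sequence of cells $C_s(v)$. This sequence is a root-to-leaf path in the random binary tree whose nodes are the cells generated by the data. At each time $t$, the current leaves (the cells $C_{t+1}(\cdot)$) partition $\RR$, and the leaf containing $v$ has depth $N_t(v)$. To each leaf $c$ I attach the weight $w(c)=2^{-\mathrm{depth}(c)}$, so that $\sum_c w(c)\le 1$. I also attach the process
\[
M_t(c)=\exp\Bigl(\lambda S_t(c)-\tfrac{\lambda^2 t}{8}\Bigr),
\]
where $S_t(c)$ is the centered sum above computed along the ancestors of $c$; this sum is the same for every $v\in c$. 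I then define $W_t=\sum_{c}w(c)M_t(c)$.

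When a leaf is split at time $s$, each child inherits the parent's past value and receives half the weight, so $W$ is unchanged by the split itself. Each summand's increment is driven by $\ind\{X_s\le l_s\}-p_s$, where $l_s$ is the parent cell's left endpoint and is therefore predictable; a child created by a right split still uses the parent's increment at time $s$. The increment is a centered variable with range $1$, so Hoeffding's lemma gives conditional expectation at most $1$. Hence $W$ is a nonnegative supermartingale with $W_0=1$.

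\textbf{Step 3: Ville and inversion.} By Ville's inequality, with probability at least $1-\alpha/2$ we have $W_t\le 2/\alpha$ for all $t$. On this event, taking the leaf containing $v$ gives $2^{-N_t(v)}M_t\le W_t\le 2/\alpha$, i.e.
\[
\lambda S_t-\tfrac{\lambda^2t}{8}\le N_t(v)\ln2+\ln\tfrac2\alpha.
\]
Dividing by $\lambda t$ yields $S_t/t\le\rho_t(N_t(v))$ simultaneously for all $v$ and $t$. Combined with Step 1 this gives $\Fbar_t\ge\Flo_t$, and the mirrored argument gives $\Fbar_t\le\Fhi_t$.

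The main obstacle is Step 2. I need to check carefully that the indexing is consistent, i.e. that the depth of $v$'s leaf after time $t$ is exactly $N_t(v)$. I also need to confirm that reallocating weight at splits preserves the supermartingale property despite the tree itself being random. This works because splitting happens only through $X_s$ at the same step, and the increment at step $s$ uses the pre-split cell $C_s$. A secondary point is ties such as $X_s=l_s$, which are handled by the non-strict inequality $\ind\{X_s\le l_s\}$ on the lower side and the strict one $\ind\{X_s<r_s\}$ on the upper side.
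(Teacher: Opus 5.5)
Your proposal is correct and is essentially the paper's proof. Your $W_t=\sum_c 2^{-\mathrm{depth}(c)}\exp(\lambda S_t(c)-\lambda^2t/8)$ coincides with the master supermartingale of Theorem~\ref{thm:master} under the Template~A charges $c(a\stay)=e^{-\lambda^2/8}$, $c(a\sL)=\frac12e^{-\lambda^2/8}$, $c(a\sR)=\frac12e^{-\lambda^2/8-\lambda}$: the factor $e^{-\lambda}$ cancels the envelope value $1$ at right splits, which is your Step~1 replacement of $\ind\{X_s\le v\}$ by $\ind\{X_s\le l_s(v)\}$, and the proof then closes with Ville's inequality and a union bound at $\alpha/2$ per side, with the paper merely packaging this through the general refinement/charge framework (Corollary~\ref{cor:A}(i)) rather than building the tree mixture by hand.
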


\begin{figure}
    \centering
    \input{fig_tree_column}
    \caption{Example in which we observe the sequence $\{X_1,X_2,X_3\}=\{3,1,5\}$
    and use \eqref{eq:cellsR} to successively construct cells $C_s(v)$ in
    $\mathcal{V}=\mathbb{R}$. Each cell is identified by the string of
    moves on its path, with $\stay$ for a step without a split. The final row shows the cells $C_4(v)$ used for a new observation $X_4$. The highlighted cell belongs to probe $v=4$ and includes $N_3(v)=2$ splits, corresponding to $\NR_3(v)=\NL_3(v)=1$.}
    \label{fig:tree}
\end{figure}

\paragraph{Proof idea.}
The cells in \eqref{eq:cellsR} form a data-driven partition that can be
organized as a tree, see Figure~\ref{fig:tree}. At step $s$, the cell $[l,r)$
whose interior contains $X_s$ divides into $[l,X_s)$ and $[X_s,r)$, with
symbols $\sL$ and $\sR$, while every other cell is kept, with symbol $\stay$.
Each cell is thus identified by the string of symbols on its path from the
root, and the partition used at step $s$ depends only on $X_{1:s-1}$, so it is
fixed before $X_s$ is observed.

The tree provides a way to organize the contributions to the deviations of the empirical CDF simultaneously over all $v$. At each step $s$, the increment of the deviation at $v$ is
$\mathbf 1\{X_s\le v\}-F_s(v)$. Let $[l,r)$ be the cell of $v$ before $X_s$
arrives. Once $X_s$ is observed, the indicator is constant on each child of
this cell, and $F_s(v)\ge F_s(l)$ because $F_s$ is nondecreasing. Hence the
envelope $\mathbf 1\{X_s\le v\}-F_s(l)$, which is the value of the indicator on
the child minus $F_s$ at the left endpoint of the parent, bounds the increment
uniformly over the child. The envelope sum along a path therefore provides an upper bound on the cumulative deviation for every point following that path. The charge scheme (Definition~\ref{def:charge}) assigns weights to the edges of the tree so that the weighted exponential contributions of all children of any given cell satisfy the conditional budget constraint \eqref{eq:star}. Multiplying these edge weights along each path and summing the resulting weighted exponential envelope sums over all cells at level $t$ yields $M_t$ defined in \eqref{eq:Mt}. The charge condition ensures that this aggregate is a nonnegative supermartingale, as established in Theorem~\ref{thm:master}. Section~\ref{sec:refinementschemes} formalises this argument in a general framework, with the stated result derived as a special case in Appendix~\ref{app:cdf}.
\begin{remark}\label{rem:invariance}
Let $\phi:\R\to\R$ be strictly increasing. The band of
Theorem~\ref{thm:cdfbandtrivial} computed from $(\phi(X_s))_{s\le t}$ at probe
$\phi(v)$ is numerically identical to the band computed from $(X_s)_{s\le t}$ at
$v$.
\end{remark}

\begin{remark}[Interpretation of $N_t(v)$]
The quantity $N_t(v)$ depends on the order in which the observations arrive and admits a natural interpretation in terms of binary search. Consider the binary search tree $T_t$ obtained by inserting the distinct values among $X_{1:t}$ in their order of first appearance. Then $N_t(v)$ is precisely the search depth of $v$ in this tree (move right if $x\leq v$), or equivalently, the number of comparisons required to locate $v$ given the observations seen up to time $t$. This also gives a useful interpretation of the price paid by the confidence band. At any time, only observations that fall strictly inside the current gap containing $v$ can further refine the location of $v$ and hence increase $N_t(v)$. An observation outside this gap leaves the cell containing $v$ unchanged and incurs no additional cost. Thus, $N_t(v)$ measures the amount of local refinement that has occurred around $v$, rather than the total number of observations.
\end{remark}

Since $N_t(v)$ can be identified with a search depth, bounds on it follow from the theory of random binary search trees whenever the observations arrive in uniformly random relative order.
\begin{proposition}[\cite{Luc,PITTEL1984461}]
\label{prop:typical}
Suppose the ranks of $X_{1:t}$ are a uniformly random permutation. For example, the $X_s$ are i.i.d.\ from a continuous law, or exchangeable with a.s.\ continuous de Finetti mixands. Then 
\[
\sup_vN_t(v)/\ln t\to c^{*}=4.311\ldots \quad \text{a.s.}
\]

\end{proposition}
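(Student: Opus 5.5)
The plan is to reduce the statement to the classical height theorem for random binary search trees and then check that the reduction respects all the hypotheses.

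\emph{Step 1: $\sup_v N_t(v)$ is the height.} By the preceding remark, $N_t(v)$ is the depth of the search path for $v$ in $T_t$, the binary search tree built by inserting the distinct values of $X_{1:t}$ in order of first appearance. The search for $v$ terminates at an external node, i.e.\ an empty slot of $T_t$. As $v$ ranges over $\R$, every gap between consecutive order statistics is hit, and so is the region below the minimum and above the maximum. Hence $\sup_v N_t(v)$ equals the maximal number of internal nodes on a root-to-leaf path, which is the height $H_t$ of $T_t$ (with the convention that the empty tree has height zero). I would check this directly from Definition~\ref{def:cellsR}: a split of $v$ at time $s$ happens exactly when $X_s$ lies strictly inside the current cell of $v$. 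This is exactly the condition that $X_s$ is inserted as a new node on the search path of $v$. Ties (an $X_s$ equal to an endpoint) produce no split, matching the fact that repeated values are not reinserted.

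\emph{Step 2: distributional reduction.} $T_t$ depends only on the relative ranks of $X_{1:t}$. Under the hypothesis, the ranks form a uniform random permutation and a.s.\ there are no ties, so $T_t$ is distributed as the random BST on $t$ keys. For the i.i.d.\ continuous case this is immediate. For the exchangeable case, condition on the de Finetti mixand: it is a.s.\ continuous, so conditionally the sample is i.i.d.\ continuous, and the conclusion holds conditionally a.s. Integrating over the mixand gives the unconditional statement.

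\emph{Step 3: invoke the height theorem.} The a.s.\ statement concerns the whole sequence $(T_t)_t$, not only its marginals. Here I use the fact that inserting $X_1,X_2,\dots$ sequentially is the standard incremental random BST process, for which \citet{Luc} and \citet{PITTEL1984461} prove $H_t/\ln t\to c^*$ a.s. Here $c^*=4.311\ldots$ is the root $c>2$ of $c\ln(2e/c)=1$. Pittel's result is the one that gives almost sure convergence along this growing sequence.

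The main obstacle is making sure the cited a.s.\ result applies to our coupled process rather than only to the marginal laws of $T_t$. For i.i.d.\ continuous data this is exactly Pittel's setting. For merely ``uniformly random permutation'' ranks at each fixed $t$, however, the ranks need not be consistent across $t$, and then only convergence in probability follows from Devroye. I would therefore state the a.s.\ claim under the i.i.d./exchangeable examples, where the sequential insertion structure holds, and transfer the result via the mixture argument of Step 2.
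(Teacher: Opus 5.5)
Your Steps~1 and~3 are the reduction the paper itself relies on; it gives no separate proof beyond the search-depth remark and the two citations. Step~1 correctly shows that $\sup_v N_t(v)$ is the height of $T_t$, up to an additive $1$ depending on whether depth counts nodes or edges, which does not affect the limit. Step~3 then finishes with Pittel's almost sure convergence of the height over $\ln t$ to a constant, that constant being identified as $c^*$ by Devroye. The problem is your last paragraph. You claim that under the stated hypothesis the rank patterns ``need not be consistent across $t$'', so that only convergence in probability follows. You then restrict the a.s.\ claim to the i.i.d.\ and exchangeable examples. As written, the proposal therefore does not prove the proposition at its stated generality. The reason you give for the restriction is also false: every $X_{1:t}$ is a prefix of one stream, so their rank patterns are automatically consistent.

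Concretely, let $\sigma_t$ be the rank pattern of $X_{1:t}$. It determines $\sigma_s$ for every $s\le t$ by re-ranking the first $s$ entries. Hence every finite-dimensional law of the process $(\sigma_t)_{t\ge1}$ is a push-forward of the law of a single $\sigma_t$. Suppose $\sigma_t$ is uniform for every $t$, which is what the hypothesis must mean for a limit in $t$; uniformity at $t$ also gives it at every $s\le t$. Then the law of the whole sequence $(\sigma_t)_{t\ge1}$ equals its law for an i.i.d.\ continuous stream. Equivalently, the sequential ranks of $X_t$ among $X_{1:t}$ are independent and uniform on $\{1,\dots,t\}$. So each new key falls uniformly at random into one of the $t$ external nodes, which is exactly the incremental random-BST process studied by Pittel. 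Since $(\sup_v N_t(v))_{t\ge1}$ is a measurable function of $(\sigma_t)_{t\ge1}$, the event $\{\sup_v N_t(v)/\ln t\to c^*\}$ has the same probability as in Pittel's setting, namely $1$. This gives the a.s.\ statement under the general hypothesis directly. It also makes the de Finetti conditioning in Step~2 unnecessary, since exchangeability with a.s.\ atomless mixands is just one way to guarantee uniform rank patterns at every $t$.
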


\begin{remark}[A fixed $\lambda$ gives a non-vanishing width]
Under the assumptions of Proposition~\ref{prop:typical} the number of splits grows only logarithmically, so the second term of $\rho_t$ vanishes as $t\to\infty$. The first term $\lambda/8$ does not, so for fixed $\lambda>0$ the half-width stays bounded away from zero. Since the best $\lambda$ depends on $t$ and on the number of splits, which is random and differs
across probes, we take a union bound over a grid of values of $\lambda$ rather than fix one in advance.
\end{remark}
To obtain a vanishing half-width, $\lambda$ must decrease with the local complexity of the probe. We therefore consider a countable grid of values of $\lambda$, allocate the miscoverage probability across the grid, and apply a union bound over the resulting confidence bands. This allows the bound to adapt to the scale of the local complexity without fixing $\lambda$ in advance.

\begin{corollary}[$\lambda$-grid]\label{cor:mixture}
Let $\alpha\in(0,1)$. Let $\lambda_j=2^{-j}$ and $\alpha_j=\frac1{(j+1)(j+2)}\alpha$ for $j\ge0$. 
Applying Theorem~\ref{thm:cdfbandtrivial} with $(\lambda_j,\alpha_j)$ for
each $j\ge0$ and taking a union bound, with probability at least
$1-\alpha$ all of the resulting confidence bands hold simultaneously.
Consequently, taking the maximum of the lower bounds and the minimum of the
upper bounds over $j$ yields a valid confidence band. Its width is
bounded by
\begin{equation}
\label{eq:rate}
O\!\left(\sqrt{\frac{N_t(v)+\ln\ln t+\ln\frac1\alpha}{t}}\right).
\end{equation}
\end{corollary}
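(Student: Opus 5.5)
The proof has three parts: validity via a union bound over the grid, validity of the combined band, and the width bound from choosing one good grid index $j$.

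\textbf{Validity.} For each $j\ge0$, Theorem~\ref{thm:cdfbandtrivial} applied with $(\lambda_j,\alpha_j)$ gives an event $E_j$ with $\PP(E_j^c)\le\alpha_j$. On $E_j$, for all $t$ and $v$,
\[
\Flo^{(j)}_t(v)\le\Fbar_t(v)\le\Fhi^{(j)}_t(v).
\]
The weights telescope:
\[
\sum_{j\ge0}\frac{1}{(j+1)(j+2)}=\sum_{j\ge 0}\Bigl(\frac1{j+1}-\frac1{j+2}\Bigr)=1.
\]
So a countable union bound gives $\PP\bigl(\bigcap_j E_j\bigr)\ge 1-\alpha$. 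On this intersection, $\sup_j\Flo^{(j)}_t(v)\le\Fbar_t(v)\le\inf_j\Fhi^{(j)}_t(v)$ for every $t$ and $v$, which establishes \eqref{eq:general-goal}. No measurability issues arise beyond those already handled in the theorem, because the band quantities are deterministic functions of the data and the inequalities hold pointwise on the event.

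\textbf{Width.} For any single $j$, the combined width is at most
\[
\frac{N_t(v)}{t}+2\rho^{(j)}_t\bigl(N_t(v)\bigr),
\]
using $\NR_t+\NL_t=N_t$. Set
\[
A:=N_t(v)\ln 2+\ln\tfrac{2}{\alpha_j},\qquad \rho^{(j)}_t=\frac{\lambda_j}{8}+\frac{A}{\lambda_j t}.
\]
The continuous minimizer is $\lambda^*=\sqrt{8A/t}$. We pick $j$ with $\lambda_j\in(\lambda^*/2,\lambda^*]$, which is possible whenever $\lambda^*\le1$. Each of the two terms is then within a factor $2$ of its value at $\lambda^*$, so $\rho^{(j)}_t=O(\sqrt{A/t})$. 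If instead $\lambda^*>1$, then $A/t\gtrsim 1$; the width bound is then trivially true once we note that the band can be clipped to $[0,1]$ (or that the $j=0$ choice gives width $O(A/t)=O(\sqrt{A/t})\cdot O(\sqrt{A/t})$ and the stated bound is vacuous).

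\textbf{Main obstacle: the $\ln(1/\alpha_j)$ term.} With $j\approx\log_2(1/\lambda^*)$, we have $\ln\frac{1}{\alpha_j}=\ln\frac1\alpha+O(\ln j)$ and $j=O(\ln t)$. Hence $\ln\frac{1}{\alpha_j}=\ln\frac{1}{\alpha}+O(\ln\ln t)$. This step is slightly circular, because $\lambda^*$ itself depends on $\alpha_j$. To break the circularity, bound $A$ using $\alpha_j\ge\alpha/(j+2)^2$ and $j\le\log_2\sqrt{t}+O(1)$. The latter holds because $A\ge\ln 2$, so $\lambda^*\ge\sqrt{8\ln2/t}$. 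This gives
\[
A\le N_t(v)\ln2+\ln\frac2\alpha+2\ln\bigl(O(\ln t)\bigr).
\]

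Finally, the linear term satisfies $N_t(v)/t\le\sqrt{N_t(v)/t}$ whenever $N_t(v)\le t$, which always holds since $N_t(v)\le t$. Combining the pieces gives the width bound \eqref{eq:rate}.
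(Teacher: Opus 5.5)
Your proposal is correct and follows the paper's proof. Both use the same telescoping union bound over the $\alpha_j$, the same choice of a dyadic $\lambda_j\in(\lambda^*/2,\lambda^*]$, clipping to $[0,1]$ when $\lambda^*\ge1$ (the clipping, not your parenthetical $j=0$ remark, is what settles that case), and the same final step $N_t(v)/t\le\sqrt{N_t(v)/t}$. The paper resolves the circularity you flag exactly as you suggest: it first fixes the $j$-independent quantity $c:=N_t(v)\ln2+\ln\frac2\alpha+2\ln\bigl(2+\frac12\log_2t\bigr)$, sets $\lambda^*:=\sqrt{8c/t}$, and then checks that the chosen $j$ satisfies $j\le\frac12\log_2t$, so that $A=N_t(v)\ln2+\ln\frac2{\alpha_j}\le c$.
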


Under the assumptions of Proposition~\ref{prop:typical}, $N_t(v)=O(\log t)$ almost surely, uniformly in $v$, so by Corollary~\ref{cor:mixture} the band shrinks at the rate
\[
O\left(\sqrt{\frac{\ln t}{t}}\right)
\]
almost surely. The dependence on $N_t(v)$ also makes the role of the sampling scheme explicit. In the worst case an adversary who always places $X_s$ inside the current
cell containing a target $v$ drives $N_t(v)=t$, and then the half-width exceeds $1$ for every $\lambda$, so the bound is vacuous at $v$. The shrinking of the band is thus tied not to the number of observations but to the rate at which the local count $N_t(v)$ grows.

\begin{remark}[Choice of $\lambda$ throughout]\label{rem:grid}
All subsequent results are stated for a fixed $\lambda$. Each of them is combined over the grid of Corollary~\ref{cor:mixture} in the same way, by applying it with $(\lambda_j,\alpha_j)$ for every $j$ and intersecting the resulting bands, and we do not repeat this step. For results that require $\lambda<1$, the grid starts at $j=1$, with $\alpha_j=\frac{\alpha}{j(j+1)}$, which again sums to $\alpha$.
\end{remark}

\subsection{Variance adaptivity}

The construction in Theorem~\ref{thm:cdfbandtrivial} relies on a Hoeffding bound for the moment-generating function (MGF) of a Bernoulli random variable. This bound is convenient but conservative, as it does not retain the dependence on the Bernoulli variance and hence does not yield variance-adaptive confidence bounds. In this subsection, we instead evaluate the Bernoulli MGF exactly, which allows us to exploit this variance dependence. 
The difficulty is that the Bernoulli parameter is $\mathcal{F}_{s-1}$-measurable but unknown. Hoeffding's bound circumvents this issue by providing a uniform bound over all possible values of the parameter, but in doing so loses this dependence.

Our exact-MGF argument instead retains the parameter throughout and applies Jensen's
inequality to the concave map $p \mapsto \ln\bigl(1 + p(e^{\lambda} - 1)\bigr)$. The bound then depends on the unknown parameters only through
their average, which recovers variance adaptivity without requiring knowledge of the
parameter.

\begin{restatable}{theorem}{cdfbandvar}
\label{thm:cdfbandvar}
Let the miscoverage level be $\alpha\in(0,1)$ and set $\lambda > 0$. Then the
confidence band given by
\begin{align*}
    \Flo_t(v) &:=\frac{\exp\bigl\{\lambda\hat q_t(v)-c_t\bigl(N_t(v)\bigr)\bigr\}-1}{e^{\lambda}-1},\\
    \Fhi_t(v) &:= 1-\frac{\exp\bigl\{\lambda\bigl(1-\hat q'_t(v)\bigr)-c_t\bigl(N_t(v)\bigr)\bigr\}-1}{e^{\lambda}-1},
\end{align*}
where
\[
\hat q_t(v):=\Fhat_t(v)-\frac{\NR_t(v)}{t},
\qquad
\hat q'_t(v):=\hF_t(v)+\frac{\NL_t(v)}{t},
\]
and
\[
c_t(n):=\frac{n\ln 2+\ln\frac{2}{\alpha}}{t},
\]
is valid uniformly over time and $v$, i.e., it satisfies
\eqref{eq:general-goal} with $\mu_t=\Fbar_t$.
\end{restatable}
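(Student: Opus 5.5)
We follow the tree/charge construction behind Theorem~\ref{thm:cdfbandtrivial}, but replace the Hoeffding step by the exact Bernoulli MGF. We treat the lower band in detail; the upper band follows by applying the same argument to the reflected data ($x\mapsto -x$, i.e.\ the indicators $\ind\{X_s>v\}$), which swaps the roles of $\NR$ and $\NL$. Each side gets level $\alpha/2$, which produces the $\ln\frac2\alpha$ term.

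\textbf{Step 1: envelope for the lower side.} Fix a cell $[l,r)$ at step $s$. On each child of this cell, $\ind\{X_s\le v\}$ is constant. We want a lower bound on $F_s(v)$, so we use the upper envelope $F_s(v)\le F_s(r^-)$. The resulting increment $\ind\{X_s\le v\}-F_s(r^-)$ is a lower envelope for $\ind\{X_s\le v\}-F_s(v)$.

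Since the $F$-term is fixed, it is more convenient to apply the exact MGF directly to a Bernoulli variable. Let $B_s$ be the indicator that $X_s$ lies in the child of the current cell that is traversed by $v$'s path, or in a region to the left of it. Then $B_s$ is Bernoulli with an $\cF_{s-1}$-measurable parameter $p_s$, and for every probe $v$ in the child reached, $p_s\le F_s(v)$ up to a correction.

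That correction comes from right splits. When $X_s$ splits the cell and lands $\le v$, we have $\ind\{X_s\le v\}=1$, but the comparison with $F_s$ at the endpoint can be off by at most $1$. Charging $1$ per right split gives exactly the shift $\hat q_t(v)=\hF_t(v)-\NR_t(v)/t$. Concretely, we aim to show that along each root-to-leaf path the sum $\sum_s Y_s$ of the path-defined Bernoulli variables satisfies $\sum_s Y_s\ge t\hat q_t(v)$, and that their parameters satisfy $\sum_s p_s\le t\Fbar_t(v)$.

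\textbf{Step 2: supermartingale via the exact MGF.} For each path $\pi$ define
\[
M_t(\pi)=\exp\Big\{\lambda\sum_{s\le t}Y_s-\sum_{s\le t}\ln\bigl(1+p_s(e^\lambda-1)\bigr)\Big\}.
\]
Since $\EE_{s-1}e^{\lambda Y_s}=1+p_s(e^\lambda-1)$, each factor has conditional mean one. Weighting paths by $2^{-(\text{number of splits})}$, the charge scheme of Definition~\ref{def:charge} gives the budget condition \eqref{eq:star}: the two children of a split cell carry weight $1/2$ each, and an unsplit cell passes its weight through. Summing over cells gives a nonnegative supermartingale with initial value $1$, exactly as in Theorem~\ref{thm:master}. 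Ville's inequality at level $\alpha/2$ then yields, simultaneously for all $t$ and $v$,
\[
\lambda t\hat q_t(v)-\sum_{s\le t}\ln\bigl(1+p_s(e^\lambda-1)\bigr)\le N_t(v)\ln2+\ln\tfrac2\alpha .
\]

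\textbf{Step 3: Jensen and inversion.} The map $p\mapsto\ln(1+p(e^\lambda-1))$ is concave, so
\[
\frac1t\sum_{s\le t}\ln\bigl(1+p_s(e^\lambda-1)\bigr)\le\ln\bigl(1+\bar p(e^\lambda-1)\bigr),\qquad \bar p=\frac1t\sum_{s\le t}p_s\le\Fbar_t(v).
\]
Dividing the Ville bound by $t$ therefore gives
\[
\lambda\hat q_t(v)-c_t(N_t(v))\le\ln\bigl(1+\Fbar_t(v)(e^\lambda-1)\bigr).
\]
The right-hand side is increasing in $\Fbar_t(v)$, so solving for $\Fbar_t(v)$ gives $\Fbar_t(v)\ge \Flo_t(v)$. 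The upper band follows symmetrically from $1-\Fbar_t(v)$, with $\hat q'_t$ in place of $\hat q_t$.

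\textbf{Main obstacle.} The delicate part is Step 1: choosing, per child, a Bernoulli variable whose parameter is predictable and bounded by $F_s(v)$ uniformly over all probes in the child, while the realized indicator loses at most one unit per right split. Unlike the Hoeffding case, we cannot simply subtract $F_s$ at an endpoint inside the exponent. We must keep the variable exactly Bernoulli so that the MGF identity holds.

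I would try $Y_s=\ind\{X_s<l'\}$, where $l'$ is the left endpoint of $v$'s cell after step $s$. Then $p_s=F_s(l'^-)\le F_s(v)$, and $Y_s$ differs from $\ind\{X_s\le v\}$ only when $X_s\in[l',v]$. This happens exactly at a right split ($X_s=l'$), or when $X_s$ falls in $[l',v]$ without splitting, which requires either a tie or landing in the cell at the left endpoint. The convention $[l,r)$ must be checked so that these cases are charged to $\NR$ or contribute zero. If this fails, the fallback is to bound the MGF monotonically in the parameter.
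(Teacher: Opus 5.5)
Your Steps 2 and 3 follow the paper's route. The displayed Ville inequality $\lambda t\hat q_t(v)-\sum_{s\le t}\ln\bigl(1+p_s(e^\lambda-1)\bigr)\le N_t(v)\ln2+\ln\frac2\alpha$ is exactly what Proposition~\ref{prop:A} in exact form and Corollary~\ref{cor:A}(ii) give, and the Jensen step and inversion are the same. The gap is the step you flag as the main obstacle. You never fix a valid Bernoulli variable, and your candidate $Y_s=\ind\{X_s<l'\}$, with $l'$ the left endpoint after step $s$, does not work. At a right split $l'=X_s$, so $p_s=F_s(l'^-)$ is not $\cF_{s-1}$-measurable. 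More seriously, suppose $X_s=l$ repeats the left endpoint of the current cell of $v$. A split needs $X_s$ strictly inside the cell, so the cell is not split and the step is not counted in $\NR_t(v)$. Yet $\ind\{X_s\le v\}=1$ while $Y_s=0$. For example, $X_1=X_2=0$ and $v=0$ give $t\hat q_2(0)=1$ but $\sum_sY_s=0$. Hence $\sum_sY_s\ge t\hat q_t(v)$ fails, and you would only get a weaker band in which such ties are also subtracted; the theorem assumes nothing about ties. Your fallback of bounding the MGF monotonically in the parameter does not fix this, because the loss is in the realized sum, not in the MGF.

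The missing idea is to attach the Bernoulli variable to the parent cell rather than to the child. Let $[l,r)=C_s(v)$, which is known at time $s-1$, and take $Z_s:=\ind\{X_s\le l\}$, with $Z_s:=0$ if $l=-\infty$. It is exactly Bernoulli with predictable parameter $F_s(l)=\inf_{[l,r)}F_s\le F_s(v)$, so no left limit is needed. Cells are half-open and a split requires $X_s\in(l,r)$. Hence $Z_s=0$ at every split, and $Z_s=\ind\{X_s\le v\}$ at every non-split step, since then $X_s\le l$ or $X_s\ge r>v$. So $\sum_{s\le t}Z_s=t\hF_t(v)-\NR_t(v)=t\hat q_t(v)$ exactly, ties included. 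The supermartingale check is then the per-parent condition \eqref{eq:star}, not a per-path factor having conditional mean one, which is ill-posed because paths branch on $X_s$. Both children of a split carry value $0$ and weight $\frac12$; in the paper's charges, the right child's indicator value $1$ is cancelled by the factor $e^{-\lambda}$ in its charge, which is how $\NR$ enters. Together the children receive $e^{\lambda Z_s}/(1+F_s(l)(e^\lambda-1))$, as in the non-split case, and this has conditional mean $1$.

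Two smaller corrections. First, the opening of Step 1 is reversed: $F_s(v)\le F_s(r^-)$ gives a lower envelope of the deviation, which serves the upper band. Second, the reflection $x\mapsto-x$ turns $[l,r)$ into $(-r,-l]$ and changes $N_t(v)$ at probes equal to observed values. For the upper band, run the argument for $\ind\{X_s>v\}$ on the same refinement with $\ind\{X_s\ge r\}$, whose conditional mean is $1-F_s(r^-)=\inf_{[l,r)}(1-F_s)$.
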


Remark~\ref{rem:varadapt} explains why the band adapts to the variance of the time-averaged distribution.

\section{REFINEMENT AND CHARGE SCHEMES}\label{sec:refinementschemes}

We now formulate a framework in which we can derive general confidence bands for more general functional \eqref{eq:general-target} from a stream of observations. Let $(\Omega,\Ft,(\Ft_s)_{s\ge0},\PP)$ be a filtered probability space, write
$\Es[\cdot]:=\EE[\cdot\mid\Ft_{s-1}]$, and let $(\V,\Sigma_\V)$ be a
nonempty measurable space of probes. For notational ease, we write the observable function $\probefunction(X_s,v)$ at time $s$ as $\probefunction_s(v)$. We can derive the confidence bands that satisfy \eqref{eq:general-goal} by bounding 
\[D_t(v)=\sum_{s\le t}(\probefunction_s(v)-\theta_s(v)),\] with
$\theta_s(v)=\Es [\probefunction_s(v)]$, simultaneously over all $t$ and all $v\in\V$.

\begin{remark}
We allow $\probefunction_s$ to depend on the current observation $X_s$, as well as on the past observations $X_{1:s-1}$. In particular, $\probefunction_s$ need not be the same function at each time $s$ and may adapt to the history of the process.
\end{remark}

To do so, we recursively partition the probe space $\V$ as observations arrive. The partition is encoded using a countable alphabet $\Alp$. As an example, consider the partition of $\V=\mathbb R$ illustrated in Figure~\ref{fig:tree}. Each string identifies a cell and when a cell is split by a new observation, the symbols in $\Alp=\{\sL,\sR,\stay\}$ denote, respectively, the portions to the left of and to the right of the observation, while $\stay$ indicates that the cell stays unchanged. Thus the string $a=\sR\stay\sL$ corresponds to the interval $[3,5)$. Strings are vertices of a rooted $\Alp$-ary tree. In addition, $\varnothing$ is the empty string, $ak$ denotes concatenation with an additional symbol $k \in \Alp$, and $a_{\le n}$ is the length-$n$
prefix.

The preceding example motivates the following definition, in
which strings encode the successive refinements of the partition.
\begin{definition}[Refinement scheme]
\label{def:scheme}
Let $\Alp$ be a countable alphabet containing a distinguished symbol $\stay$.
A \emph{refinement scheme} on $(\V,(\Ft_s))$ is a family of maps
$a_s:\Omega\times\V\to\Alp^{s-1}$, $s\ge1$, such that
\begin{enumerate}
    \renewcommand{\labelenumi}{(R\theenumi)}
    \item $a_1(\omega,v)=\varnothing$ for all $\omega,v$;
    \item $a_s(\omega,v)$ is a prefix of $a_{s+1}(\omega,v)$;
    \item for every $a$ with $|a|=s-1$, 
    $\{(\omega,v):a_s(\omega,v)=a\}\in
    \Ft_{s-1}\otimes\Sigma_\V$.
    \item for every $s$ and $a\in \Alp^{s-1}$, on $\{C_s(a)\neq\emptyset\}$
either $C_{s+1}(a\stay)=C_s(a)$ or $C_{s+1}(a\stay)=\emptyset$.
\end{enumerate}
Here $C_s(a):=\{v:a_s(\omega,v)=a\}$ is the \emph{cell} of $a$, $C_s(v):=C_s(a_s(v))$,
and $\cC_s$ is the set of nonempty cells $C_s(a)$.
In what follows, we suppress the first argument $\omega\in\Omega$ and write $a_s(\cdot)$ for the realization $a_s(\omega,\cdot)$ at a fixed $\omega\in\Omega$. 
Furthermore, we write $k_s(v)\in\Alp$ for the symbol emitted at step $s$, so
$a_{s+1}(v)=a_s(v)k_s(v)$, and we let
$\cS_t(v):=\{s\le t:\ k_s(v)\neq\stay\}$ be the set of steps at which the cell
of $v$ divides.
\end{definition}

The first three conditions above say: \emph{start together}; \emph{only split, never merge};
\emph{know the current partition before seeing the next observation}. A cell
that does not divide emits a distinguished symbol and reappears one level down,
so exactly one symbol is emitted per step and each $v$ traces a path along the tree.
By the third condition, $\Cells_s$ is an at most countable, $\Ft_{s-1}$-measurable partition of
$\V$, and $\Cells_{s+1}$ refines $\Cells_s$.
\begin{assumption}
\label{ass:meas}
For all $s\ge1$ and $a\in\Alp^{s-1}$, the event $\{C_s(a)\neq\emptyset\}$ is
$\Ft_{s-1}$-measurable.
\end{assumption}

\begin{definition}[Envelope]\label{def:envelope}
Fix an $\mathcal{F}_{s-1}$-measurable version $\theta_s(v)$ of $\E_{s-1}[f_s(v)]$ for every $s\ge1$, $v\in \mathcal{V}$. For $s\ge1$ let $\probefunction_s:\Omega\times\V\to\R$ be
$\Ft_s\otimes\Sigma_\V$-measurable with $\theta_s(v)$ finite. An
\emph{envelope} is a family of $\Ft_s$-measurable
$\bar\probefunction_s(b)\in[-\infty,\infty)$, $b\in\Alp^s$, such that
$\bar\probefunction_s(b)=-\infty$ if $C_{s+1}(b)=\emptyset$ and, almost surely,
\[
  \bar\probefunction_s(b)\ \ge\ \probefunction_s(v)-\theta_s(v)
  \qquad\text{for all }v\in C_{s+1}(b).
\]
The \emph{envelope sum} is
$\tilde D_t(a):=\sum_{s\le t}\bar\probefunction_s(a_{\le s})$, with
$e^{-\infty}:=0$.
\end{definition}

The smallest envelope is
$\sup_{v\in C_{s+1}(b)}(\probefunction_s(v)-\theta_s(v))$. The charge schemes
of Appendix~\ref{sec:templates} use explicit envelopes that are easier to
control.

\begin{lemma}
\label{lem:dom}\label{lem:DleqDt}
Almost surely, $D_t(v)\le\tilde D_t(a_{t+1}(v))$ for all $t\ge0$ and $v\in\V$.
\end{lemma}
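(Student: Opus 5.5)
The argument is a termwise comparison of $D_t(v)=\sum_{s\le t}(\probefunction_s(v)-\theta_s(v))$ with $\tilde D_t(a_{t+1}(v))=\sum_{s\le t}\bar\probefunction_s\bigl((a_{t+1}(v))_{\le s}\bigr)$. For $t=0$ both sums are empty and the claim is trivial. For $t\ge1$ we show that, almost surely, for every $s\le t$ and every $v\in\V$,
\[
\probefunction_s(v)-\theta_s(v)\ \le\ \bar\probefunction_s\bigl((a_{t+1}(v))_{\le s}\bigr),
\]
and then sum over $s$.

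\textbf{Identifying the prefix.} Fix $s\le t$ and $v$, and let $b:=(a_{t+1}(v))_{\le s}$, which lies in $\Alp^s$. By (R2), applied repeatedly, $a_{s+1}(v)$ is a prefix of $a_{t+1}(v)$. Since $|a_{s+1}(v)|=s$, this gives $b=a_{s+1}(v)$. Hence $v\in C_{s+1}(b)$; in particular this cell is nonempty, so the convention $\bar\probefunction_s(b)=-\infty$ does not apply. The envelope property of Definition~\ref{def:envelope} then gives $\bar\probefunction_s(b)\ge \probefunction_s(v)-\theta_s(v)$ for all $v\in C_{s+1}(b)$, outside a null set $E_{s,b}$.

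\textbf{Handling null sets.} The envelope inequality is stated almost surely for each fixed $b$, and it is already uniform over $v$ in the cell. Since $\Alp$ is countable, each $\Alp^s$ is countable, so $E:=\bigcup_{s\ge1}\bigcup_{b\in\Alp^s}E_{s,b}$ is null. On $E^c$ the termwise inequality holds simultaneously for all $s$, all $b$, and all $v\in C_{s+1}(b)$, and therefore for all $t$ and all $v$.

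\textbf{Summing.} On $E^c$, summing the termwise inequality over $s\le t$ yields $D_t(v)\le\tilde D_t(a_{t+1}(v))$ for all $t$ and $v$. All terms on the left are finite, because $\theta_s(v)$ is finite by assumption. On the right each term is either finite or $-\infty$, and by the argument above no right-hand term is $-\infty$ along the path of $v$, so the sum is well defined.

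The step needing most care is the prefix identification. The envelope at step $s$ is indexed by the length-$s$ string, whereas the cell of $v$ at time $s+1$ is determined by $a_{s+1}(v)$. Matching these relies on the consistency axiom (R2) together with the exact length bookkeeping $a_s\in\Alp^{s-1}$. The countability of $\Alp$ is what makes the almost-sure quantifier uniform.
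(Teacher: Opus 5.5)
Your proof is correct and follows the paper's argument: use (R2) to identify $a_{s+1}(v)$ as the length-$s$ prefix of $a_{t+1}(v)$, apply the envelope bound on $C_{s+1}(a_{s+1}(v))\ni v$, and sum over $s\le t$. Taking the countable union of the exceptional null sets over $s$ and $b\in\mathcal{K}^s$ is a step the paper leaves implicit, and it is the right way to make the ``almost surely, for all $t$ and $v$'' quantifier rigorous.
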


At step $s$, a parent cell indexed by $a$ gives rise to children $ak$, and we now consider assigning a weight or `charge' to each of their corresponding edges.

\begin{definition}[Charge scheme]
\label{def:charge}
Fix $\lambda>0$ and an envelope. A \emph{charge scheme at $\lambda$} is a family of $\Ft_s$-measurable variables $c_s(ak) : \Alp^{s} \rightarrow \mathbb{R}_+$ such that
\begin{equation}
\label{eq:star}
\Es\Big[\textstyle\sum_{k\in\Alp}c_s(ak)e^{\lambda\bar \probefunction_s(ak)}\Big]\le1
\end{equation}
a.s. on $\{C_s(a)\ne\emptyset\}$ for every
$s$ and every $a\in\Alp^{s-1}$.
We also define $w_t(a):=\prod_{s\le t}c_s(a_{\le s})$, with $w_0(\varnothing):=1$. 
\end{definition}

The quantity $w_t(a)$ is the product of the charges along the path from the root to $a$. Condition \eqref{eq:star} is imposed \emph{per parent} and summed over its children. The cost of covering an infinite probe set is thus paid one node at a time. Charges may be $\Ft_s$-measurable rather than merely
predictable.
\begin{theorem}[Master supermartingale]
\label{thm:master}
Let $(c_s)$ be a charge scheme at $\lambda>0$. Then

\begin{equation}\label{eq:Mt}
M_t:=\sum_{a\in\Alp^t}w_t(a)e^{\lambda\tilde D_t(a)} 
\end{equation}
is a nonnegative supermartingale with $M_0=1$.
\end{theorem}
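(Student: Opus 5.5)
The plan is to verify the supermartingale property one step at a time, grouping the sum defining $M_t$ by parent strings. First, at $t=0$ the only string is $\varnothing$, with $w_0(\varnothing)=1$ and $\tilde D_0(\varnothing)=0$, so $M_0=1$. Nonnegativity is immediate because the charges are nonnegative and $e^{\lambda\tilde D}\in[0,\infty)$, using the convention $e^{-\infty}=0$. For adaptedness, each summand $w_t(a)e^{\lambda\tilde D_t(a)}$ is $\Ft_t$-measurable, since every $c_s$ and $\bar\probefunction_s$ with $s\le t$ is $\Ft_s$-measurable. A countable sum of nonnegative $\Ft_t$-measurable variables is again $\Ft_t$-measurable, possibly taking the value $+\infty$; integrability will follow from the supermartingale inequality by induction.

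The key step is the one-step recursion. Every $b\in\Alp^t$ can be written uniquely as $b=ak$ with $a\in\Alp^{t-1}$ and $k\in\Alp$. Moreover $w_t(ak)=w_{t-1}(a)c_t(ak)$ and $\tilde D_t(ak)=\tilde D_{t-1}(a)+\bar\probefunction_t(ak)$. This gives
\[
M_t=\sum_{a\in\Alp^{t-1}}w_{t-1}(a)e^{\lambda\tilde D_{t-1}(a)}\sum_{k\in\Alp}c_t(ak)e^{\lambda\bar\probefunction_t(ak)}.
\]
The outer factors are $\Ft_{t-1}$-measurable and nonnegative. By the conditional monotone convergence theorem (Tonelli for conditional expectations of nonnegative series), the conditional expectation $\E_{t-1}$ may be taken inside the countable sum over $a$, so that
\[
\E_{t-1}[M_t]=\sum_{a}w_{t-1}(a)e^{\lambda\tilde D_{t-1}(a)}\,\E_{t-1}\Big[\textstyle\sum_k c_t(ak)e^{\lambda\bar\probefunction_t(ak)}\Big].
\]

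On the event $\{C_t(a)\neq\emptyset\}$, which is $\Ft_{t-1}$-measurable by Assumption~\ref{ass:meas}, the inner conditional expectation is at most $1$ by \eqref{eq:star}. The main obstacle is the complementary event $\{C_t(a)=\emptyset\}$, where \eqref{eq:star} gives no control. I will show that the whole summand vanishes there. The approach is to argue that if $C_t(a)=\emptyset$, then some prefix $a_{\le s}$ with $s\le t-1$ has an empty cell $C_{s+1}(a_{\le s})$. By the definition of the envelope, $\bar\probefunction_s(a_{\le s})=-\infty$ for that prefix, hence $\tilde D_{t-1}(a)=-\infty$ and $e^{\lambda\tilde D_{t-1}(a)}=0$. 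Here I use that $C_t(a)=C_{(t-1)+1}(a)$, the cell indexed by $a$ itself, so the prefix with $s=t-1$ is $a$ and already witnesses this. The factor is therefore zero at the level of the last envelope term, and (R4) is not even needed.

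One more point needs care, namely products of the form $0\cdot\infty$. These are avoided because the summand is exactly $0$ whenever the cell is empty, and the conditional expectation of a nonnegative quantity bounded by $1$ on the indicator event is handled through $\E_{t-1}[\ind_A Y]=\ind_A\E_{t-1}[Y]$ for $A\in\Ft_{t-1}$. Combining the two cases gives $\E_{t-1}[M_t]\le M_{t-1}$ almost surely. By induction $\E[M_t]\le 1$, so each $M_t$ is integrable, and $(M_t)$ is a nonnegative supermartingale.
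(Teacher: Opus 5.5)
Your proof is correct and follows the paper's argument: you use the same parent/child decomposition of $M_t$, conditional monotone convergence to pull out the $\mathcal F_{t-1}$-measurable factor $w_{t-1}(a)e^{\lambda\tilde D_{t-1}(a)}$, and \eqref{eq:star} together with Assumption~\ref{ass:meas} on nonempty cells. The only difference is the empty-cell case. The paper observes that all children of an empty cell are empty, so the inner sum vanishes, whereas you use that $\bar f_{t-1}(a)=-\infty$ already makes the outer factor zero; both work, and at $t=1$ this case cannot occur because $C_1(\varnothing)=\mathcal V\neq\emptyset$.
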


The proof is a short conditional expectation computation. The term of each parent in $M_{s-1}$ is passed on to its children, and \eqref{eq:star} ensures that, in conditional expectation, the children together receive no more than their parent. See App.~\ref{app:master}.

\begin{corollary}[Master band]
\label{cor:masterband}
For $\alpha\in(0,1)$ and any charge scheme at $\lambda>0$, with probability at
least $1-\alpha$, simultaneously for all $t\ge1$ and all $v\in\V$,
\[
\sum_{s\le t}\big(\probefunction_s(v)-\theta_s(v)\big)\;\le\; \tilde D_t\bigl(a_{t+1}(v)\bigr)\;\le\;
\frac{\Lcode_t(v)+\ln\frac1\alpha}{\lambda},
\]
where 
$\Lcode_t(v):=\ln\frac1{w_t(a_{t+1}(v))}\in(-\infty,\infty]$.
\end{corollary}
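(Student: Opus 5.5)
The plan is to combine Ville's maximal inequality for the master supermartingale of Theorem~\ref{thm:master} with the domination of Lemma~\ref{lem:DleqDt}. The first inequality in the display is exactly Lemma~\ref{lem:DleqDt}, which holds almost surely for all $t$ and $v$. So the work is in the second inequality, which we obtain on the event $\{\sup_{t\ge0}M_t<1/\alpha\}$.

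\textbf{Step 1 (Ville).} By Theorem~\ref{thm:master}, $(M_t)$ is a nonnegative supermartingale with $M_0=1$. Ville's inequality then gives
\[
\PP\big(\exists t\ge0:\ M_t\ge 1/\alpha\big)\le\alpha .
\]
Let $E:=\{M_t<1/\alpha\ \forall t\}$, so that $\PP(E)\ge1-\alpha$.

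\textbf{Step 2 (a single term is dominated by the sum).} Fix $\omega\in E$, a time $t\ge1$ and a probe $v\in\V$, and set $a:=a_{t+1}(v)\in\Alp^t$. Every summand of $M_t$ in \eqref{eq:Mt} is nonnegative, since charges are nonnegative and $e^{-\infty}:=0$. Hence
\[
w_t(a)e^{\lambda\tilde D_t(a)}\le M_t<1/\alpha .
\]
\begin{itemize}
\item If $w_t(a)=0$, then $\Lcode_t(v)=+\infty$ and the bound is trivial.
\item If $w_t(a)>0$, take logarithms and divide by $\lambda>0$ to get $\lambda\tilde D_t(a)<\Lcode_t(v)+\ln\frac1\alpha$, which is the claim.
\end{itemize}
The case $\tilde D_t(a)=-\infty$ cannot violate anything, and it cannot actually occur for $a=a_{t+1}(v)$, because $C_{s+1}(a_{\le s})\ni v$ is nonempty.

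\textbf{Step 3 (measurability).} The event $E$ does not depend on $v$ or $t$, so the bound holds simultaneously for all $t$ and $v$ without any union bound. This is the key point: uniformity over the infinite set $\V$ is bought entirely by the charges entering $\Lcode_t(v)$.

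The only potentially delicate step is the bookkeeping in Step~2 regarding infinities, together with the fact that $a_{t+1}(v)$ is a well-defined string of length $t$ (condition (R2)), so that its term really appears in the sum defining $M_t$. I expect no real obstacle beyond confirming that the version of Ville's inequality used allows extended-valued but almost surely finite $M_t$. This follows from $\EE M_t\le1$.
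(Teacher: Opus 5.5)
Your proposal is correct and follows the paper's proof: Ville's inequality for the master supermartingale $M_t$, lower-bounding $M_t$ by the single nonnegative term indexed by $a_{t+1}(v)$, taking logarithms, and invoking Lemma~\ref{lem:dom} for the first inequality. Your explicit treatment of $w_t(a)=0$ and $\tilde D_t(a)=-\infty$ just spells out what the paper's step $\lambda\tilde D_t(a)\le\ln\frac{1}{\alpha w_t(a)}$ leaves implicit.
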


Theorem~\ref{thm:cdfbandtrivial} follows directly from
Corollary~\ref{cor:masterband} by using the refinement scheme of
Definition~\ref{def:cellsR} (illustrated in Figure~\ref{fig:tree}). For a cell
$[l,r)$, take the envelope $\bar\probefunction_s(ak)=\probefunction_s[ak]-F_s(l)$,
where $\probefunction_s[ak]$ is the value of $\ind\{X_s\le v\}$ on the child
$ak$, and the charge scheme $c_s(a\stay)=e^{-\lambda^2/8}$,
$c_s(a\sL)=\frac12e^{-\lambda^2/8}$ and $c_s(a\sR)=\frac12e^{-\lambda^2/8-\lambda}$.
More generally, Appendix~\ref{sec:templates} gives two generic charge schemes that produce all bands of this paper. 

Corollary~\ref{cor:masterband} bounds $\sum_{s\le t}(f_s(v)-\theta_s(v))$ from above,
and hence gives a lower confidence bound on $\mu_t(v)$. The upper bound needs a
second application of Corollary~\ref{cor:masterband}, to a family whose deviations
are those of $f_s$ with the sign reversed. Thus confidence bands $(\Flo_t,\Fhi_t)$ are readily obtained from Corollary~\ref{cor:masterband}. The result turns the problem of inferring a function \eqref{eq:general-target} of interest into one of constructing a suitable pair of refinement and charge schemes.

\section{OTHER DISTRIBUTION FUNCTIONS}
We now use the framework of Section~\ref{sec:refinementschemes} to obtain confidence bands for other distribution functions.
\subsection{Counterfactual CDF}\label{sec:counterfactual-cdf}

In many sequential decision problems, the distribution of interest is counterfactual, corresponding to a policy that differs from the data-collection process. To address this distribution shift, we use importance weighting to construct a confidence band for the counterfactual CDF.\\
 At
time $s$ we observe $(W_s,X_s)$ with $W_s\ge0$ and $\Es[W_s]=1$, e.g.\ $W_s$
the likelihood ratio between the target and logging policies; no upper bound
on $W_s$ is assumed. Then $$F^W_s(v):=\Es[W_s\ind\{X_s\le v\}]$$ is the
conditional CDF of $X_s$ under the target policy, and we seek bands for
$\bF^W_t(v):=\frac1t\sum_{s\le t}F^W_s(v)$ based on the importance-weighted
empirical CDF $\hF^W_t(v):=\frac1t\sum_{s\le t}W_s\ind\{X_s\le v\}$.

In the framework of Section~\ref{sec:refinementschemes}, this is the probe
function $\probefunction_s(v)=W_s\ind\{X_s\le v \}$ with $\theta_s=F^W_s$, so
$D_t(v)=t\bigl(\hF^W_t(v)-\bF^W_t(v)\bigr)$. We reuse the refinement scheme of
Section~\ref{sec:cdf} (Definition~\ref{def:cellsR}, Figure~\ref{fig:tree}) and
change only the charges.
\begin{definition}\label{def:pbet}
Fix $\lambda\in(0,1)$ and put $\eta(x):=x-\ln(1+x)$ for $x>-1$. For
$a\in\Alp^{s-1}$ with $C_s(a)\ne\emptyset$, let $l:=\inf C_s(a)$ and
$r:=\sup C_s(a)$, and put
\begin{align*}
  A&:=F^W_s(l), & Z_s&:=W_s\ind\{X_s\le l\},\\
  A^>&:=\Es[W_s\ind\{X_s\ge r\}], & Z^>_s&:=W_s\ind\{X_s\ge r\},
\end{align*}
with $A:=0$ if $l=-\infty$ and $A^>:=0$ if $r=+\infty$. The lower and upper charges are defined, respectively, as follows:
\begin{align*}
  c_s(a\stay)&:=e^{-\eta(\lambda(Z_s-A))},
  &c_s^>(a\stay)&:=e^{-\eta(\lambda(Z_s^>-A^>))},\\
  c_s(a\sL)&:=\tfrac12e^{-\eta(-\lambda A)},
  &c_s^>(a\sL)&:=\tfrac12e^{-\eta(-\lambda A^>)-\lambda W_s},\\
  c_s(a\sR)&:=\tfrac12e^{-\eta(-\lambda A)-\lambda W_s},
  &c_s^>(a\sR)&:=\tfrac12e^{-\eta(-\lambda A^>)}.
\end{align*}
\end{definition}

Here $A$ and $A^>$ are the smallest conditional means of
$W_s\ind\{X_s\le v\}$ and of $W_s\ind\{X_s>v\}$ over the cell, and $Z_s$ and
$Z^>_s$ are the observed values on the cell when it is not split. 
\begin{theorem}\label{thm:wfv}
Let $\alpha,\lambda\in(0,1)$ and
$\kappa(\lambda):=\bigl(-\lambda-\ln(1-\lambda)\bigr)/\lambda$. For $s\le t$,
let $\check f_s(v):=W_s\ind\{X_s\le v\}$ and $\check g_s(v):=W_s\ind\{X_s>v\}$
if $v$ is not split at time $s$, and $\check f_s(v):=\check g_s(v):=0$ if it
is. Put
\begin{align*}
  \check F^W_t(v)&:=\frac1t\sum_{s\le t}\check f_s(v), & \widehat V_t(v)&:=\sum_{s\le t}\check f_s(v)^2,\\
  \check F^{W,>}_t(v)&:=\frac1t\sum_{s\le t}\check g_s(v), & \widehat V^>_t(v)&:=\sum_{s\le t}\check g_s(v)^2,
\end{align*}
and $\rho_t(v):=\bigl(N_t(v)\ln2+\ln\frac2\alpha\bigr)/(\lambda t)$. Then the
band given by
\begin{align}
  \Flo^W_t(v)&:=\frac{\check F^W_t(v)-\kappa(\lambda)\widehat V_t(v)/t-\rho_t(v)}{1+\kappa(\lambda)},\label{eq:wfv-lower}\\
  \Fhi^W_t(v)&:=1-\frac{\check F^{W,>}_t(v)-\kappa(\lambda)\widehat V^>_t(v)/t-\rho_t(v)}{1+\kappa(\lambda)},\label{eq:wfv-upper}
\end{align}
is valid uniformly over time and $v$, that is, with probability at least
$1-\alpha$, $\Flo^W_t(v)\le\bF^W_t(v)\le\Fhi^W_t(v)$ for all $t\ge1$ and
$v\in\R$.
\end{theorem}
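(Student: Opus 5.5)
The plan is to apply Corollary~\ref{cor:masterband} twice, both times with the refinement scheme of Definition~\ref{def:cellsR}: once with the lower charges $c_s$ and level $\alpha/2$, once with the upper charges $c^>_s$ and level $\alpha/2$, and then take a union bound. The hardest step is the final one, turning the master bound into the closed form \eqref{eq:wfv-lower}; it hinges on a quadratic bound for $\eta$ and a crude bound on $A$.

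\textbf{Envelope and charge condition (lower side).} Fix a parent cell $[l,r)$ at step $s$.
\begin{itemize}
\item If $X_s$ does not fall strictly inside the cell, then $\ind\{X_s\le v\}$ is constant on the cell and equals $\ind\{X_s\le l\}$. Since $F^W_s$ is nondecreasing, $\theta_s(v)\ge A$ on the cell. So $\bar\probefunction_s(a\stay):=Z_s-A$ is a valid envelope.
\item If the cell is split, I set $\bar\probefunction_s(a\sL):=-A$ and $\bar\probefunction_s(a\sR):=W_s-A$, and the $\stay$ child is empty, so its envelope is $-\infty$.
\end{itemize}
Note that $0\le A\le\Es[W_s]=1$, so every argument of $\eta$ is at least $-\lambda>-1$ and all charges are finite and positive. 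The key identity is $e^{-\eta(x)}e^{x}=1+x$. With it, the sum in \eqref{eq:star} equals $1+\lambda(Z_s-A)$ on the no-split event. On the split event it equals $\tfrac12(1-\lambda A)+\tfrac12(1-\lambda A)$; here the factor $e^{-\lambda W_s}$ in $c_s(a\sR)$ exactly cancels the $W_s$ in the envelope. Because $Z_s=0$ on a split (there $X_s>l$), both cases combine to $1+\lambda(Z_s-A)$. Its conditional expectation is $1$, since $\Es[Z_s]=F^W_s(l)=A$, so \eqref{eq:star} holds with equality. The same computation with $Z^>_s$, $A^>$ and the family $W_s\ind\{X_s>v\}$ handles the upper charges. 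There the $\sL$ child carries the weight $W_s$, and the lower bound uses $\theta_s(v)\ge A^>$ for $\Es[W_s\ind\{X_s>v\}]$ on $[l,r)$.

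\textbf{Unpacking the master bound.} Along the path of $v$, the code length splits as
\[
\Lcode_t(v)=\sum_{s\notin\cS_t(v)}\eta\bigl(\lambda(Z_s-A_s)\bigr)+\sum_{s\in\cS_t(v)}\bigl[\ln2+\eta(-\lambda A_s)\bigr]+\sum_{s\text{ right split}}\lambda W_s .
\]
The last term cancels the $W_s$ appearing in the envelope sum. Corollary~\ref{cor:masterband} then yields
\[
\lambda\sum_{s\le t}\check f_s(v)-\lambda\sum_{s\le t}A_s-\sum_{s\le t}\eta(\cdot)\le N_t(v)\ln2+\ln\tfrac2\alpha .
\]

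\textbf{Closing to the stated form.} Two inequalities finish the lower side.
\begin{enumerate}
\item For $x\ge-\lambda$ one has $\eta(x)\le \eta(-\lambda)x^2/\lambda^2=\kappa(\lambda)x^2/\lambda$. This is the standard monotonicity of $\eta(x)/x^2$, and it gives $\eta(\lambda y)\le\kappa\lambda y^2$.
\item Since $Z_s,A_s\ge0$ and $A_s\le1$, we get $(Z_s-A_s)^2\le Z_s^2+A_s$ and $A_s^2\le A_s$.
\end{enumerate}
Together these give
\[
\sum_s\eta(\cdot)\le\kappa\lambda\Bigl(\widehat V_t(v)+\sum_s A_s\Bigr).
\]
Rearranging and dividing by $\lambda t$ gives
\[
(1+\kappa)\frac1t\sum_s A_s\ge\check F^W_t(v)-\kappa\widehat V_t(v)/t-\rho_t(v).
\]
Since $A_s\le\theta_s(v)=F^W_s(v)$, this yields \eqref{eq:wfv-lower}.

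The upper side runs identically for $W_s\ind\{X_s>v\}$. It gives the same inequality for $\frac1t\sum_s\Es[W_s\ind\{X_s>v\}]=1-\bF^W_t(v)$, using $\Es[W_s]=1$, and hence \eqref{eq:wfv-upper}.
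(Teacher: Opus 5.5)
Your proposal is correct and follows the paper's argument. The charges of Definition~\ref{def:pbet} are Template~B with the oracle centering $A_s(v)=F^W_s(\inf C_s(v))$, and your closing step (the quadratic bound on $\eta$, $A_s^2\le A_s$, rearranging) is the proof of Corollary~\ref{cor:iw-explicit}; the only difference is that you check the charge condition directly (with equality) and apply Corollary~\ref{cor:masterband}, instead of going through Proposition~\ref{prop:B}, Corollary~\ref{cor:Bpsi} and Theorem~\ref{thm:iw-lower}.
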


The charges depend on the unknown centerings $A$ and $A^{>}$, but the band does
not. Fan's inequality removes them, at the price of the term
$\kappa(\lambda)\widehat V_t(v)/t$ and the factor $1/(1+\kappa(\lambda))$
(Corollary~\ref{cor:iw-explicit}). Replacing $A$ and $A^{>}$ by predictable
guesses, such as running means, gives bands of a different form
(Theorem~\ref{thm:iw-two}; see Remark~\ref{rem:centering} for more
examples).

\subsection{Right-censored data}\label{sec:censored}
In survival analysis, the lifetime or time-to-event $T_s$ of subject $s$ is often right-censored
by a censoring time $\cens_s$. The (conditional) survival function is
\[S_s(v):=\Es[\ind\{T_s > v\}],\]
and fix a horizon $\tau$. Suppose that, given the past, $\mathsf C_s$
is independent of $T_s$, its law is known, and
$\mathbb P_{s-1}(\mathsf C_s\ge\tau)>0$ (Assumption~\ref{ass:cens}). We only observe $\tT_s=T_s\wedge \cens_s$ and whether
the event was seen, $\delta_s=\ind\{T_s\le \cens_s\}$. Our framework gives two-sided anytime-valid
bands on the running average $\bS_t :=\frac1t\sum_{s\le t}S_s(v)$ of the conditional survival functions,
uniformly over time and $v\in(0,\tau)$ and without assuming i.i.d.\ lifetimes. The
upper bound follows from Section~\ref{sec:counterfactual-cdf} by weighting each
observed event by the inverse of its probability of being observed. For the
lower bound, we instead weight each subject still under observation at the
threshold by the inverse of its probability of remaining under observation that
long. Appendix~\ref{app:censored} gives the construction, the band
(Theorem~\ref{thm:censband}) and its proof.

\subsection{Multivariate CDF}\label{sec:mcdf}
Let $X_s\in\R^d$ and write $u\le v$ componentwise. The target is the running
average $\Fb_t(v):=\frac1t\sum_{s\le t}F_s(v)$ of the conditional joint CDFs
$F_s(v):=\PP(X_s\le v\mid\Ft_{s-1})$, and $\hF_t(v):=\frac1t\sum_{s\le t}\ind\{X_s\le v\}$
is the empirical joint CDF. We use the box refinement of
Appendix~\ref{app:mcdf}. Its cells are half-open boxes, and an observation
$X_s$ cuts the box containing a probe only in the coordinates in which $X_s$
lies strictly inside the box (Figure~\ref{fig:boxes}). This makes
$v\mapsto\ind\{X_s\le v\}$ constant on every new box. For $d=1$ it is the
refinement of Definition~\ref{def:cellsR}.

Let $N_t(v)$ be the number of steps at which the box of $v$ is cut, and
$M_t(v)$ the total number of coordinates in which it is cut. Let $N^{\le}_t(v)$
count the cuts with $X_s\le v$, and $N^{\not\le}_t(v):=N_t(v)-N^{\le}_t(v)$.

\begin{theorem}\label{thm:mcdf}
Let $\alpha\in(0,1)$ and $\lambda>0$, and put
\[
  \rho_t(v):=\frac\lambda8+\frac{M_t(v)\ln2+\ln\frac2\alpha}{\lambda t}.
\]
With probability at least $1-\alpha$, simultaneously for all $t\ge1$ and
$v\in\R^d$,
\begin{align*}
  \Fb_t(v)&\ \ge\ \hF_t(v)-\frac{N^{\le}_t(v)}t-\rho_t(v),\\
  \Fb_t(v)&\ \le\ \hF_t(v)+\frac{N^{\not\le}_t(v)}t+\rho_t(v).
\end{align*}
\end{theorem}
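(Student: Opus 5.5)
The plan is to derive both inequalities from Corollary~\ref{cor:masterband}, applied twice at level $\alpha/2$ and combined by a union bound. This mirrors how Theorem~\ref{thm:cdfbandtrivial} follows from the scheme of Definition~\ref{def:cellsR}.

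\textbf{Refinement scheme.} Take the box refinement of Appendix~\ref{app:mcdf}. The cell of $v$ at time $s$ is a half-open box $[l,r)=\prod_j[l_j,r_j)$. Let $J_s\subseteq\{1,\dots,d\}$ be the coordinates with $l_j<X_{s,j}<r_j$.
\begin{itemize}
\item If $J_s=\emptyset$, the box emits $\stay$.
\item Otherwise it splits into $2^{|J_s|}$ children indexed by $\sigma\in\{\sL,\sR\}^{J_s}$. In coordinate $j\in J_s$ the child is $[l_j,X_{s,j})$ or $[X_{s,j},r_j)$.
\end{itemize}
I take the alphabet to be $\{\stay\}$ together with all such finite sign patterns, which is countable. (R1)--(R4) and Assumption~\ref{ass:meas} should be routine, because the box containing $v$ depends only on the coordinatewise order of $v$ relative to $X_{1:s-1}$. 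Here $N_t(v)$ counts the non-$\stay$ steps and $M_t(v)=\sum_{s\in\cS_t(v)}|J_s|$.

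The key observation is that $\ind\{X_s\le v\}$ is constant on each child. On an uncut coordinate, either $X_{s,j}\le l_j\le v_j$ or $X_{s,j}\ge r_j>v_j$. On a cut coordinate, the side of the child decides. In particular, for an unsplit box the constant value is $\ind\{X_s\le l\}$.

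\textbf{Lower bound.} Let $g_s(b)$ denote the value of the indicator on child $b$, and define the envelope $\bar f_s(ak)=g_s(ak)-F_s(l)$, with $F_s(l)=0$ if some $l_j=-\infty$. This is a valid envelope because $F_s$ is componentwise nondecreasing and $l\le v$. The charges are:
\begin{itemize}
\item $c_s(a\stay)=e^{-\lambda^2/8}$;
\item $c_s(a\sigma)=2^{-|J_s|}e^{-\lambda^2/8-\lambda g_s(a\sigma)}$, which is $\Ft_s$-measurable as allowed.
\end{itemize}
To check \eqref{eq:star}, note that on a split the children contribute exactly $\sum_\sigma 2^{-|J_s|}e^{-\lambda^2/8}e^{-\lambda F_s(l)}=e^{-\lambda^2/8-\lambda F_s(l)}$. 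Without a split the contribution is $e^{-\lambda^2/8+\lambda(\ind\{X_s\le l\}-F_s(l))}$. So the sum equals $e^{-\lambda^2/8}e^{\lambda(Y-F_s(l))}$ with $Y:=\ind\{X_s\le l,\ J_s=\emptyset\}\in[0,1]$. Since $\Es[Y]\le F_s(l)$, Hoeffding's lemma gives $\Es[e^{\lambda(Y-F_s(l))}]\le\Es[e^{\lambda(Y-\Es Y)}]\le e^{\lambda^2/8}$, which is the required budget.

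The code length along the path of $v$ is
\[
\ell_t(v)=t\lambda^2/8+M_t(v)\ln2+\lambda N^{\le}_t(v).
\]
The last term arises because the child containing $v$ has $g_s=1$ exactly when $X_s\le v$. Dividing the bound of Corollary~\ref{cor:masterband} by $t$ then yields $\Fb_t(v)\ge\hF_t(v)-N^{\le}_t(v)/t-\rho_t(v)$.

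\textbf{Upper bound.} Apply the same scheme to $f_s(v)=\ind\{X_s\not\le v\}$, so that $\theta_s=1-F_s$. This quantity is nonincreasing in $v$, so on $[l,r)$ it is bounded below by $1-F_s(r^-)$, where $F_s(r^-)=\PP_{s-1}(X_s<r)$ componentwise. The envelope is the child value of the indicator minus $1-F_s(r^-)$. On an unsplit box, the value is $\ind\{X_s\not<r\}$, whose conditional mean is $1-F_s(r^-)$. The charges now penalize by $\lambda$ the children on which $X_s\not\le v$, and the same Hoeffding argument goes through. The code length picks up $\lambda N^{\not\le}_t(v)$ instead of $\lambda N^{\le}_t(v)$.

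\textbf{Main obstacle.} I expect the delicate step to be measurability and well-definedness of the box refinement, especially verifying (R3)/(R4) and handling the endpoint $r$ correctly through left limits. The budget check itself reduces to one application of Hoeffding's lemma.
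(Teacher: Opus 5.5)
Your envelope, charges, Hoeffding budget check and path-cost computation are Template~A in Hoeffding form ($\gamma\equiv\lambda^2/8$, split weights $2^{-m}$), applied to $\ind\{X_s\le v\}$ and to $\ind\{X_s\not\le v\}$. This is exactly what the paper does by invoking Corollary~\ref{cor:A}(i) together with Lemma~\ref{lem:boxes}; you have simply inlined it.

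The problem is the refinement you use. You cut a box whenever $J_s\neq\emptyset$. The box refinement of Definition~\ref{def:boxes}, on which $N_t$, $M_t$, $N^{\le}_t$ and $N^{\not\le}_t$ in the statement are defined, cuts only if in addition $X_{s,j}<r_j$ for every $j$. When some $X_{s,j}\ge r_j$, the orthant $\{u\ge X_s\}$ misses the box and $\ind\{X_s\le v\}\equiv0$ on it. The paper then leaves the box intact (case (c) of Figure~\ref{fig:boxes}), whereas your rule still splits it in the coordinates of $J_s$. For $d\ge2$ you therefore build a different tree, and its counts are not comparable with the paper's.

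For example, suppose the box of $v=(1.5,1)$ is $[0,2)^2$ and $X_s=(1,3)$. You pay $\ln2$ where the paper pays nothing. Your box then shrinks to $[1,2)\times[0,2)$, so a later $X_{s'}=(0.5,0)$ cuts the paper's box and increments its $N^{\le}_t(v)$, but does not cut yours. What you prove is thus a valid band for other counts, not the stated inequalities.

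The fix is a one-line change: cut iff $J_s\neq\emptyset$ and $X_s<r$ componentwise. With this rule:
\begin{itemize}
\item On a cut, every coordinate outside $J_s$ has $X_{s,j}\le l_j\le v_j$. So all $2^{|J_s|}$ children are nonempty, and $\ind\{X_s\le v\}$ is constant on each, equal to $1$ exactly on the all-$\sR$ child.
\item Without a cut, $\ind\{X_s\le v\}\equiv\ind\{X_s\le l\}$: it is $0$ if some $X_{s,j}\ge r_j$ and $1$ if $X_s\le l$. Likewise $\ind\{X_s\not\le v\}\equiv\ind\{X_s\not<r\}$.
\end{itemize}
The rest then goes through verbatim: your budget check (now with $Y=\ind\{X_s\le l\}$ exactly), the path cost $t\lambda^2/8+M_t(v)\ln2+\lambda N^{\le}_t(v)$, the mirrored upper bound with floor $1-\PP_{s-1}(X_s<r)$, and the union bound at $\alpha/2$. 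The argument then coincides with the paper's.
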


Each coordinate cut costs $\ln2$, exactly as one split in dimension one, and
for $d=1$ the theorem above resolves into Theorem~\ref{thm:cdfbandtrivial}. The band is
invariant under coordinatewise strictly increasing transformations of $X_s$,
and it needs no assumption on the dependence between observations.

\section{NUMERICAL EXPERIMENTS}\label{sec:experiments}
We compare our bands, tightened by the closure of
Proposition~\ref{prop:closure}, with those of \citet{mineiro2023} (MH),
computed with the authors' reference implementation (Appendix~\ref{app:experiments}). Furthermore, we illustrate the censored case.  Throughout, $\alpha=0.05$ and each panel shows a single run (two in Figure~\ref{fig:cdf}(c)). Appendix~\ref{app:experiments} gives the details. We apply the closure only to our bands. It leaves the MH bands essentially
unchanged (Remark~\ref{rem:closure-mh}).

\begin{figure*}[t]
    \centering
\includegraphics[width=\textwidth]{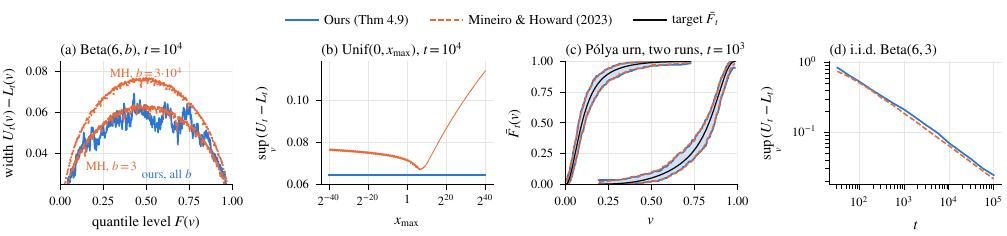}
    \caption{Bands for $\overline F_t$. (a) Width at $t=10^4$ against the quantile
level, for i.i.d.\ Beta$(6,b)$ data obtained from the same uniforms. (b)
Sup-width at $t=10^4$ for i.i.d.\ Unif$(0,x_{\max})$ data. (c) Bands at $t=10^3$ for two runs of the P\'olya urn (ours
shaded). (d) Sup-width against $t$ for i.i.d.\ Beta$(6,3)$ data.}
    \label{fig:cdf}
\end{figure*}

\paragraph{CDF.}
Figure~\ref{fig:cdf} bounds the running average $\overline F_t(v)$ of the conditional CDFs with Theorem~\ref{thm:cdfbandvar}. The data are i.i.d.\ Beta$(6,b)$ in (a) and (d), and i.i.d.\ Unif$(0,x_{\max})$ in (b). In (c) they come from the continuous P\'olya urn of MH, for which $\overline F_t$ changes with $t$. MH refine a dyadic grid fixed in advance, on $[0,1]$ through a reference CDF,
or on $\mathbb R$ in (b), paying for each level of the grid in a union bound.
Data concentrated far from the scale of the grid therefore need more levels. Panels (a) and (b) vary this concentration. Within each panel, all samples are
strictly increasing transformations of the same uniforms. By
Remark~\ref{rem:invariance}, our band at a given quantile level is therefore
identical for every $b$ and $x_{\max}$, and only the MH band changes. With
independent samples, our band would change only through sampling noise, because
for i.i.d.\ continuous data the law of the relative order of the observations
does not change.
\begin{figure}[t]
    \centering
    \includegraphics[width=\columnwidth]{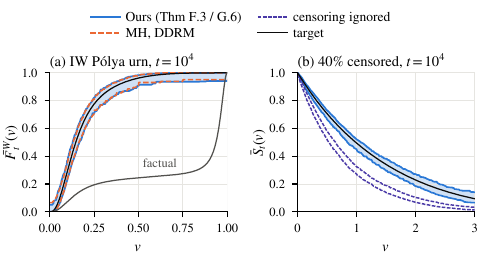}
    \caption{(a) Bands at $t=10^4$ for the counterfactual $\overline F^W_t$ in the
    importance-weighted P\'olya urn (ours shaded), with the factual running-average
    CDF. (b) Bands at $t=10^4$ for $\overline S_t$ with 40\% of the subjects censored
    (ours shaded), and the band that ignores the censoring.}
    \label{fig:ope}
\end{figure}
\paragraph{Importance weights and censoring.}
Figure~\ref{fig:ope}(a) bounds the running average $\overline F^W_t$ of the counterfactual conditional CDFs using Theorem~\ref{thm:iw-two}, based on one of the examples from MH, their importance-weighted P\'olya urn.
 Figure~\ref{fig:ope}(b) bounds the running average $\overline S_t$ of the conditional survival functions with Theorem~\ref{thm:cens-two}. Here subjects arrive in a stream, their risk depends on the outcomes of earlier subjects, and their lifetimes are censored at random times with a known law.
\section{DISCUSSION}
We constructed confidence bands for the running average of conditional distribution functions that hold simultaneously over all times and all points, without assumptions on the dependence between observations. Uniformity over an infinite index set is obtained from a single supermartingale on a data-driven refinement, in which each probe pays only for the refinements of its own cell. For thresholds this cost is the search depth of the probe in the binary search tree of the observations. Because the cells depend only on the relative order of the data, the bands are invariant under any fixed, strictly increasing transformation, which answers a question raised by \citet{mineiro2023}. The same framework covers importance-weighted, right-censored and multivariate distribution functions.

The width guarantee is instance dependent. It depends on the order in which observations arrive, so trending streams can inflate the cost of uniformity, and under exchangeable data the rate $\sqrt{\log t/t}$ is slower than the iterated-logarithm rate available in the i.i.d.\ setting. Whether this gap is necessary for bands to remain valid under arbitrary dependence is open.

We believe our work can be extended in several interesting ways. In one case-study, the results are derived for right-censored data, but may also extend to other forms of censoring. Moreover, while our applications focus on relatively simple settings, the target in Eq. \eqref{eq:general-target} is quite general and may yield bounds for loss functions arising in both parametric and nonparametric estimation. Identifying the most effective refinements in these settings requires further study. Finally, our sequential refinement scheme suggests natural connections to coding and information theory that merit further exploration.

\newpage

\subsection*{AI use statement}
In this work, we used generative AI tools as a discussion partner during the development of the theory, and to implement the methods used in the experiments. We have not used generative AI tools to interpret results, and translation, cleaning and reformatting data sets, and qualitative and thematic data analysis are not applicable to this work. Additionally, we used generative AI tools to generate code and figures and to edit the paper to
improve readability. We have reviewed all AI-assisted work. All proofs and code were checked by one of the authors. We take responsibility for the final content of this work, including text, claims, or artifacts produced with the aid of generative AI.

\bibliographystyle{apalike}
\bibliography{refs}

\clearpage
\appendix
\thispagestyle{empty}

% Supplementary material: To improve readability, you must use a single-column format for the supplementary material.
\onecolumn
\aistatstitle{
APPENDIX}

\makeatletter
\@ifpackageloaded{hyperref}{}{\let\addcontentsline\LaTeXaddcontentsline}
\makeatother
\addtocontents{toc}{\protect\setcounter{tocdepth}{2}}
\renewcommand{\contentsname}{Contents of the Appendix}
\tableofcontents
\clearpage  % optional
\section{PROOF ROADMAP}
Figure~\ref{fig:deps} shows, for each result of the main text, the results
that its proof uses.
\begin{figure}[H]
\centering
\begin{tikzpicture}[
  >=Stealth,
  dep/.style={draw, rounded corners=2pt, align=center, font=\scriptsize,
              inner sep=2pt, minimum height=7.5mm, text width=23mm},
  mainres/.style={dep, very thick, fill=black!10},
  hd/.style={anchor=west, font=\small\bfseries, text=black},
  el/.style={font=\tiny, fill=white, inner sep=1pt},
  every edge/.style={draw, ->, thick}
]
% ---------------- (a) framework and templates ----------------
\node[hd] at (-0.1,2.7) {(a) Framework (Section~\ref{sec:refinementschemes}) and charge templates (Appendix~\ref{sec:templates})};
\node[dep]     (Ameas) at (1.1,0)    {Assumption~\ref{ass:meas}\\nonempty cells};
\node[mainres] (Tm)    at (3.9,0)    {Theorem~\ref{thm:master}\\master supermartingale};
\node[mainres] (Ldom)  at (3.9,-1.4) {Lemma~\ref{lem:dom}\\domination};
\node[mainres] (Cmb)   at (6.7,-0.7) {Corollary~\ref{cor:masterband}\\master band};
\node[dep]     (Lbern) at (9.5,1.8)  {Lemma~\ref{lem:bern}\\Bernoulli cumulant};
\node[dep]     (PA)    at (9.5,0.4)  {Proposition~\ref{prop:A}\\Template A charges};
\node[dep]     (Cbs)   at (12.3,0.4) {Corollary~\ref{cor:boundedsimple}\\path cost};
\node[dep]     (CA)    at (15.1,0.4) {Corollary~\ref{cor:A}\\Template A bands};
\node[dep]     (Afl)   at (12.3,-0.8){Assumption~\ref{ass:floor}\\cell floor};
\node[dep]     (PB)    at (9.5,-2.0) {Proposition~\ref{prop:B}\\Template B charges};
\node[dep]     (CB)    at (12.3,-2.0){Corollary~\ref{cor:B}\\path cost};
\node[dep]     (Cbpsi) at (15.1,-2.0){Corollary~\ref{cor:Bpsi}\\homogeneous form};

\path (Ameas) edge (Tm) (Tm) edge (Cmb) (Ldom) edge (Cmb)
      (PA) edge (Cbs) (Cmb) edge (Cbs) (Cbs) edge (CA)
      (Lbern) edge (PA) (Lbern) edge node[el,pos=0.75]{Jensen} (CA)
      (Afl) edge (PA) (Afl) edge (Cbpsi)
      (PB) edge (CB) (Cmb) edge (CB) (CB) edge (Cbpsi);

% ---------------- (b) Theorem 4.2 and Corollary 4.7 ----------------
\node[hd] at (-0.1,-3.5) {(b) Theorem~\ref{thm:cdfbandtrivial} and Corollary~\ref{cor:mixture}};
\node[dep]     (bCA)  at (1.1,-4.3)  {Corollary~\ref{cor:A}(i)\\Hoeffding form};
\node[dep]     (bThr) at (1.1,-5.4)  {Lemma~\ref{lem:threshold}\\half-open cells};
\node[dep]     (bFam) at (3.9,-5.4)  {Lemma~\ref{lem:families}\\homogeneity, floor};
\node[mainres] (bT)   at (6.7,-4.85) {Theorem~\ref{thm:cdfbandtrivial}\\CDF band};
\node[mainres] (bMix) at (9.5,-4.85) {Corollary~\ref{cor:mixture}\\$\lambda$-grid, union bound};

\path (bCA) edge (bT) (bThr) edge (bFam) (bFam) edge (bT)
      (bT) edge (bMix);

% ---------------- (c) Theorem 4.9 ----------------
\node[hd] at (11.2,-3.5) {(c) Theorem~\ref{thm:cdfbandvar}};
\node[dep]     (cCA)  at (12.3,-4.3) {Corollary~\ref{cor:A}(ii)\\exact form};
\node[dep]     (cFam) at (12.3,-5.4) {Lemmas~\ref{lem:threshold}, \ref{lem:families}\\half-open cells};
\node[mainres] (cT)   at (15.1,-4.85){Theorem~\ref{thm:cdfbandvar}\\variance-adaptive};

\path (cCA) edge (cT) (cFam) edge (cT);

% ---------------- (d) Theorem 6.2 ----------------
\node[hd] at (-0.1,-6.6) {(d) Theorem~\ref{thm:wfv}};
\node[dep]     (dB)   at (1.1,-7.4)  {Corollary~\ref{cor:Bpsi}\\Template B};
\node[dep]     (dFam) at (1.1,-8.5)  {Lemmas~\ref{lem:threshold}, \ref{lem:families}\\half-open cells};
\node[dep]     (dL)   at (3.9,-7.95) {Theorem~\ref{thm:iw-lower}\\lower, any centering};
\node[dep]     (d2)   at (6.7,-7.95) {Theorem~\ref{thm:iw-two}\\two-sided};
\node[dep]     (dX)   at (9.5,-7.95) {Corollary~\ref{cor:iw-explicit}\\oracle centering};
\node[font=\tiny, text=black] at (8.1,-7.45) {$\Es W_s=1$};
\node[dep]     (dFan) at (9.5,-9.25) {Lemma~\ref{lem:fan}\\Fan's inequality};
\node[mainres] (dT)   at (12.3,-7.95){Theorem~\ref{thm:wfv}\\weighted CDF, weights $\frac12,\frac12$};

\path (dB) edge (dL) (dFam) edge (dL) (dL) edge (d2)
      (d2) edge (dX) (dFan) edge (dX)
      (dX) edge (dT);

% ---------------- (e) censored ----------------
\node[hd] at (-0.1,-10.3) {(e) Theorem~\ref{thm:censband} (Section~\ref{sec:censored})};
\node[dep]     (eL)   at (1.1,-11.1) {Theorem~\ref{thm:iw-lower}\\(its argument)};
\node[dep]     (eM2)  at (1.1,-12.2) {Lemma~\ref{lem:cens-moments}(ii)\\event weights};
\node[dep]     (eU)   at (3.9,-11.65){Theorem~\ref{thm:cens-upper}\\upper bound};
\node[dep]     (eB)   at (3.9,-12.9) {Corollary~\ref{cor:B}\\Template B};
\node[dep]     (eThr) at (1.1,-13.5) {Lemma~\ref{lem:threshold}(iii)\\half-open cells};
\node[dep]     (eM3)  at (1.1,-14.6) {Lemma~\ref{lem:cens-moments}(i),(iii)\\at-risk moments};
\node[dep]     (eAt)  at (3.9,-14.05){Lemma~\ref{lem:atrisk}\\at-risk probe};
\node[dep]     (eLo)  at (6.7,-13.45){Theorem~\ref{thm:cens-lower}\\lower bound};
\node[mainres] (eT)   at (9.5,-12.55){Theorem~\ref{thm:censband}\\censored band, union bound};

\path (eL) edge (eU) (eM2) edge (eU)
      (eThr) edge (eAt) (eM3) edge (eAt)
      (eB) edge (eLo) (eAt) edge (eLo)
      (eU) edge (eT) (eLo) edge (eT);

% ---------------- (f) Theorem 6.3 ----------------
\node[hd] at (11.2,-10.3) {(f) Theorem~\ref{thm:mcdf}};
\node[dep]     (fCA)  at (12.3,-11.1){Corollary~\ref{cor:A}(i)\\Hoeffding form};
\node[dep]     (fBox) at (12.3,-12.2){Lemma~\ref{lem:boxes}\\box refinement};
\node[mainres] (fT)   at (15.1,-11.65){Theorem~\ref{thm:mcdf}\\multivariate CDF};

\path (fCA) edge (fT) (fBox) edge (fT);

\end{tikzpicture}
\caption{Dependencies of the proofs. An arrow from one result to another
means that the proof of the second uses the first. Shaded boxes are results
stated or announced in the main text. Panel (a) shows the shared machinery, namely the
master band and the two charge templates, with Template A for bounded
families in the upper row and Template B for nonnegative families in the
lower row. Panels (b) to (f) start from these templates.}
\label{fig:deps}
\end{figure}
\newpage
\section{MISSING PROOFS \S\ref{sec:refinementschemes}}
\label{app:master}
\begin{proof}[Proof of Lemma~\ref{lem:dom}]
Fix $v$ and $s\le t$. By (R2), $a_{s+1}(v)$ is the length-$s$ prefix of
$a_{t+1}(v)$, and $v\in C_{s+1}(a_{s+1}(v))$. Hence
$\probefunction_s(v)-\theta_s(v)\le\bar \probefunction_s(a_{s+1}(v))=\bar \probefunction_s((a_{t+1}(v))_{\le s})$.
Summing over $s\le t$ gives the claim.
\end{proof}

\begin{proof}[Proof of Theorem~\ref{thm:master}]
Nonnegativity is clear and $M_0=w_0(\varnothing)e^0=1$. The sum is countable and
each summand is $\Ft_t$-measurable: $w_t(a)$ is a product of
$\Ft_s$-measurable factors with $s\le t$, and $\tilde D_t(a)$ a sum of such by
Definition~\ref{def:envelope}. Fix $s\ge1$ and set, for $a\in\Alp^{s-1}$,
$U(a):=w_{s-1}(a)e^{\lambda\tilde D_{s-1}(a)}\ge0$, which is
$\Ft_{s-1}$-measurable. Splitting each level-$s$ address as $ak$,
\begin{align*}
M_s&=\sum_{a\in\Alp^{s-1}}\sum_{k\in\Alp}w_{s-1}(a)c_s(ak)\,
e^{\lambda\tilde D_{s-1}(a)+\lambda\bar \probefunction_s(ak)}\\
&=\sum_{a\in\Alp^{s-1}}U(a)\sum_{k\in\Alp}c_s(ak)e^{\lambda\bar \probefunction_s(ak)} .
\end{align*}
All terms are nonnegative, so conditional monotone convergence and pulling out
the $\Ft_{s-1}$-measurable factor $U(a)$ give
\[
\Es[M_s]=\sum_{a}U(a)\,
\Es\Big[\sum_{k}c_s(ak)e^{\lambda\bar \probefunction_s(ak)}\Big].
\]
If
$C_s(a)=\emptyset$ all children are empty and the inner expectation is $0$. Otherwise it is $\le1$ by \eqref{eq:star}, and the event $\{C_s(a)\neq\emptyset\}$ is $\Ft_{s-1}$-measurable by Assumption~\ref{ass:meas}. Either way the summand is at most
$U(a)$, so $\Es[M_s]\le M_{s-1}$.
\end{proof}

\begin{proof}[Proof of Corollary~\ref{cor:masterband}]
By Ville's inequality, $\PP(\exists t:M_t\ge1/\alpha)\le\alpha$. On the
complement, fix $t,v$ and let $a:=a_{t+1}(v)$, so $v\in C_{t+1}(a)$ and
$M_t\ge w_t(a)e^{\lambda\tilde D_t(a)}$, all other terms being nonnegative.
Hence $\lambda\tilde D_t(a)\le\ln\frac1{\alpha w_t(a)}$, and
Lemma~\ref{lem:dom} gives
$\lambda D_t(v)\le\lambda\tilde D_t(a)\le\Lcode_t(v)+\ln\frac1\alpha$.
\end{proof}
\newpage

\newpage
\section{GENERIC CHARGE SCHEMES}\label{sec:templates}

Recall from (R4) that $\stay\in\Alp$ is the distinguished symbol emitted by a
cell that does not divide. Fix $a\in\Alp^{s-1}$ with $C_s(a)\neq\emptyset$.
Exactly one of two things happens at step $s$. Either $a$ \emph{does not
divide}, in which case $C_{s+1}(a\stay)=C_s(a)$ and all other children of
$a$ are empty, or $a$ \emph{divides}, in which case
$C_{s+1}(a\stay)=\emptyset$. The event that $a$ divides is
$\cF_s$-measurable, and $\cS_t(v)$ is the set of steps at which the cell of
$v$ divides.

In the threshold refinement of Section~\ref{sec:cdf}, the observation
$X_s$ splits the cell containing it at the point $X_s$, and leaves every other
cell unchanged. After this split, the observed indicator
$v\mapsto\ind\{X_s\le v\}$ no longer varies within any cell. The following
definition isolates this property.

\begin{definition}[Homogeneous refinement]\label{def:homog}
A refinement scheme is \emph{homogeneous} for $(f_s)_{s\ge1}$ if, for every
$s\ge1$, the map $v\mapsto f_s(v)$ is constant on every cell of
$\cC_{s+1}$. For $b\in\Alp^{s}$ with $C_{s+1}(b)\neq\emptyset$, write
$f_s[b]$ for this constant value. If $C_{s+1}(b)=\emptyset$, set
$f_s[b]:=0$.
\end{definition}

In words, a cell that does not divide carries a constant value of $f_s$, and a
cell that divides does so into pieces on which $f_s$ is constant. The coarsest
homogeneous refinement divides each cell into the level sets of $f_s$. Finer
refinements are also allowed.

Homogeneity separates the deviation on a cell into an observed and an
unobserved part. For $v\in C_{s+1}(b)$,
\[
  f_s(v)-\theta_s(v)=f_s[b]-\theta_s(v).
\]
The first term is known once $X_s$ is observed. For the second, the charges
need a lower bound on $\theta_s$ over the cell whose size they can control.
The following assumption provides it, through the smallest value that $f_s$
takes on the cell.
\begin{assumption}[Cell floor]\label{ass:floor}
Fix an $\mathcal{F}_{s-1}$-measurable version $\theta_s(v)$ of $\E_{s-1}[f_s(v)]$ for every $s\ge1$, $v\in \mathcal{V}$.
For every $s\ge1$ and $a\in \Alp^{s-1}$, on $\{C_s(a)\neq\emptyset\}$ the cell minimum
$f^\downarrow_s(a):=\inf_{v\in C_s(a)}f_s(v)$ is $\mathcal{F}_s$-measurable with
$\E_{s-1}|f^\downarrow_s(a)|<\infty$, and almost surely
\[
  p_s(a):=\E_{s-1}\big[f^\downarrow_s(a)\big]\le\theta_s(v)\qquad\text{for all }v\in C_s(a).
\]
\end{assumption}

In our examples the cell minimum is explicit. For the threshold cell
$[l,r)$ it is $\ind\{X_s\le l\}$, whose conditional mean is
$F_s(l)=\inf_{v\in[l,r)}F_s(v)$.

\begin{lemma}[Bernoulli cumulant]\label{lem:bern}
For $\lambda>0$ and $p\in[0,1]$, let $\Lambda_p(\lambda):=\ln(1+p(e^\lambda-1))$
and $K(\lambda,p):=\Lambda_p(\lambda)-\lambda p$.
\begin{enumerate}
\item[(i)] The map $p\mapsto\Lambda_p(\lambda)$ is continuous, increasing and
concave on $[0,1]$.
\item[(ii)] $0\le K(\lambda,p)\le\lambda^2/8$.
\item[(iii)] If $Z\in[0,1]$ and $\E_{s-1}[Z]\le p$, then
$\E_{s-1}[e^{\lambda Z}]\le e^{\Lambda_p(\lambda)}$.
\end{enumerate}
\end{lemma}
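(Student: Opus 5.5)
The three parts are elementary facts about the function $g(p):=\Lambda_p(\lambda)=\ln(1+p(e^\lambda-1))$. I will treat them in order, using (i) in (iii).

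For (i), fix $\lambda>0$ and put $\beta:=e^\lambda-1>0$. Then $1+p\beta\ge1$ on $[0,1]$, so $g$ is continuous. Differentiating,
\[
g'(p)=\frac{\beta}{1+p\beta}>0,\qquad g''(p)=-\frac{\beta^2}{(1+p\beta)^2}<0,
\]
so $g$ is increasing and concave.

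For (ii), the lower bound $K(\lambda,p)\ge0$ follows from Jensen's inequality applied to $e^{\lambda B}$ with $B\sim\mathrm{Bernoulli}(p)$: $e^{\Lambda_p(\lambda)}=\E e^{\lambda B}\ge e^{\lambda p}$. Alternatively, the chord of the convex function $u\mapsto e^{\lambda u}$ on $[0,1]$ lies above it, which gives $1+p(e^\lambda-1)\ge e^{\lambda p}$. The upper bound $K(\lambda,p)\le\lambda^2/8$ is Hoeffding's lemma for the centered variable $B-p\in[-p,1-p]$, whose range has length $1$. To be self-contained, I would redo the standard argument. View $\psi(\lambda):=\Lambda_p(\lambda)-\lambda p$ as a function of $\lambda$ with $p$ fixed. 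Then $\psi(0)=0$ and $\psi'(0)=p-p=0$. Moreover $\psi''(\lambda)=q(1-q)\le 1/4$, where $q:=pe^\lambda/(1+p(e^\lambda-1))\in[0,1]$ is the tilted Bernoulli mean. Taylor's theorem with remainder then gives $\psi(\lambda)\le\lambda^2/8$. The only mildly nontrivial step is the computation of $\psi''$, which is routine.

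For (iii), the key point is that $Z$ is only known to lie in $[0,1]$, not to be Bernoulli. Convexity of $z\mapsto e^{\lambda z}$ on $[0,1]$ gives the pointwise chord bound $e^{\lambda Z}\le 1+Z(e^\lambda-1)$. Taking $\E_{s-1}$, which is legitimate since everything is bounded, yields
\[
\E_{s-1}[e^{\lambda Z}]\le 1+\E_{s-1}[Z](e^\lambda-1)\le 1+p(e^\lambda-1)=e^{\Lambda_p(\lambda)},
\]
where the last inequality uses $\E_{s-1}[Z]\le p$ and $e^\lambda-1>0$. This is the same monotonicity as in (i). If $p$ is itself random ($\Ft_{s-1}$-measurable), the argument is unchanged, because all steps hold pointwise conditionally. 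I expect no step to be a genuine obstacle; the one point to handle carefully is this extension of (iii) from Bernoulli $Z$ to general $[0,1]$-valued $Z$, and the chord inequality takes care of it.
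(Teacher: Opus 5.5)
Your proposal is correct and follows the paper's proof: derivatives in $p$ for (i), Jensen plus Hoeffding's lemma for (ii), and the chord bound $e^{\lambda z}\le 1+z(e^\lambda-1)$ on $[0,1]$ followed by a conditional expectation for (iii). The only difference is that you rederive Hoeffding's lemma through $\psi''(\lambda)=q(1-q)\le 1/4$ and Taylor's theorem, whereas the paper simply cites it.
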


\begin{proof}
(i) We have
$\partial_p\Lambda_p=(e^\lambda-1)/(1+p(e^\lambda-1))>0$ and
$\partial_p^2\Lambda_p=-(e^\lambda-1)^2/(1+p(e^\lambda-1))^2<0$.

(ii) If $B\sim\mathrm{Bernoulli}(p)$, then
$K(\lambda,p)=\ln\E e^{\lambda(B-p)}$. This is nonnegative by Jensen's
inequality and at most $\lambda^2/8$ by Hoeffding's lemma.

(iii) By convexity, $e^{\lambda z}\le1+z(e^\lambda-1)$ for $z\in[0,1]$. Taking
conditional expectations gives
$\E_{s-1}e^{\lambda Z}\le1+p(e^\lambda-1)=e^{\Lambda_p(\lambda)}$.
\end{proof}

\subsection{Template A for bounded families}

\begin{proposition}[Template A]\label{prop:A}
Assume that $f_s(v)\in[0,1]$ for all $s$ and $v$, that the refinement scheme
is homogeneous for $(f_s)$, and that Assumptions~\ref{ass:meas}
and~\ref{ass:floor} hold. Fix $\lambda>0$. For $s\ge1$ and $a\in\Alp^{s-1}$,
let $\gamma_s(a)$ be $\cF_{s-1}$-measurable with
$\gamma_s(a)\ge K(\lambda,p_s(a))$, and for $k\neq\stay$ let $\pi_s(ak)\ge0$ be
$\cF_s$-measurable with $\sum_{k\neq\stay}\pi_s(ak)\le1$. Then
\[
  \bar f_s(ak):=f_s[ak]-p_s(a)
\]
(with $\bar f_s(ak):=-\infty$ for empty children) is an envelope, and
\[
  c_s(a\stay):=e^{-\gamma_s(a)},\qquad
  c_s(ak):=\pi_s(ak)\,e^{-\lambda f_s[ak]-\gamma_s(a)}
  \quad(k\neq\stay)
\]
is a charge scheme at $\lambda$.
\end{proposition}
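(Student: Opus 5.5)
The plan is to check the two defining properties separately: first that $\bar f_s(ak)$ satisfies Definition~\ref{def:envelope}, and then that the proposed $c_s$ satisfies \eqref{eq:star}. The charge condition is handled by treating the cases ``$a$ divides'' and ``$a$ does not divide'' together, through a single pointwise bound in terms of the cell minimum $f^\downarrow_s(a)$.

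\emph{Envelope.} $\bar f_s(ak)$ is $\cF_s$-measurable because $f_s[ak]$ is, being the common value of the $\cF_s\otimes\Sigma_\V$-measurable $f_s$ on an $\cF_s$-determined cell, and $p_s(a)$ is $\cF_{s-1}$-measurable. I do not expect this measurability point to be delicate. Now fix a nonempty child and $v\in C_{s+1}(ak)$. By (R2), $v\in C_s(a)$. Homogeneity gives $f_s(v)=f_s[ak]$, and Assumption~\ref{ass:floor} gives $\theta_s(v)\ge p_s(a)$. Hence
\[
f_s(v)-\theta_s(v)\le f_s[ak]-p_s(a)=\bar f_s(ak).
\]
Empty children carry $-\infty$ by convention.

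\emph{Charge condition.} The charges are clearly nonnegative and $\cF_s$-measurable. Work on $\{C_s(a)\ne\emptyset\}$ and recall from (R4) that exactly one of two cases occurs.

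If $a$ does not divide, only the child $a\stay$ is nonempty and $C_{s+1}(a\stay)=C_s(a)$. Homogeneity then forces $f_s[a\stay]=f^\downarrow_s(a)$, so the sum in \eqref{eq:star} equals
\[
e^{-\gamma_s(a)}e^{\lambda(f^\downarrow_s(a)-p_s(a))}.
\]
If $a$ divides, the $\stay$ term vanishes since $e^{-\infty}=0$. Each child $k\ne\stay$ contributes
\[
\pi_s(ak)e^{-\lambda f_s[ak]-\gamma_s(a)}e^{\lambda(f_s[ak]-p_s(a))}=\pi_s(ak)e^{-\gamma_s(a)-\lambda p_s(a)},
\]
so the sum is at most $e^{-\gamma_s(a)-\lambda p_s(a)}$. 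Since $f_s\in[0,1]$ gives $f^\downarrow_s(a)\ge0$, this is at most $e^{-\gamma_s(a)-\lambda p_s(a)}e^{\lambda f^\downarrow_s(a)}$. In both cases, therefore,
\[
\sum_k c_s(ak)e^{\lambda\bar f_s(ak)}\le e^{-\gamma_s(a)-\lambda p_s(a)}\,e^{\lambda f^\downarrow_s(a)}.
\]

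Take $\Es$ and pull out the $\cF_{s-1}$-measurable factor. Apply Lemma~\ref{lem:bern}(iii) with $Z=f^\downarrow_s(a)\in[0,1]$ and $\Es Z=p_s(a)$, which gives $\Es e^{\lambda Z}\le e^{\Lambda_{p_s(a)}(\lambda)}$. The bound becomes
\[
e^{K(\lambda,p_s(a))-\gamma_s(a)}\le1
\]
by the hypothesis on $\gamma_s(a)$.

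The main subtlety is the unifying pointwise inequality. It is the observation that in the dividing case one may freely insert the factor $e^{\lambda f^\downarrow_s(a)}\ge1$, which lets both cases be controlled by the same Bernoulli-type MGF bound.
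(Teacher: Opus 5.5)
Your proof is correct and follows the paper's argument: you check the envelope via homogeneity and Assumption~\ref{ass:floor}, evaluate the sum in \eqref{eq:star} in the same two cases, and conclude with Lemma~\ref{lem:bern}(iii). The one cosmetic difference is that the paper sets $Z:=f^\downarrow_s(a)\,\ind\{a\text{ does not divide}\}$ and uses $\Es[Z]\le p_s(a)$, whereas you insert the factor $e^{\lambda f^\downarrow_s(a)}\ge1$ at a division and use $\Es[f^\downarrow_s(a)]=p_s(a)$ directly; the latter is exactly how the paper argues for Template~A$'$.
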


\begin{proof}
Fix $s$ and $a$, and work on $\{C_s(a)\neq\emptyset\}$. Write
$p:=p_s(a)$ and $\gamma:=\gamma_s(a)$.

\emph{Envelope.} For $v\in C_{s+1}(ak)\subseteq C_s(a)$, homogeneity and
Assumption~\ref{ass:floor} give $f_s(v)-\theta_s(v)\le f_s[ak]-p$.

\emph{Charge condition.} Let $Z:=f_s[a\stay]$ if $a$ does not divide, and
$Z:=0$ if it divides. If $a$ does not divide, the only nonempty child is
$a\stay$, and the sum $\sum_kc_s(ak)e^{\lambda\bar f_s(ak)}$ equals $e^{-\gamma}e^{\lambda(Z-p)}$. If $a$
divides, it equals $e^{-\gamma-\lambda p}\sum_{k\neq\stay}\pi_s(ak)\le
e^{-\gamma-\lambda p}$. In both cases, since $\gamma+\lambda p\ge
K(\lambda,p)+\lambda p=\Lambda_p(\lambda)$,
\[
  \sum_kc_s(ak)e^{\lambda\bar f_s(ak)}\le e^{-\Lambda_p(\lambda)}e^{\lambda Z}.
\]
Moreover $Z=f^\downarrow_s(a)\,\ind\{a\text{ does not divide}\}$, since without
a division $f_s\equiv f_s[a\stay]$ on $C_s(a)=C_{s+1}(a\stay)$. Hence
$0\le Z\le f^\downarrow_s(a)$. We obtain $\E_{s-1}[Z]\le p$, and
Lemma~\ref{lem:bern}(iii) gives $\E_{s-1}[e^{\lambda Z}]\le e^{\Lambda_p(\lambda)}$. Hence,
\[
\Es\left[\sum_k c_s(ak)e^{\lambda\bar \probefunction_s(ak)}\right]\le1
\]
\end{proof}

For the path of $v$ and $s\le t$, define
\[
  \pi_s(v):=
  \begin{cases}
    \pi_s(a_{s+1}(v)) & \text{if } s\in\cS_t(v),\\
    1 & \text{if } s\notin\cS_t(v),
  \end{cases}
  \qquad
  \gamma_s(v):=\gamma_s(a_s(v)),
  \qquad
  p_s(v):=p_s(a_s(v))\le\theta_s(v).
\]
Let $\check f_s(v):=f_s(v)$ if the cell of $v$ does not divide at step $s$,
that is, if $k_s(v)=\stay$, and $\check f_s(v):=0$ if it divides. Further, set
\[
  \Pi_t(v):=\sum_{s\in\cS_t(v)}\ln\frac1{\pi_s(v)},\qquad
  \check\mu_t(v):=\frac1t\sum_{s\le t}\check f_s(v),\qquad
  P_t(v):=\frac1t\sum_{s\le t}p_s(v)\le\mu_t(v).
\]
Here $\check\mu_t(v)$ is the empirical mean in which every observation that
divided the cell of $v$ is replaced by $0$.

Given the charge scheme of Proposition~\ref{prop:A}, we obtain the following.

\begin{corollary}\label{cor:boundedsimple}
Let the conditions of Proposition~\ref{prop:A} hold. Then
\[
  \Dt_t(a_{t+1}(v))=t\bigl(\widehat\mu_t(v)-P_t(v)\bigr).
\]
Set
\[
  \ell_t(v)=\sum_{s\le t}\gamma_s(v)+\Pi_t(v)+\lambda\sum_{s\in\cS_t(v)}f_s(v).
\]
Let $\alpha\in(0,1)$. With probability at least $1-\alpha$, for all $t$ and
$v$,
\[
  \lambda t\bigl(\widehat\mu_t(v)-P_t(v)\bigr)\le\ell_t(v)+\ln\tfrac1\alpha.
\]
\end{corollary}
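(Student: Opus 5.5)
The plan is to read both claims off Corollary~\ref{cor:masterband}, applied to the envelope and charge scheme of Proposition~\ref{prop:A}. Everything reduces to evaluating $\Dt_t$ and $w_t$ along the single path $a:=a_{t+1}(v)$. By (R2), the length-$s$ prefix of $a$ is $a_{\le s}=a_{s+1}(v)$ and its length-$(s-1)$ prefix is $a_s(v)$, for every $s\le t$. Moreover $v\in C_{s+1}(a_{s+1}(v))$, so every cell along the path is nonempty. Hence the constants $f_s[a_{s+1}(v)]$ of Definition~\ref{def:homog} are the genuine values, not the conventional $0$, and the envelope is finite along the path.

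\emph{Envelope sum.} By Proposition~\ref{prop:A}, $\bar f_s(a_{\le s})=f_s[a_{s+1}(v)]-p_s(a_s(v))$. By homogeneity, $f_s$ is constant on $C_{s+1}(a_{s+1}(v))$, which contains $v$, so $f_s[a_{s+1}(v)]=f_s(v)$. Summing over $s\le t$ gives
\[
\Dt_t(a_{t+1}(v))=\sum_{s\le t}f_s(v)-\sum_{s\le t}p_s(v)=t\bigl(\widehat\mu_t(v)-P_t(v)\bigr),
\]
which is the first claim.

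\emph{Code length.} Next I compute $\ln\frac1{w_t(a)}=\sum_{s\le t}\ln\frac1{c_s(a_{\le s})}$ by splitting the steps according to whether $s\in\cS_t(v)$. If $s\notin\cS_t(v)$, the emitted symbol is $\stay$ and $c_s=e^{-\gamma_s(a_s(v))}$, which contributes $\gamma_s(v)$. If $s\in\cS_t(v)$, the emitted symbol is $k=k_s(v)\neq\stay$ and
\[
c_s=\pi_s(a_{s+1}(v))\,e^{-\lambda f_s[a_{s+1}(v)]-\gamma_s(v)},
\]
which contributes $\ln\frac1{\pi_s(v)}+\lambda f_s(v)+\gamma_s(v)$, again using $f_s[a_{s+1}(v)]=f_s(v)$. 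Adding the two cases gives exactly $\ell_t(v)=\sum_{s\le t}\gamma_s(v)+\Pi_t(v)+\lambda\sum_{s\in\cS_t(v)}f_s(v)$. This agrees with the code length of Corollary~\ref{cor:masterband}, with the value $+\infty$ allowed when some $\pi_s(v)=0$; in that case the bound is trivially true.

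\emph{Conclusion.} Proposition~\ref{prop:A} shows that $(\bar f_s)$ is an envelope and $(c_s)$ is a charge scheme at $\lambda$, and Assumption~\ref{ass:meas} is assumed. Corollary~\ref{cor:masterband} therefore gives, with probability at least $1-\alpha$ and simultaneously for all $t,v$, that $\lambda\Dt_t(a_{t+1}(v))\le\ell_t(v)+\ln\frac1\alpha$. Substituting the identity for the envelope sum yields the stated inequality. The only delicate point is the bookkeeping: identifying the prefixes via (R2), and making sure that the constants $f_s[\cdot]$ along the path are true values of $f_s(v)$ rather than the placeholder $0$ used for empty children. Nonemptiness of the path cells handles the latter, so I expect no real obstacle.
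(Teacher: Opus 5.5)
Your proposal is correct and follows the paper's proof essentially step for step. You obtain the envelope identity from $v\in C_{s+1}(a_{s+1}(v))$ together with homogeneity, compute the path cost by splitting the steps according to whether $s\in\cS_t(v)$, and read off the high-probability bound from Corollary~\ref{cor:masterband}. Your extra remarks on the nonemptiness of the path cells and on the case $\pi_s(v)=0$ are harmless bookkeeping that the paper leaves implicit.
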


\begin{proof}
\emph{Envelope.} Since $v\in C_{s+1}(a_{s+1}(v))$,
$\bar f_s(a_{s+1}(v))=f_s[a_{s+1}(v)]-p_s(a_s(v))=f_s(v)-p_s(v)$. Summing
over $s\le t$ gives the identity for $\Dt_t$.

\emph{Path cost.} If $s\notin\cS_t(v)$, then $a_{s+1}(v)=a_s(v)\stay$ and the
charge is $e^{-\gamma_s(v)}$. If $s\in\cS_t(v)$, the charge is
$\pi_s(v)e^{-\lambda f_s[a_{s+1}(v)]-\gamma_s(v)}$, and
$f_s[a_{s+1}(v)]=f_s(v)$ because $v\in C_{s+1}(a_{s+1}(v))$. Summing
$-\ln c_s$ over $s\le t$ gives the formula for $\ell_t(v)$.

\emph{High-probability bound.} This is Corollary~\ref{cor:masterband}.
\end{proof}

We can now apply the previous result with explicit choices of $\gamma$.

\begin{corollary}[Template A bands]\label{cor:A}
Let the conditions of Proposition~\ref{prop:A} hold, and let
$\alpha\in(0,1)$.
\begin{enumerate}
\item[(i)] \emph{Hoeffding form.} With $\gamma\equiv\lambda^2/8$, with
probability at least $1-\alpha$, for all $t$ and $v$,
\[
  \mu_t(v)\ \ge\ P_t(v)\ \ge\ \check\mu_t(v)-\frac\lambda8-\frac{\Pi_t(v)+\ln\frac1\alpha}{\lambda t}.
\]
\item[(ii)] \emph{Exact form.} With $\gamma_s(a)=K(\lambda,p_s(a))$ for all $a$,
with probability at least $1-\alpha$, for all $t$ and $v$,
\[
  \mu_t(v)\ \ge\ P_t(v)\ \ge\ \frac{\exp\bigl\{\lambda\check\mu_t(v)-(\Pi_t(v)+\ln\frac1\alpha)/t\bigr\}-1}{e^\lambda-1}.
\]
\end{enumerate}
\end{corollary}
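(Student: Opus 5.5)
Both parts follow from Corollary~\ref{cor:boundedsimple}, which already supplies the high-probability inequality
\[
  \lambda t\bigl(\widehat\mu_t(v)-P_t(v)\bigr)\le\ell_t(v)+\ln\tfrac1\alpha,
\]
together with $P_t(v)\le\mu_t(v)$ from Assumption~\ref{ass:floor}. The plan is first to rewrite $\ell_t(v)$ so that the split term $\lambda\sum_{s\in\cS_t(v)}f_s(v)$ is absorbed into the empirical mean. Since $\widehat\mu_t(v)-\frac1t\sum_{s\in\cS_t(v)}f_s(v)=\check\mu_t(v)$ by definition of $\check f_s$, the master bound becomes
\[
  \lambda t\bigl(\check\mu_t(v)-P_t(v)\bigr)\le\sum_{s\le t}\gamma_s(v)+\Pi_t(v)+\ln\tfrac1\alpha ,
\]
on the same event of probability at least $1-\alpha$, simultaneously in $t$ and $v$. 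This identity is just bookkeeping.

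For (i), take $\gamma\equiv\lambda^2/8$. This is admissible in Proposition~\ref{prop:A} because $K(\lambda,p)\le\lambda^2/8$ by Lemma~\ref{lem:bern}(ii), and a constant is trivially $\cF_{s-1}$-measurable. Then $\sum_{s\le t}\gamma_s=t\lambda^2/8$. Dividing by $\lambda t$ and rearranging gives $P_t(v)\ge\check\mu_t(v)-\lambda/8-(\Pi_t(v)+\ln\frac1\alpha)/(\lambda t)$. Combining with $\mu_t\ge P_t$ finishes (i).

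For (ii), take $\gamma_s(a)=K(\lambda,p_s(a))$, which is $\cF_{s-1}$-measurable because $p_s(a)$ is a conditional expectation. Along the path, $\gamma_s(v)+\lambda p_s(v)=\Lambda_{p_s(v)}(\lambda)$. The key step is to add $\lambda tP_t(v)=\lambda\sum_s p_s(v)$ to both sides, which gives
\[
  \lambda t\check\mu_t(v)\le\sum_{s\le t}\Lambda_{p_s(v)}(\lambda)+\Pi_t(v)+\ln\tfrac1\alpha .
\]
By concavity of $p\mapsto\Lambda_p(\lambda)$ (Lemma~\ref{lem:bern}(i)) and Jensen, $\frac1t\sum_s\Lambda_{p_s(v)}(\lambda)\le\Lambda_{P_t(v)}(\lambda)$, where $P_t(v)\in[0,1]$ since $p_s\in[0,1]$. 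Hence
\[
  \ln\bigl(1+P_t(v)(e^\lambda-1)\bigr)\ge\lambda\check\mu_t(v)-\bigl(\Pi_t(v)+\ln\tfrac1\alpha\bigr)/t .
\]
Exponentiating and solving for $P_t(v)$, using that $\Lambda$ is increasing and invertible with $e^\lambda-1>0$, yields the stated bound, and again $\mu_t\ge P_t$.

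The only step needing care is the Jensen step in (ii). It relies on the charges using the exact, path-dependent cumulant, and on the path quantities $p_s(v)$ being the ones averaged in $P_t(v)$. Both hold by the definitions preceding Corollary~\ref{cor:boundedsimple}, so no real obstacle is expected. One should also note that $p_s\in[0,1]$ because $f_s\in[0,1]$, so that $\Lambda_p$ is defined.
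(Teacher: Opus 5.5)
Your proposal is correct and follows essentially the same route as the paper. Both derive the bounds from Corollary~\ref{cor:boundedsimple} by absorbing $\lambda\sum_{s\in\cS_t(v)}f_s(v)$ into $\check\mu_t(v)$ and setting $\gamma\equiv\lambda^2/8$ for (i); for (ii) both use $\gamma_s(v)+\lambda p_s(v)=\Lambda_{p_s(v)}(\lambda)$, then Jensen for the concave map $p\mapsto\Lambda_p(\lambda)$, then invert the increasing map $P\mapsto\Lambda_P(\lambda)$. Your explicit check via Lemma~\ref{lem:bern}(ii) that $\gamma\equiv\lambda^2/8$ is admissible is a detail the paper leaves implicit.
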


\begin{proof}
Both bounds follow from Corollary~\ref{cor:boundedsimple}.

(i) With $\gamma\equiv\lambda^2/8$ we have
$\ell_t(v)=t\lambda^2/8+\Pi_t(v)+\lambda\sum_{s\in\cS_t(v)}f_s(v)$. Hence, with
probability at least $1-\alpha$, for all $t$ and $v$,
\begin{align*}
  \mu_t(v)\ \ge\ P_t(v)
  &\ge\widehat\mu_t(v)-\frac{\ell_t(v)+\ln\frac1\alpha}{\lambda t}
   =\widehat\mu_t(v)-\frac1t\sum_{s\in\cS_t(v)}f_s(v)-\frac\lambda8-\frac{\Pi_t(v)+\ln\frac1\alpha}{\lambda t}\\
  &=\check\mu_t(v)-\frac\lambda8-\frac{\Pi_t(v)+\ln\frac1\alpha}{\lambda t}.
\end{align*}

(ii) With $\gamma_s(v)=K(\lambda,p_s(v))$ we have
$\gamma_s(v)+\lambda p_s(v)=\Lambda_{p_s(v)}(\lambda)$. Using
Corollary~\ref{cor:boundedsimple}, we obtain, with probability at least $1-\alpha$, for all $t$ and $v$,
\begin{align*}
  \lambda t\,\check\mu_t(v)
  &=\lambda\Bigl(t\widehat\mu_t(v)-\sum_{s\in\cS_t(v)}f_s(v)\Bigr)
   \le\lambda tP_t(v)-\lambda\sum_{s\in\cS_t(v)}f_s(v)+\ell_t(v)+\ln\tfrac1\alpha\\
  &=\sum_{s\le t}\bigl(\lambda p_s(v)+\gamma_s(v)\bigr)+\Pi_t(v)+\ln\tfrac1\alpha
   =\sum_{s\le t}\Lambda_{p_s(v)}(\lambda)+\Pi_t(v)+\ln\tfrac1\alpha\\
  &\le t\,\Lambda_{P_t(v)}(\lambda)+\Pi_t(v)+\ln\tfrac1\alpha .
\end{align*}
The last step is Jensen's inequality for the concave map
$p\mapsto\Lambda_p(\lambda)$. Since
$\Lambda_P(\lambda)=\ln(1+P(e^\lambda-1))$ is increasing in $P$, this
rearranges to the claim.
\end{proof}

In both forms, every observation that divided the cell of $v$ is counted as
$0$ in $\check\mu_t(v)$. Applying the corollary to $1-f_s$, which is
homogeneous for the same refinement, counts it as $1$ on the upper side.
Moreover, $v$ pays the code length $\Pi_t(v)$ of its path.
\begin{remark}[Choice of split weights]\label{rem:weights}
Templates~A and~B leave the weights $\pi_s(ak)$ of the children of a divided
cell free, subject only to summing to at most one, and they may depend on the
past and on the current observation. Along the path of $v$ they enter only
through $\Pi_t(v)=\sum_{s\in\cS_t(v)}\ln(1/\pi_s(v))$, the length of a code for
the sequence of moves of $v$. Any sequential probability assignment on these
moves therefore gives a valid band. Examples are a fixed tilt towards one side,
or the Krichevsky--Trofimov weights \citep{KT}, which adapt to the moves seen
so far. Such weights are cheaper for probes whose moves are mostly to one side,
for instance in the tails, and more expensive for probes with balanced moves.
Throughout, we use equal weights, $\frac12$ for each child of a split threshold
cell and $2^{-m}$ for each piece of a box cut in $m$ coordinates. 
\end{remark}

\subsection{Template A without homogeneity}\label{sec:templateA-nonhom}

Homogeneity is not needed for Template~A. In Proposition~\ref{prop:A} it serves only to make the value
$f_s[ak]$ of the current observation on each child known once $X_s$ is observed, which gives the
exact envelope $f_s[ak]-p_s(a)$. Without homogeneity, we bound $f_s$ on a child by its maximum there,
and we compare the current observation with its minimum over the parent cell. The price is the
oscillation of $f_s$ over the cell of $v$ at the time $X_s$ arrives
(Remark~\ref{rem:oscillation}). In return, the refinement may divide cells at any
$\mathcal{F}_s$-measurable points, or deterministically.

For a string $b$ with $|b|\in\{s-1,s\}$ let $C(b):=C_{|b|+1}(b)$ be the cell it indexes, so that
$C(a)=C_s(a)$ for $a\in\Alp^{s-1}$ and $C(ak)=C_{s+1}(ak)$ for its children. On
$\{C(b)\neq\emptyset\}$ write
\[
  f^\downarrow_s(b):=\inf_{v\in C(b)}f_s(v),\qquad f^\uparrow_s(b):=\sup_{v\in C(b)}f_s(v).
\]
For $a\in\Alp^{s-1}$, $f^\downarrow_s(a)$ is the cell minimum of Assumption~\ref{ass:floor}. If $a$ does not
divide, then $C(a\star)=C(a)$, so $f^\downarrow_s(a\star)=f^\downarrow_s(a)$ and
$f^\uparrow_s(a\star)=f^\uparrow_s(a)$.

\begin{proposition}[Template A$'$]\label{prop:templateAprime}
Assume that $f_s(v)\in[0,1]$ for all $s$ and $v$, that Assumptions~\ref{ass:meas} and~\ref{ass:floor} hold, and that
$f^\uparrow_s(b)$ is $\mathcal{F}_s$-measurable for every nonempty child $b\in\Alp^s$. Fix $\lambda>0$.
For $s\ge1$ and $a\in\Alp^{s-1}$, let $\gamma_s(a)$ be $\mathcal{F}_{s-1}$-measurable with $\gamma_s(a)\ge K(\lambda,p_s(a))$. For $k\neq\star$, let $\pi_s(ak)\ge0$ be $\mathcal{F}_s$-measurable with $\sum_{k\neq\star}\pi_s(ak)\le1$, and put $\pi_s(a\star):=1$.
Then
\[
  \bar f_s(ak):=f^\uparrow_s(ak)-p_s(a)
\]
is an envelope, and
\[
  c_s(ak):=\pi_s(ak)\exp\bigl\{-\lambda\bigl(f^\uparrow_s(ak)-f^\downarrow_s(a)\bigr)-\gamma_s(a)\bigr\}
\]
is a charge scheme at $\lambda$. For empty children, set $\bar f_s(ak):=-\infty$ and $c_s(ak):=0$.
\end{proposition}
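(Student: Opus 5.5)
The proof follows that of Proposition~\ref{prop:A}, with the observed child value $f_s[ak]$ replaced by the child maximum $f^\uparrow_s(ak)$. The single new ingredient is a comparison of the realized charge sum with the cell minimum $f^\downarrow_s(a)$ of the parent. Fix $s$ and $a\in\Alp^{s-1}$, work on $\{C_s(a)\neq\emptyset\}$, and write $p:=p_s(a)$ and $\gamma:=\gamma_s(a)$.

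\emph{Envelope.} For a nonempty child $ak$ and $v\in C_{s+1}(ak)\subseteq C_s(a)$, the definition of the supremum gives $f_s(v)\le f^\uparrow_s(ak)$. Assumption~\ref{ass:floor} gives $\theta_s(v)\ge p$. Hence $f_s(v)-\theta_s(v)\le f^\uparrow_s(ak)-p=\bar f_s(ak)$. Measurability holds because $f^\uparrow_s(ak)$ is $\Ft_s$-measurable by hypothesis and $p$ is $\Ft_{s-1}$-measurable. Empty children receive $-\infty$, as Definition~\ref{def:envelope} requires. The charges are nonnegative and $\Ft_s$-measurable, since $f^\downarrow_s(a)$ is $\Ft_s$-measurable by Assumption~\ref{ass:floor}.

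\emph{Charge condition.} For each nonempty child the exponent telescopes:
\[
  c_s(ak)e^{\lambda\bar f_s(ak)}=\pi_s(ak)\exp\bigl\{\lambda f^\downarrow_s(a)-\lambda p-\gamma\bigr\}.
\]
The unobserved maximum therefore cancels, and what remains is the parent minimum, which is common to all children. Summing over nonempty children uses the following facts.
\begin{itemize}
\item If $a$ does not divide, the only nonempty child is $a\stay$, and $\pi_s(a\stay)=1$.
\item If $a$ divides, $a\stay$ is empty by (R4), and $\sum_{k\neq\stay}\pi_s(ak)\le1$.
\end{itemize}
In either case
\[
  \sum_k c_s(ak)e^{\lambda\bar f_s(ak)}\le e^{-\gamma-\lambda p}\,e^{\lambda f^\downarrow_s(a)}\le e^{-\Lambda_p(\lambda)}e^{\lambda f^\downarrow_s(a)},
\]
where the last step uses $\gamma\ge K(\lambda,p)$, that is, $\gamma+\lambda p\ge\Lambda_p(\lambda)$.

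Now set $Z:=f^\downarrow_s(a)$. Since $f_s\in[0,1]$, we have $Z\in[0,1]$, and $\Es[Z]=p$ by definition of $p$. Lemma~\ref{lem:bern}(iii) gives $\Es[e^{\lambda Z}]\le e^{\Lambda_p(\lambda)}$. Pulling the $\Ft_{s-1}$-measurable factor $e^{-\Lambda_p(\lambda)}$ out of the conditional expectation then yields \eqref{eq:star}.

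The main obstacle is the case split on whether $a$ divides. Unlike Proposition~\ref{prop:A}, the case with no division needs no special argument here, because $\pi_s(a\stay)=1$ and the telescoped exponent already involves $f^\downarrow_s(a)$. The only care needed is that the bound must not lose the factor $e^{\lambda f^\downarrow_s(a)}$ in the divided case. That factor is automatically present, so there is no need to replace it by $0$ as in Template~A. Consequently the argument does not use homogeneity anywhere.
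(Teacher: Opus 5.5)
Your proof is correct and follows the paper's argument essentially line by line. Both proofs bound the envelope by the child supremum together with the cell floor, observe that $f^\uparrow_s(ak)$ cancels in $c_s(ak)e^{\lambda\bar f_s(ak)}$, split on whether $a$ divides so that the surviving weights sum to at most one, and then apply Lemma~\ref{lem:bern}(iii) to $f^\downarrow_s(a)\in[0,1]$, whose conditional mean is $p$.
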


\begin{proof}
Fix $s$ and $a$, work on $\{C_s(a)\neq\emptyset\}$, and write $p:=p_s(a)$ and $\gamma:=\gamma_s(a)$.

\emph{Envelope.} The variable $\bar f_s(ak)$ is $\mathcal{F}_s$-measurable. By Assumption~\ref{ass:floor}, almost
surely $\theta_s(v)\ge p$ for all $v\in C_s(a)$. Hence, almost surely, for every nonempty child $ak$
and every $v\in C_{s+1}(ak)\subseteq C_s(a)$,
\[
  f_s(v)-\theta_s(v)\le f^\uparrow_s(ak)-p=\bar f_s(ak).
\]

\emph{Charge condition.} For every nonempty child,
\[
  c_s(ak)\,e^{\lambda\bar f_s(ak)}=\pi_s(ak)\,e^{\lambda(f^\downarrow_s(a)-p)-\gamma}.
\]
Only nonempty children contribute, and their weights sum to at most one. If $a$ does not divide,
$a\star$ is the only nonempty child and $\pi_s(a\star)=1$; if $a$ divides, $C_{s+1}(a\star)=\emptyset$.
Since $\gamma+\lambda p\ge K(\lambda,p)+\lambda p=\Lambda_p(\lambda)$,
\[
  \sum_k c_s(ak)\,e^{\lambda\bar f_s(ak)}\le e^{\lambda f^\downarrow_s(a)-\Lambda_p(\lambda)}.
\]
The variable $f^\downarrow_s(a)\in[0,1]$ is $\mathcal{F}_s$-measurable with
$\mathbb E_{s-1}[f^\downarrow_s(a)]=p$, so Lemma~\ref{lem:bern}(iii) gives
$\mathbb E_{s-1}\bigl[e^{\lambda f^\downarrow_s(a)}\bigr]\le e^{\Lambda_p(\lambda)}$.
\end{proof}

The two choices of $\gamma$ as before give the explicit charges
\begin{align*}
  c_s(ak)&=\pi_s(ak)\,e^{-\lambda(f^\uparrow_s(ak)-f^\downarrow_s(a))-\lambda^2/8}
  &&(\gamma\equiv\lambda^2/8),\\
  c_s(ak)&=\pi_s(ak)\,e^{-\lambda(f^\uparrow_s(ak)-f^\downarrow_s(a))}\,
  \frac{e^{\lambda p_s(a)}}{1+p_s(a)(e^\lambda-1)}
  &&(\gamma_s(a)=K(\lambda,p_s(a))).
\end{align*}
Under homogeneity, $f_s$ is constant on each child, so $f^\uparrow_s(ak)=f_s[ak]$ and the envelope is
that of Proposition~\ref{prop:A}. Comparing the charges in the two cases, we note the following. If $a$ does not divide, then $f^\downarrow_s(a)=f_s[a\star]$, so $c_s(a\star)=e^{-\gamma_s(a)}$, as in Proposition~\ref{prop:A}. If $a$ divides, then $f^\downarrow_s(a)=\min_{k:\,C_{s+1}(ak)\neq\emptyset}f_s[ak]$, and $c_s(ak)$
  is $e^{\lambda f^\downarrow_s(a)}\ge1$ times the charge of Proposition~\ref{prop:A}.
In other words, Template~A counts a dividing observation as $0$, while Template~A$'$ counts it as
$f^\downarrow_s(a)$. For thresholds, boxes and their weighted versions some child has value $0$, so
$f^\downarrow_s(a)=0$ at every division and the two templates coincide.

For the path of $v$, let $p_s(v)$, $\gamma_s(v)$, $\pi_s(v)$, $\Pi_t(v)$ and $P_t(v)$ be as before
Corollary~\ref{cor:boundedsimple}, and put
\[
  \mu^\downarrow_t(v):=\frac1t\sum_{s\le t}f^\downarrow_s(a_s(v)),\qquad
  \mu^\uparrow_t(v):=\frac1t\sum_{s\le t}f^\uparrow_s(a_s(v)).
\]
Thus $\mu^\downarrow_t(v)$ counts each observation by its minimum over the cell of $v$ at the time it
arrives. 
Under homogeneity, $f_s$ is constant on a cell that does not divide, so
$f^\downarrow_s(a_s(v))=f_s(v)$ whenever the cell of $v$ does not divide at step $s$. When it divides, $f^\downarrow_s(a_s(v))\ge0$. Hence $\mu^\downarrow_t(v)\ge\check\mu_t(v)$, with equality whenever $f^\downarrow_s(a)=0$ at every division.

\begin{corollary}[Template A$'$ bands]\label{cor:templateAprime}
Let the conditions of Proposition~\ref{prop:templateAprime} hold, and let $\alpha\in(0,1)$.
\begin{enumerate}
\item[(i)] With $\gamma\equiv\lambda^2/8$, with probability at least $1-\alpha$, for all $t$ and $v$,
\[
  \mu_t(v)\ \ge\ P_t(v)\ \ge\ \mu^\downarrow_t(v)-\frac\lambda8-\frac{\Pi_t(v)+\ln\frac1\alpha}{\lambda t}.
\]
\item[(ii)] With $\gamma_s(a)=K(\lambda,p_s(a))$, with probability at least $1-\alpha$, for all $t$ and $v$,
\[
  \mu_t(v)\ \ge\ P_t(v)\ \ge\
  \frac{\exp\{\lambda\mu^\downarrow_t(v)-(\Pi_t(v)+\ln\frac1\alpha)/t\}-1}{e^\lambda-1}.
\]
\end{enumerate}
\end{corollary}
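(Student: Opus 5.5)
The plan is to mirror the proofs of Corollary~\ref{cor:boundedsimple} and Corollary~\ref{cor:A}, replacing the homogeneous envelope with that of Proposition~\ref{prop:templateAprime}. The key observation is that the charge of Proposition~\ref{prop:templateAprime} was designed so that, along the path of $v$, the term $f^\uparrow_s$ cancels. For every $s\le t$, write $b:=a_{s+1}(v)$ and $a:=a_s(v)$. We have $\bar f_s(b)=f^\uparrow_s(b)-p_s(v)$ and $-\ln c_s(b)=\ln\frac1{\pi_s(b)}+\lambda\bigl(f^\uparrow_s(b)-f^\downarrow_s(a)\bigr)+\gamma_s(v)$. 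If the cell does not divide, then $\pi_s(a\stay)=1$, so the $\pi$ factors along the path contribute exactly $\Pi_t(v)$. Summing gives
\[
\lambda\Dt_t(a_{t+1}(v))-\ell_t(v)=\lambda\sum_{s\le t}\bigl(f^\downarrow_s(a_s(v))-p_s(v)\bigr)-\sum_{s\le t}\gamma_s(v)-\Pi_t(v),
\]
where $\ell_t(v)=\ln\frac1{w_t(a_{t+1}(v))}$. This identity is exact, with no homogeneity needed.

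Next, apply Corollary~\ref{cor:masterband}. On an event of probability at least $1-\alpha$, for all $t,v$, we have $\lambda\Dt_t(a_{t+1}(v))\le\ell_t(v)+\ln\frac1\alpha$. Using the identity, this becomes
\[
\lambda t\bigl(\mu^\downarrow_t(v)-P_t(v)\bigr)\le\sum_{s\le t}\gamma_s(v)+\Pi_t(v)+\ln\tfrac1\alpha .
\]
The left inequality $\mu_t(v)\ge P_t(v)$ in both parts follows by averaging $p_s(v)=p_s(a_s(v))\le\theta_s(v)$ over $s\le t$, using Assumption~\ref{ass:floor} and $v\in C_s(a_s(v))$.

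For (i), substitute $\gamma\equiv\lambda^2/8$ and divide by $\lambda t$. This directly gives $P_t(v)\ge\mu^\downarrow_t(v)-\lambda/8-(\Pi_t(v)+\ln\frac1\alpha)/(\lambda t)$.

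For (ii), substitute $\gamma_s(v)=K(\lambda,p_s(v))$. Then $\lambda p_s(v)+\gamma_s(v)=\Lambda_{p_s(v)}(\lambda)$, so the display becomes
\[
\lambda t\mu^\downarrow_t(v)\le\sum_{s\le t}\Lambda_{p_s(v)}(\lambda)+\Pi_t(v)+\ln\tfrac1\alpha .
\]
By concavity (Lemma~\ref{lem:bern}(i)) and Jensen, $\sum_{s\le t}\Lambda_{p_s(v)}(\lambda)\le t\Lambda_{P_t(v)}(\lambda)$. Dividing by $t$, exponentiating, and inverting the increasing map $P\mapsto\ln(1+P(e^\lambda-1))$ gives the stated bound, exactly as in Corollary~\ref{cor:A}(ii).

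The main obstacle is the bookkeeping in the path identity. The charges are indexed by the child $b=a_{s+1}(v)$, but the floor $f^\downarrow_s$ is taken at the parent $a_s(v)$. One must check that the $f^\uparrow$ terms cancel at every step, whether or not the cell divides, and that empty children (with $-\infty$ envelope and zero charge) never lie on the path of $v$, since $v\in C_{s+1}(a_{s+1}(v))$. A further subtlety is that $\ell_t(v)$ could be infinite if some $\pi_s(v)=0$. In that case the bound holds trivially, with $P_t(v)$ bounded below by $-\infty$.
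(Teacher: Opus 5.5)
Your proposal is correct and follows the paper's proof essentially step for step. You compute $\ell_t(v)$ and $\tilde D_t(a_{t+1}(v))$ along the path and apply Corollary~\ref{cor:masterband}, so that the child maxima $f^\uparrow_s(a_{s+1}(v))$ cancel. This gives $\lambda t(\mu^\downarrow_t(v)-P_t(v))\le\Pi_t(v)+\sum_{s\le t}\gamma_s(v)+\ln\frac1\alpha$, after which you divide by $\lambda t$ for (i), and use Jensen for the concave map $p\mapsto\Lambda_p(\lambda)$ and invert for (ii). The only point the paper states more explicitly is that $p_s(v)\le\theta_s(v)$, and hence $P_t(v)\le\mu_t(v)$, holds almost surely for all $v$ at once, because Assumption~\ref{ass:floor} is invoked for only countably many pairs $(s,a)$.
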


\begin{proof}
\emph{Path cost.} We have $a_{s+1}(v)=a_s(v)k_s(v)$. If $s\notin S_t(v)$, then $k_s(v)=\star$ and the
weight is $\pi_s(a\star)=1$; otherwise it is $\pi_s(v)$. Summing $-\ln c_s(a_{s+1}(v))$ over $s\le t$,
\[
  \ell_t(v)=\Pi_t(v)+\sum_{s\le t}\Bigl[\lambda\bigl(f^\uparrow_s(a_{s+1}(v))-f^\downarrow_s(a_s(v))\bigr)
  +\gamma_s(v)\Bigr],
\]
and
\[
  \tilde D_t(a_{t+1}(v))=\sum_{s\le t}\bigl(f^\uparrow_s(a_{s+1}(v))-p_s(v)\bigr).
\]

\emph{High-probability bound.} By Corollary~5.8, with probability at least $1-\alpha$,
$\lambda\tilde D_t(a_{t+1}(v))\le\ell_t(v)+\ln\frac1\alpha$ for all $t$ and $v$. The child maxima
cancel, and
\begin{equation}\label{eq:pathAprime}
  \lambda t\bigl(\mu^\downarrow_t(v)-P_t(v)\bigr)\le\Pi_t(v)+\sum_{s\le t}\gamma_s(v)+\ln\tfrac1\alpha .
\end{equation}
Moreover, almost surely $P_t(v)\le\mu_t(v)$ for all $t$ and $v$. Indeed $v\in C_s(a_s(v))$, so
Assumption~\ref{ass:floor}, applied to the countably many pairs $(s,a)$, gives $p_s(v)\le\theta_s(v)$.

(i) With $\gamma\equiv\lambda^2/8$, divide \eqref{eq:pathAprime} by $\lambda t$.

(ii) With $\gamma_s(v)=K(\lambda,p_s(v))$ we have $\lambda p_s(v)+\gamma_s(v)=\Lambda_{p_s(v)}(\lambda)$.
Then \eqref{eq:pathAprime} and concavity (Lemma~\ref{lem:bern}(i)) give
\[
  \lambda t\mu^\downarrow_t(v)\le\sum_{s\le t}\Lambda_{p_s(v)}(\lambda)+\Pi_t(v)+\ln\tfrac1\alpha
  \le t\Lambda_{P_t(v)}(\lambda)+\Pi_t(v)+\ln\tfrac1\alpha.
\]
Rearrange as in Corollary~\ref{cor:A}(ii).
\end{proof}

\begin{remark}[Two-sided bands: oscillation plus code length]\label{rem:oscillation}
Suppose Assumption~\ref{ass:floor} also holds for $1-f_s$. That is, $f^\uparrow_s(a)$ is $\mathcal{F}_s$-measurable,
and almost surely $\mathbb E_{s-1}[f^\uparrow_s(a)]\ge\theta_s(v)$ for all $v\in C_s(a)$. Suppose further
that $f^\downarrow_s(b)$ is $\mathcal{F}_s$-measurable for every nonempty child $b\in\Alp^s$. Then
Corollary~\ref{cor:templateAprime}(i) applies to $1-f_s$. Its cell minimum along the path of $v$ is
$1-f^\uparrow_s(a_s(v))$, so it yields
\[
  \mu_t(v)\le\mu^\uparrow_t(v)+\frac\lambda8+\frac{\Pi'_t(v)+\ln\frac1\alpha}{\lambda t},
\]
where $\Pi'_t$ is the code length of the split weights used for $1-f_s$.

Take level $\alpha/2$ on each side and the same split weights. Then the band has width
\[
  \frac1t\sum_{s\le t}\operatorname{osc}_s(v)+\frac\lambda4+\frac{2\bigl(\Pi_t(v)+\ln\frac2\alpha\bigr)}{\lambda t},
  \qquad
  \operatorname{osc}_s(v):=f^\uparrow_s(a_s(v))-f^\downarrow_s(a_s(v)).
\]
The first term is the average oscillation of the current observation over the cell of $v$ at the time
it arrives, whereas the rest is a code length term. A refinement is a tradeoff between the two terms. Small cells reduce the
oscillation, and few divisions reduce the code length. A homogeneous refinement removes the
oscillation except at divisions. For thresholds, $\operatorname{osc}_s(v)=\mathbf 1\{s\in \cS_t(v)\}$,
which is the term $N_t(v)/t$ in the width of Theorem~\ref{thm:cdfbandtrivial}.
\end{remark}

\subsection{Template B for nonnegative families}

For nonnegative families that may be unbounded, Hoeffding's lemma is not
available. We replace it by a one-step betting argument \citep{betting}, in the form of \citet{mineiro2022lowerconfidencesequencechanging}, applied cell by cell, which uses only $f_s\ge0$ and does not require
homogeneity.

\begin{proposition}[Template B]\label{prop:B}
Assume that $f_s(v)\ge0$ for all $s$ and $v$, and that
Assumption~\ref{ass:meas} holds. For each nonempty child $b\in\Alp^s$, assume
that $f^\uparrow_s(b):=\sup_{C_{s+1}(b)}f_s$ is $\cF_s$-measurable and finite.
Fix $\lambda>0$. For $a\in\Alp^{s-1}$, let $\Ah_s(a)$ be
$\cF_{s-1}$-measurable with $0\le\lambda\Ah_s(a)<1$, and let the \emph{floor}
$y_s(a)\ge0$ be $\cF_s$-measurable with
\[
  q_s(a):=\E_{s-1}\bigl[y_s(a)\bigr]\le\theta_s(v)
  \qquad\text{for all }v\in C_s(a),\ \text{a.s.\ on }\{C_s(a)\neq\emptyset\}.
\]
Let $\pi_s(ak)\ge0$ be $\cF_s$-measurable with $\sum_{k\in\Alp}\pi_s(ak)\le1$.
Then $\bar f_s(ak):=f^\uparrow_s(ak)-q_s(a)$ is an envelope, and
\[
  c_s(ak):=\pi_s(ak)\,\bigl(1+\lambda(y_s(a)-\Ah_s(a))\bigr)\,
  e^{-\lambda(f^\uparrow_s(ak)-\Ah_s(a))}
\]
is a charge scheme at $\lambda$. For empty children, set
$\bar f_s(ak):=-\infty$ and $c_s(ak):=0$.
\end{proposition}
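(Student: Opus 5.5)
The plan is to verify the two defining properties directly: the envelope property of Definition~\ref{def:envelope} and the charge condition \eqref{eq:star} of Definition~\ref{def:charge}. I fix $s\ge1$ and $a\in\Alp^{s-1}$ and work on $\{C_s(a)\neq\emptyset\}$, which is $\cF_{s-1}$-measurable by Assumption~\ref{ass:meas}. I abbreviate $y:=y_s(a)$, $q:=q_s(a)$ and $\Ah:=\Ah_s(a)$. The key observation is that the unknown floor mean $q$ is absent from the charge itself and enters only through the envelope. In the product $c_s(ak)e^{\lambda\bar f_s(ak)}$ the child maxima cancel, leaving a betting wealth factor whose conditional mean can be computed exactly.

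\emph{Envelope.} For a nonempty child $ak$, $\bar f_s(ak)=f^\uparrow_s(ak)-q$ is $\cF_s$-measurable. This holds because $f^\uparrow_s(ak)$ is $\cF_s$-measurable and finite by assumption, and $q$ is $\cF_{s-1}$-measurable and finite, since $0\le q\le\theta_s(v)<\infty$. For $v\in C_{s+1}(ak)\subseteq C_s(a)$ we have $f_s(v)\le f^\uparrow_s(ak)$ by definition of the supremum. We also have $\theta_s(v)\ge q$ almost surely, by the floor hypothesis. Subtracting gives $f_s(v)-\theta_s(v)\le\bar f_s(ak)$. Empty children receive $-\infty$, as Definition~\ref{def:envelope} requires.

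\emph{Charge condition.} First, the charges are nonnegative and $\cF_s$-measurable. Since $y\ge0$ and $\lambda\Ah<1$, we get $1+\lambda(y-\Ah)\ge1-\lambda\Ah>0$. The factors $\pi_s(ak)$, $y$ and $f^\uparrow_s(ak)$ are $\cF_s$-measurable, and $\Ah$ is $\cF_{s-1}$-measurable. For each nonempty child,
\[
c_s(ak)e^{\lambda\bar f_s(ak)}=\pi_s(ak)\bigl(1+\lambda(y-\Ah)\bigr)e^{\lambda(\Ah-q)} .
\]
Empty children contribute $0$. Summing over $k$ and using $\sum_k\pi_s(ak)\le1$ gives
\[
\sum_k c_s(ak)e^{\lambda\bar f_s(ak)}\le\bigl(1+\lambda(y-\Ah)\bigr)e^{\lambda(\Ah-q)} .
\]
Next I take $\Es$. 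The factor $e^{\lambda(\Ah-q)}$ is $\cF_{s-1}$-measurable and can be pulled out, and $\Es[y]=q$. The expected sum is therefore at most $(1+x)e^{-x}$ with $x:=\lambda(q-\Ah)$. The elementary inequality $1+x\le e^x$ then gives the bound $1$, which is \eqref{eq:star}.

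The only delicate points are integrability and measurability. $\Es[y]=q$ is finite by hypothesis, which justifies the linearity step. The requirement $\lambda\Ah<1$ is exactly what keeps the charges nonnegative, since $y$ may equal $0$. No homogeneity or boundedness of $f_s$ is used, only $f_s\ge0$ through $y\ge0$.
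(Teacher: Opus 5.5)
Your proposal is correct and matches the paper's proof step for step. The envelope comes from $f_s(v)\le f^\uparrow_s(ak)$ and $q\le\theta_s(v)$, and the charge condition comes from the cancellation of the child maxima, pulling out the $\cF_{s-1}$-measurable factor $e^{\lambda(\Ah-q)}$, using $\Es[y]=q$, and applying $(1+x)e^{-x}\le1$; your added checks on measurability, finiteness of $q$ and nonnegativity of the charges just make explicit what the paper leaves implicit.
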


\begin{proof}
Fix $a$, work on $\{C_s(a)\neq\emptyset\}$, and write $q:=q_s(a)$,
$\Ah:=\Ah_s(a)$ and $y:=y_s(a)$.

\emph{Envelope.} For $v\in C_{s+1}(ak)\subseteq C_s(a)$ we have
$f_s(v)-\theta_s(v)\le f^\uparrow_s(ak)-q$.

\emph{Charge condition.} Since $y\ge0$, we have
$1+\lambda(y-\Ah)\ge1-\lambda\Ah>0$, so every charge is nonnegative, and

\begin{align*}
    \sum_kc_s(ak)e^{\lambda\bar f_s(ak)}&=\sum_k\pi_s(ak)\,\left(1+\lambda\left(y-\Ah\right)\right)
  e^{-\lambda\left(f^\uparrow_s(ak)-\Ah\right)}e^{\lambda\left(f^\uparrow_s(ak)-q\right)}\\
    &=\sum_k\pi_s(ak)\,\left(1+\lambda\left(y-\Ah\right)\right)\,
  e^{\lambda(\Ah-q)}\le\bigl(1+\lambda(y-\Ah)\bigr)e^{\lambda(\Ah-q)} 
\end{align*}
The factor $e^{\lambda(\Ah-q)}$ is $\cF_{s-1}$-measurable, and
$\E_{s-1}[1+\lambda(y-\Ah)]=1+\lambda(q-\Ah)$. Hence the conditional
expectation is at most $\bigl(1+\lambda(q-\Ah)\bigr)e^{\lambda(\Ah-q)}\le1$,
using $1+x\le e^x$.
\end{proof}

\begin{remark}
The variable $\Ah_s(a)$ is a \emph{centering}, a predictable guess of the
values that $f_s$ takes on the cell.
\end{remark}

Under Assumption~\ref{ass:floor}, the cell minimum $f^\downarrow_s(a)$ is
always a floor. A larger floor gives a smaller path cost. Such floors can be
available when $f_s$ is not homogeneous (Appendix~\ref{sec:cens}).

For the path of $v$, let
\[
  \pi_s(v):=\pi_s(a_{s+1}(v)),\quad
  y_s(v):=y_s(a_s(v)),\quad
  \Ah_s(v):=\Ah_s(a_s(v)),\quad
  f^\uparrow_s(v):=f^\uparrow_s(a_{s+1}(v))\ge f_s(v).
\]

Given the charge scheme of Proposition~\ref{prop:B}, we obtain the following.

\begin{corollary}\label{cor:B}
Let the conditions of Proposition~\ref{prop:B} hold. Then
\[
  \ell_t(v)=\sum_{s\le t}\left[\ln\frac1{\pi_s(v)}-\ln\bigl(1+\lambda(y_s(v)-\Ah_s(v))\bigr)+\lambda\bigl(f^\uparrow_s(v)-\Ah_s(v)\bigr)\right].
\]
Let $\alpha\in(0,1)$. With probability at least $1-\alpha$, for all $t$ and
$v$, $\lambda\Dt_t(a_{t+1}(v))\le\ell_t(v)+\ln\frac1\alpha$, and hence
\[
  \mu_t(v)\ \ge\ \widehat\mu_t(v)-\frac{\ell_t(v)+\ln\frac1\alpha}{\lambda t}.
\]
\end{corollary}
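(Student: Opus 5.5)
The plan is to mirror the proof of Corollary~\ref{cor:boundedsimple}: compute the path cost by reading off the charge on each edge of the path of $v$, then invoke Corollary~\ref{cor:masterband}, and finally lower bound the envelope sum by the empirical deviation through Lemma~\ref{lem:dom}.

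\emph{Path cost.} Fix $v$ and $s\le t$, and write $a=a_s(v)$ and $a_{s+1}(v)=ak$ with $k=k_s(v)$. By Proposition~\ref{prop:B}, the charge on this edge is
\[
c_s(ak)=\pi_s(ak)\bigl(1+\lambda(y_s(a)-\Ah_s(a))\bigr)e^{-\lambda(f^\uparrow_s(ak)-\Ah_s(a))}.
\]
This charge is strictly positive, because $y_s\ge0$ and $\lambda\Ah_s<1$ give $1+\lambda(y-\Ah)>0$, unless $\pi_s(v)=0$. In that case $\ell_t(v)=+\infty$ and the claimed bound is trivial, consistent with $\Lcode_t\in(-\infty,\infty]$. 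With the path notation $\pi_s(v),y_s(v),\Ah_s(v),f^\uparrow_s(v)$, taking $-\ln$ and summing over $s\le t$ gives exactly the displayed formula for $\ell_t(v)=\ln(1/w_t(a_{t+1}(v)))$. The factor $\pi_s(v)$ appears at every step, including non-dividing ones, since Template B does not single out $\stay$.

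\emph{High-probability bound.} Corollary~\ref{cor:masterband} applies to this charge scheme, since Proposition~\ref{prop:B} certifies it together with the envelope $\bar f_s(ak)=f^\uparrow_s(ak)-q_s(a)$. It gives, with probability at least $1-\alpha$ and for all $t,v$, the inequality $\lambda\Dt_t(a_{t+1}(v))\le \ell_t(v)+\ln\frac1\alpha$. Lemma~\ref{lem:dom} then yields $D_t(v)\le\Dt_t(a_{t+1}(v))$, where $D_t(v)=t(\widehat\mu_t(v)-\mu_t(v))$. Dividing by $\lambda t$ and rearranging gives the stated lower bound on $\mu_t(v)$.

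The main care point is measurability and well-definedness. The envelope must be $\cF_s$-measurable, which holds because $f^\uparrow_s(b)$ is assumed $\cF_s$-measurable and finite and $q_s(a)$ is $\cF_{s-1}$-measurable. The empty-child conventions ($c=0$, $\bar f=-\infty$) must also be handled, but they do not affect the path of $v$, since $v$ always lies in a nonempty child. Nothing beyond bookkeeping is expected to be hard.
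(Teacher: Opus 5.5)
Your proposal is correct and matches the paper's proof. The paper likewise reads off $-\ln c_s(a_{s+1}(v))$ along the path of $v$ to obtain $\ell_t(v)$, then applies Corollary~\ref{cor:masterband} together with Lemma~\ref{lem:dom}; your remarks on positivity of the charges (with $\pi_s(v)=0$ giving $\ell_t(v)=+\infty$) and on $\pi_s(v)$ entering at non-dividing steps are accurate details the paper leaves implicit.
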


\begin{proof}
Read off $-\ln c_s(a_{s+1}(v))$ for each $s\le t$, and apply
Corollary~\ref{cor:masterband} and Lemma~\ref{lem:DleqDt}.
\end{proof}

For homogeneous families the band takes a simpler form. For $\lambda>0$,
$a\ge0$ with $\lambda a<1$, and $x\ge0$, put
\[
  \psi_\lambda(x;a):=a+\frac{\ln\bigl(1+\lambda(x-a)\bigr)}{\lambda}
  =x-\frac{\eta\bigl(\lambda(x-a)\bigr)}{\lambda},
  \qquad \eta(y):=y-\ln(1+y).
\]

\begin{corollary}[Template B for homogeneous families]\label{cor:Bpsi}
Let the conditions of Proposition~\ref{prop:B} hold, and assume in addition
that the refinement scheme is homogeneous for $(f_s)$ and that
Assumption~\ref{ass:floor} holds. Choose $y_s(a):=f_s[a\stay]$, the value of
$f_s$ on the cell, if $a$ does not divide, and any $\cF_s$-measurable
$y_s(a)\in[0,f^\downarrow_s(a)]$ if $a$ divides. The choice $y_s(a)=0$ is
always admissible. Take $\pi_s(a\stay):=1$ when $a$ does not divide. For the
path of $v$, let $\Pi_t(v)$ be as in Corollary~\ref{cor:boundedsimple}, and
let $\check f_s(v):=f_s(v)$ if the cell of $v$ does not divide at step $s$,
and $\check f_s(v):=y_s(a_s(v))$ if it divides. With the floor $0$ at
divisions, this is the $\check f_s$ of Template~A. With probability at least
$1-\alpha$, for all $t$ and $v$,
\[
  \mu_t(v)\ \ge\ \frac1t\sum_{s\le t}\psi_\lambda\bigl(\check f_s(v);\Ah_s(v)\bigr)-\frac{\Pi_t(v)+\ln\frac1\alpha}{\lambda t}.
\]
\end{corollary}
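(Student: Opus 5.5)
We will derive Corollary~\ref{cor:Bpsi} from Corollary~\ref{cor:B} by rewriting the path cost $\ell_t(v)$ under the homogeneity assumption and the stated choice of floors. First we check that the floors are admissible in Proposition~\ref{prop:B}. Each $y_s(a)$ is nonnegative and $\cF_s$-measurable: the event that $a$ divides is $\cF_s$-measurable, and $f_s[a\stay]$ is $\cF_s$-measurable. Moreover $y_s(a)\le f^\downarrow_s(a)$ in both cases, since without a division $f_s\equiv f_s[a\stay]=f^\downarrow_s(a)$ on the cell. Hence $q_s(a)=\Es[y_s(a)]\le\Es[f^\downarrow_s(a)]=p_s(a)\le\theta_s(v)$ for all $v\in C_s(a)$ by Assumption~\ref{ass:floor}. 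The split weights sum to at most one: if $a$ does not divide, only $a\stay$ is nonempty and carries weight $1$; if it divides, $C_{s+1}(a\stay)=\emptyset$, so its charge is zero and the remaining weights sum to at most one.

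Next we read off $\ell_t(v)$ from Corollary~\ref{cor:B} along the path of $v$. By homogeneity, $f^\uparrow_s(v)=f_s[a_{s+1}(v)]=f_s(v)$. For $s\notin\cS_t(v)$ we have $\pi_s(v)=1$ and $y_s(v)=f_s(v)=\check f_s(v)$. For $s\in\cS_t(v)$ we have $y_s(v)=\check f_s(v)$ by definition. Thus in all cases $y_s(v)=\check f_s(v)$, and
\[
\ell_t(v)=\Pi_t(v)+\sum_{s\le t}\Bigl[\lambda\bigl(f_s(v)-\Ah_s(v)\bigr)-\ln\bigl(1+\lambda(\check f_s(v)-\Ah_s(v))\bigr)\Bigr].
\]

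Plugging this into $\mu_t(v)\ge\widehat\mu_t(v)-(\ell_t(v)+\ln\frac1\alpha)/(\lambda t)$, the terms $\frac1t\sum_s f_s(v)$ cancel against $\widehat\mu_t(v)$. What remains is
\[
\frac1t\sum_{s\le t}\Bigl[\Ah_s(v)+\tfrac1\lambda\ln\bigl(1+\lambda(\check f_s(v)-\Ah_s(v))\bigr)\Bigr]=\frac1t\sum_{s\le t}\psi_\lambda\bigl(\check f_s(v);\Ah_s(v)\bigr),
\]
minus $(\Pi_t(v)+\ln\frac1\alpha)/(\lambda t)$, which is the claimed bound.

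The main point requiring care is the admissibility check in the first step. Specifically, we must verify that choosing $y_s(a)$ to depend on whether $a$ divides still yields $q_s(a)\le\theta_s$ and keeps the logarithm well defined. The inequality $1+\lambda(y-\Ah)\ge1-\lambda\Ah>0$ holds because $y\ge0$, so $\psi_\lambda$ is defined. The remark that Template~B with floor $0$ at divisions gives the $\check f_s$ of Template~A is then immediate.
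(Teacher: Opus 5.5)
Your proposal is correct and follows essentially the same route as the paper. Like the paper, you check that the chosen $y_s(a)$ is an admissible floor via $0\le y_s(a)\le f^\downarrow_s(a)$ and Assumption~\ref{ass:floor}, use homogeneity to get $f^\uparrow_s(v)=f_s(v)$ and $y_s(a_s(v))=\check f_s(v)$ along the path, read off $\ell_t(v)$ from Corollary~\ref{cor:B}, and cancel $\lambda\sum_{s\le t}f_s(v)$ against $\lambda t\,\widehat\mu_t(v)$ to obtain the $\psi_\lambda$ form.
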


\begin{proof}
\emph{Admissibility of $y$.} If $a$ does not divide, then $f_s\equiv
f_s[a\stay]$ on $C_s(a)=C_{s+1}(a\stay)$, so $y_s(a)=f^\downarrow_s(a)$. In
both cases $0\le y_s(a)\le f^\downarrow_s(a)$, so $q_s(a)\le p_s(a)$, and
Assumption~\ref{ass:floor} shows that $y_s(a)$ is a floor. It is
$\cF_s$-measurable because the event that $a$ divides is.\\
\emph{Path cost.} By homogeneity, $f^\uparrow_s(v)=f_s(v)$. At non-dividing
steps, $y_s(a_s(v))=f_s(v)=\check f_s(v)$, and at dividing steps
$y_s(a_s(v))=\check f_s(v)$ by definition. Moreover $\pi_s(v)=1$ at
non-dividing steps, so $\sum_{s\le t}\ln(1/\pi_s(v))=\Pi_t(v)$.
Corollary~\ref{cor:B} therefore gives
\[
  \ell_t(v)=\Pi_t(v)+\lambda\sum_{s\le t}f_s(v)
  -\sum_{s\le t}\Bigl[\lambda\Ah_s(v)+\ln\bigl(1+\lambda(\check f_s(v)-\Ah_s(v))\bigr)\Bigr].
\]
Each logarithm is well defined, since $\check f_s(v)\ge0$ and
$\lambda\Ah_s(v)<1$.

\emph{Conclusion.} By Corollary~\ref{cor:B}, with probability at least
$1-\alpha$, for all $t$ and $v$,
$\lambda t\bigl(\widehat\mu_t(v)-\mu_t(v)\bigr)\le\ell_t(v)+\ln\frac1\alpha$.
Since $\lambda\sum_{s\le t}f_s(v)=\lambda t\,\widehat\mu_t(v)$, this term
cancels with the left-hand side, and dividing by $\lambda t$ gives
\[
  \mu_t(v)\ \ge\ \frac1t\sum_{s\le t}\left[\Ah_s(v)+\frac{\ln\bigl(1+\lambda(\check f_s(v)-\Ah_s(v))\bigr)}{\lambda}\right]
  -\frac{\Pi_t(v)+\ln\frac1\alpha}{\lambda t}.
\]
The bracket is $\psi_\lambda\bigl(\check f_s(v);\Ah_s(v)\bigr)$.
\end{proof}

\newpage

\section{THE HALF-OPEN THRESHOLD REFINEMENT}\label{sec:threshold}
% =====================================================================

We start by showing how to interpret Definition~\ref{def:cellsR} as a
refinement scheme according to Definition~\ref{def:scheme}. For the intuition
behind this construction we refer to Figure~\ref{fig:tree}.

After a split, the cell of $v$ is $[l_s(v),X_s)$ for a left split and
$[X_s,r_s(v))$ for a right split. Otherwise it is unchanged. In the language
of Section~\ref{sec:refinementschemes}, the alphabet is $\Alp:=\{\sL,\sR,\stay\}$, the
symbol $k_s(v)$ is $\sL$, $\sR$ or $\stay$ according to whether $v$ has a left
split, a right split or no split at time $s$, and
$a_{s+1}(v):=a_s(v)k_s(v)$ with $a_1:=\emptyset$. Recall that $\cS_t(v)$ is the set of
split times. We write $\cR_t(v)$ and $\cL_t(v)$ for the sets of right and
left split times.

An observation that repeats an earlier value, or that falls outside the cell
of $v$, leaves the cell of $v$ unchanged. So $N_t(v)$ counts only the
observations that land strictly inside the current cell of $v$. Equivalently,
$N_t(v)$ is the number of comparisons made when searching for $v$ in the binary
search tree of the distinct values of $X_{1:t}$, inserted in order of first
appearance, where the search moves right at a node $x$ if $x\le v$.

\begin{lemma}[Structure]\label{lem:threshold}
\begin{enumerate}
\item[(i)] Definition~\ref{def:cellsR} defines a refinement scheme. Its
cells at time $s$ are the half-open intervals between consecutive distinct
values among $-\infty,X_1,\dots,X_{s-1},+\infty$.
\item[(ii)] A cell $[l,r)$ is split at time $s$ if and only if
$X_s\in(l,r)$, and its children are then $[l,X_s)$ and $[X_s,r)$.
\item[(iii)] $X_s$ lies in the interior of no cell of $\cC_{s+1}$, and
$\inf C_{s+1}(v)\le v<\sup C_{s+1}(v)$ for all $v$.
\end{enumerate}
\end{lemma}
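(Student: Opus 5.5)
The plan is to prove the three parts by induction on $s$, with the induction hypothesis being the explicit description in (i). Write $x^{(s)}_1<\dots<x^{(s)}_{m}$ for the distinct values among $X_1,\dots,X_{s-1}$, and set $x_0:=-\infty$, $x_{m+1}:=+\infty$. By Definition~\ref{def:cellsR}, for any $v$ we have $l_s(v)=\max\{x_j:x_j\le v\}$ and $r_s(v)=\min\{x_j:x_j>v\}$. These are consecutive points of this list, with $l_s(v)\le v<r_s(v)$, so $C_s(v)=[l_s(v),r_s(v))$ is one of the half-open gaps $[x_j,x_{j+1})$. Conversely, every $v$ in such a gap produces exactly that gap. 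This gives the cell description in (i) directly, without induction, and also the inequality $\inf C_s(v)=l_s(v)\le v<r_s(v)=\sup C_s(v)$. Applied at time $s+1$, this is the second half of (iii).

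For (ii), I will compare the lists at times $s$ and $s+1$. They differ only if $X_s$ is a new value. If $X_s\notin(l,r)$ for a gap $[l,r)$, then either $X_s$ equals an endpoint (an old value) or lies outside the gap. In both cases no new point enters $[l,r)$, so the gap persists unchanged. If $X_s\in(l,r)$, the gap is replaced by $[l,X_s)$ and $[X_s,r)$. For a probe $v$ in it, $X_s\le v$ gives $[X_s,r)$ (a right split, symbol $\sR$) and $X_s>v$ gives $[l,X_s)$ (a left split, $\sL$). This matches the text after Definition~\ref{def:cellsR}. The first half of (iii) follows as well: $X_s$ is a point of the list at time $s+1$, hence the left endpoint of a gap and in no gap's interior.

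It remains to check (R1)--(R4) for the encoding $a_{s+1}(v)=a_s(v)k_s(v)$, with $k_s(v)\in\{\sL,\sR,\stay\}$ as in the text.
\begin{itemize}
\item (R1) and (R2) hold by construction.
\item (R3) is the measurability step, which I will argue inductively. $\{(\omega,v):a_s=a\}$ is a finite intersection of events of the form $\{l_r(v)<X_r<r_r(v),\,X_r\le v\}$ (or the analogous left or no-split events) for $r<s$. Each is jointly measurable because $l_r,r_r$ are $\max/\min$ of finitely many $\Ft_{r-1}\otimes\mathcal B$-measurable functions, such as $X_q\ind\{X_q\le v\}$ with the $\pm\infty$ conventions.
\item (R4) needs one observation. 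The strings index cells in a way that makes $a\stay$ nonempty exactly when $C_s(a)$ is not split, and in that case $C_{s+1}(a\stay)=C_s(a)$. When $C_s(a)$ is split, no probe emits $\stay$ from $a$, so $C_{s+1}(a\stay)=\emptyset$.
\end{itemize}
The only delicate point is that all probes of a cell receive the same symbol unless the cell is split. This holds because the split condition $l_s(v)<X_s<r_s(v)$ depends on $v$ only through its cell.

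I expect the main nuisance to be (R3) and Assumption~\ref{ass:meas}: handling ties, the infinite endpoints, and making $\{C_s(a)\ne\emptyset\}$ $\Ft_{s-1}$-measurable. For the latter, I would note that $C_s(a)\neq\emptyset$ iff a certain finite pattern of order comparisons among $X_{1:s-1}$ occurs. This is because the path $a$ is determined by the relative order of $v$ and the past observations, and the nonemptiness of a gap-pattern is an event in $\sigma(X_{1:s-1})$.
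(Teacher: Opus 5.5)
Your proposal is correct and follows essentially the paper's argument. The cells are the gaps between consecutive distinct past values, and a cell changes only when $X_s$ lands strictly inside it. It then becomes the new left endpoint for probes on the $\sR$ side and the new right endpoint for probes on the $\sL$ side. Addresses and interval cells determine each other by induction on $s$, which gives (R4) (this is the step you sketch in your remark that all probes of an unsplit cell receive $\stay$). (R3) holds because $\{a_s(v)=a\}$ is a finite Boolean combination of order comparisons among $v$ and $X_{1:s-1}$.

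The only difference is presentational. You read off the gap description directly from the $\max/\min$ form of $l_s,r_s$, and you get the first half of (iii) from $X_s$ joining the list at time $s+1$. The paper phrases both through the per-probe updates $l_{s+1}(v)\ge X_s$ if $X_s\le v$ and $r_{s+1}(v)\le X_s$ if $X_s>v$.
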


\begin{proof}
If $X_s\notin(l_s(v),r_s(v))$, then $l_{s+1}(v)=l_s(v)$, $r_{s+1}(v)=r_s(v)$, and the cell of $v$ is
unchanged. If $X_s\in(l_s(v),r_s(v))$, then $X_s$ becomes the new
left or right endpoint, according to the side of $v$. Hence, by induction on
$s$, the addresses $a_s(v)$ and the cells $C_s(v)$ determine each other,
which gives (i) and (ii), including condition (R4). The event
$\{a_s(v)=a\}$ is determined by finitely many comparisons among
$v,X_1,\dots,X_{s-1}$, which gives (R3). Finally, for every $v$,
$l_{s+1}(v)\ge X_s$ if $X_s\le v$ and $r_{s+1}(v)\le X_s$ if $X_s>v$, so
$X_s\notin(l_{s+1}(v),r_{s+1}(v))$. Moreover
$v\in C_{s+1}(v)=[l_{s+1}(v),r_{s+1}(v))$. This gives (iii).
\end{proof}

Lemma~\ref{lem:threshold}(iii) is what makes the half-open cells convenient.
On a cell $[l,r)$ of $\cC_{s+1}$, the new observation satisfies either
$X_s\le l$ or $X_s\ge r$. Whether $X_s\le v$ can therefore be read off at the
left endpoint, and whether $X_s>v$ at the right endpoint. The same endpoints
give the cell minima. We state this for weighted indicators, the unweighted
case being $W_s\equiv1$.

\begin{lemma}[Weighted threshold families]\label{lem:families}
Let $W_s\ge0$ be $\cF_s$-measurable with $\E_{s-1}W_s<\infty$. Put
\begin{align*}
  F^W_s(x)&:=\E_{s-1}[W_s\ind\{X_s\le x\}],\\
  H^>_s(x)&:=\E_{s-1}[W_s\ind\{X_s>x\}],\qquad
  H_s(x):=\E_{s-1}[W_s\ind\{X_s\ge x\}],
\end{align*}
with $F^W_s(-\infty):=0$ and $H_s(+\infty):=0$. Consider the two families
\[
  f_s(v):=W_s\ind\{X_s\le v\},\qquad
  g_s(v):=W_s\ind\{X_s>v\},
\]
whose conditional means are $\E_{s-1}[f_s(v)]=F^W_s(v)$ and
$\E_{s-1}[g_s(v)]=H^>_s(v)$.
\begin{enumerate}
\item[(i)] \emph{Homogeneity.} The refinement scheme of Definition~\ref{def:cellsR} is homogeneous for both
families. On a cell $[l,r)$ that is not split, $f_s\equiv W_s\ind\{X_s\le l\}$
and $g_s\equiv W_s\ind\{X_s\ge r\}$. On the children of a split cell,
$f_s=0$ on $\sL$ and $f_s=W_s$ on $\sR$, while $g_s=W_s$ on $\sL$ and
$g_s=0$ on $\sR$.
\item[(ii)] \emph{Cell floor.} On a cell $[l,r)$ of $\cC_s$, the cell
minima are $W_s\ind\{X_s\le l\}$ for $f$ and $W_s\ind\{X_s\ge r\}$ for $g$.
Their conditional means are $F^W_s(l)=\inf_{[l,r)}F^W_s$ and
$H_s(r)=\inf_{[l,r)}H^>_s$, and Assumption~\ref{ass:floor} holds for both
families.
\end{enumerate}
\end{lemma}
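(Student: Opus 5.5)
The plan is to read both parts off Lemma~\ref{lem:threshold}, treating the families $f_s$ and $g_s$ separately and using the monotonicity of $x\mapsto W_s\ind\{X_s\le x\}$.

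\emph{Homogeneity.} By Lemma~\ref{lem:threshold}(iii), every cell $[l,r)$ of $\cC_{s+1}$ has $X_s\notin(l,r)$, so either $X_s\le l$ or $X_s\ge r$.
\begin{itemize}
\item If $X_s\le l$, then $\ind\{X_s\le v\}=1$ and $\ind\{X_s>v\}=0$ for every $v\in[l,r)$.
\item If $X_s\ge r$, then $\ind\{X_s\le v\}=0$ and $\ind\{X_s>v\}=1$ for every such $v$, because $v<r\le X_s$.
\end{itemize}
Hence $f_s\equiv W_s\ind\{X_s\le l\}$ and $g_s\equiv W_s\ind\{X_s\ge r\}$ on every cell of $\cC_{s+1}$.

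Both constants are $\cF_s$-measurable, since $l,r$ are $\cF_{s-1}$-measurable and $W_s,X_s$ are $\cF_s$-measurable. For a cell that is not split, $C_{s+1}(a\stay)=C_s(a)=[l,r)$ and the formula above applies directly. For a split cell, Lemma~\ref{lem:threshold}(ii) gives the children $[l,X_s)$ and $[X_s,r)$.
\begin{itemize}
\item On the $\sL$ child $[l,X_s)$ the right endpoint is $X_s$, so $f_s=0$ and $g_s=W_s$.
\item On the $\sR$ child $[X_s,r)$ the left endpoint is $X_s$, so $f_s=W_s$ and $g_s=0$.
\end{itemize}

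\emph{Cell floor.} Fix a cell $[l,r)\in\cC_s$.
\begin{itemize}
\item Since $v\mapsto W_s\ind\{X_s\le v\}$ is nondecreasing and $W_s\ge0$, its infimum over $[l,r)$ is attained at $v=l$ and equals $W_s\ind\{X_s\le l\}$. With the convention $l=-\infty$ this minimum is $0$, and the infimum over $(-\infty,r)$ is indeed $0$ as $v\to-\infty$.
\item Since $v\mapsto W_s\ind\{X_s>v\}$ is nonincreasing, its infimum over $[l,r)$ is the limit as $v\uparrow r$. Because the strict inequality $X_s>v$ holds for all $v<r$ exactly when $X_s\ge r$, this limit equals $W_s\ind\{X_s\ge r\}$. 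This is the one point needing care, and it is why the half-open cells match $H_s$ rather than $H^>_s$. For $r=+\infty$ the minimum is $0$.
\end{itemize}
Both minima are $\cF_s$-measurable, since the endpoints are $\cF_{s-1}$-measurable. They are integrable because $0\le\cdot\le W_s$ and $\E_{s-1}W_s<\infty$. Their conditional means are $F^W_s(l)$ and $H_s(r)$, with the stated conventions at infinite endpoints.

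It remains to verify the floor inequality of Assumption~\ref{ass:floor}.
\begin{itemize}
\item For the $f$ family, monotonicity of $F^W_s$ gives $F^W_s(l)\le F^W_s(v)$ for all $v\in[l,r)$, with equality at $v=l$. So $F^W_s(l)=\inf_{[l,r)}F^W_s$.
\item For the $g$ family, the pointwise inequality $W_s\ind\{X_s\ge r\}\le W_s\ind\{X_s>v\}$ holds for every $v<r$. Taking conditional expectations gives $H_s(r)\le H^>_s(v)$ on $[l,r)$.
\end{itemize}

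The almost-sure qualifier needs one extra step, because versions of $\theta_s$ are chosen per $v$ and there are uncountably many $v$. I would fix monotone, regular-conditional-distribution versions of $F^W_s$ and $H^>_s$, so that the floor inequality holds simultaneously for all $v$. The equality $H_s(r)=\inf_{[l,r)}H^>_s=\lim_{v\uparrow r}H^>_s(v)$ then follows from conditional monotone convergence along $v_n\uparrow r$, since $\ind\{X_s>v_n\}\downarrow\ind\{X_s\ge r\}$ and everything is dominated by $W_s$. I expect this version and measurability bookkeeping to be the main obstacle, though it is a minor one.
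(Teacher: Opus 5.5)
Your proof is correct and follows essentially the same route as the paper. Homogeneity comes from the dichotomy $X_s\le l$ or $X_s\ge r$ in Lemma~\ref{lem:threshold}(iii), the cell minima from monotonicity in $v$, and $H_s(r)=\inf_{[l,r)}H^>_s$ as the left limit at $r$ by dominated convergence with dominating variable $W_s$; your extra step of fixing monotone versions of $F^W_s$ and $H^>_s$, e.g.\ via a regular conditional distribution of $(W_s,X_s)$, so the floor inequality holds simultaneously for all $v$ is care the paper's proof omits but does not change the argument.
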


\begin{proof}
Fix a cell $C=[l,r)$ of $\cC_{s+1}$. By Lemma~\ref{lem:threshold}(iii),
either $X_s\le l$ or $X_s\ge r$. In the first case $X_s\le v$ for all
$v\in C$, so $f_s\equiv W_s$ and $g_s\equiv0$ on $C$. In the second case
$X_s>v$ for all $v\in C$, so $f_s\equiv0$ and $g_s\equiv W_s$. This proves
homogeneity and the values in (i).

For (ii), let $v\in[l,r)$. Then $\ind\{X_s\le l\}\le\ind\{X_s\le v\}$,
with equality at $v=l$. Moreover $\ind\{X_s\ge r\}\le\ind\{X_s>v\}$, with
equality for $v\in[\max(l,X_s),r)$ when $X_s<r$. This gives the cell minima
and Assumption~\ref{ass:floor}. The function $F^W_s$ is nondecreasing, so its
infimum over the cell is $F^W_s(l)$. The function $H^>_s$ is
nonincreasing, so its infimum over the cell is its left limit at $r$, which is
$H_s(r)$ by dominated convergence, with dominating variable $W_s$. On an
unbounded cell the corresponding minimum and infimum are $0$.
\end{proof}

\newpage
% =====================================================================

\section{MISSING PROOFS \S\ref{sec:cdf}}\label{app:cdf}
% =====================================================================

Let $F_s(x):=\PP_{s-1}(X_s\le x)$ and put
\[
  \Fh_t(v):=\frac1t\sum_{s\le t}\ind\{X_s\le v\},\qquad
  \Fb_t(v):=\frac1t\sum_{s\le t}F_s(v).
\]
Throughout this section we use the two families $f_s(v)=\ind\{X_s\le v\}$ and
$g_s(v)=\ind\{X_s>v\}=1-f_s(v)$, whose conditional means are $F_s(v)$ and
$1-F_s(v)$. These are the families of
Lemma~\ref{lem:families} with $W_s\equiv1$. On the half-open refinement, the
hypotheses of Template~A hold for both. Assumption~\ref{ass:meas} holds by
Lemma~\ref{lem:threshold}(i), and homogeneity and Assumption~\ref{ass:floor}
hold by Lemma~\ref{lem:families}. The floors are $p_s(a)=F_s(l)$ for $f$
and $1-F_s(r^-)$ for $g$, where $[l,r)=C_s(a)$. Every division has
exactly two nonempty children.

\cdfbandtrivial*

\begin{proof}[Proof of Theorem~\ref{thm:cdfbandtrivial}]
\emph{Lower bound.} Apply Corollary~\ref{cor:A}(i) to $f$ at level
$\alpha/2$, with $\gamma\equiv\lambda^2/8$ and weights
$\pi(a\sL)=\pi(a\sR)=\frac12$. Then $\Pi_t(v)=N_t(v)\ln2$. Moreover
$f_s(v)=1$ exactly at the $\sR$-moves, so
$\check\mu_t(v)=\Fh_t(v)-\NR_t(v)/t$. The charges are
$c(a\stay)=e^{-\lambda^2/8}$, $c(a\sL)=\frac12e^{-\lambda^2/8}$ and
$c(a\sR)=\frac12e^{-\lambda^2/8-\lambda}$.

\emph{Upper bound.} Apply Corollary~\ref{cor:A}(i) in the same way to
$g_s(v)=1-f_s(v)=\ind\{X_s>v\}$, whose conditional mean is $1-F_s(v)$. Now
$g_s(v)=1$ exactly at the $\sL$-moves, so
$\check\mu^g_t(v)=1-\Fh_t(v)-\NL_t(v)/t$. This gives
$1-\Fb_t(v)\ge1-\Fh_t(v)-\NL_t(v)/t-\rho_t(N_t(v))$.

A union bound over the two events completes the proof.
\end{proof}

\begin{proof}[Proof of Corollary~\ref{cor:mixture}]
\emph{Coverage.} For each $j\ge0$, Theorem~\ref{thm:cdfbandtrivial} with
$(\lambda_j,\alpha_j)$ fails with probability at most $\alpha_j$. Since
$\sum_{j\ge0}\alpha_j=\alpha\sum_{j\ge0}\bigl(\frac1{j+1}-\frac1{j+2}\bigr)=\alpha$,
a union bound shows that, with probability at least $1-\alpha$, all these
bands hold simultaneously for all $t$ and $v$. On this event the maximum of
the lower bounds and the minimum of the upper bounds also bracket
$\Fbar_t(v)$. Since $\Fbar_t(v)\in[0,1]$, the band may also be intersected
with $[0,1]$.

\emph{Width.} Fix $t$ and $v$, and put $N:=N_t(v)$. Since
$\NR_t(v)+\NL_t(v)=N$, the band at $(\lambda_j,\alpha_j)$ has width
\[
  w_j:=\frac Nt+\frac{\lambda_j}4+\frac{2c_j}{\lambda_jt},
  \qquad c_j:=N\ln2+\ln\frac2{\alpha_j},
\]
and the combined band has width at most $\min_jw_j$. Put
\[
  c:=N\ln2+\ln\frac2\alpha+2\ln\Bigl(2+\tfrac12\log_2t\Bigr),
  \qquad \lambda^*:=\sqrt{8c/t}.
\]
If $\lambda^*\ge1$, then $\sqrt{c/t}\ge1/\sqrt8$, and the claim holds because
the width is at most $1$. Otherwise let $j\ge1$ be the smallest index with
$\lambda_j\le\lambda^*$. By minimality $\lambda_{j-1}>\lambda^*$, so
$\lambda_j\in(\lambda^*/2,\lambda^*]$. Moreover
$j\le1+\log_2(1/\lambda^*)=1+\frac12\log_2t-\frac12\log_2(8c)\le\frac12\log_2t$,
because $c>3\ln2$ gives $8c>16$. Hence
$\ln\frac2{\alpha_j}=\ln\frac2\alpha+\ln\bigl((j+1)(j+2)\bigr)
\le\ln\frac2\alpha+2\ln(j+2)$, so $c_j\le c$. Using
$\lambda_j\le\lambda^*$ in the second term and $\lambda_j>\lambda^*/2$ in the
third,
\[
  w_j\ \le\ \frac Nt+\frac{\lambda^*}4+\frac{4c}{\lambda^*t}
  \ =\ \frac Nt+\frac3{\sqrt2}\sqrt{\frac ct}.
\]
Finally $N/t\le\sqrt{N/t}$ because $N\le t$, and
$c=O\bigl(N_t(v)+\ln\ln t+\ln\frac1\alpha\bigr)$. This gives
\eqref{eq:rate}.
\end{proof}

\cdfbandvar*

Let $G_t(v):=\frac1t\sum_{s\le t}F_s(\inf C_s(v))\le\Fb_t(v)$. The proof
shows the lower bound for $G_t(v)$, and hence for $\Fb_t(v)$.

\begin{proof}[Proof of Theorem~\ref{thm:cdfbandvar}]
\emph{Lower bound.} Apply Proposition~\ref{prop:A} to $f$ in exact form,
$\gamma_s(a)=K(\lambda,p_s(a))$, with untilted weights
$\pi(a\sL)=\pi(a\sR)=\frac12$.
By Lemma~\ref{lem:families}(ii), the floor of a cell $[l,r)$ is
$p_s(a)=F_s(l)$, so the charges are
\[
  c(a\stay)=e^{-K(\lambda,F_s(l))},\quad
  c(a\sL)=\tfrac12e^{-K(\lambda,F_s(l))},\quad
  c(a\sR)=\tfrac12e^{-\lambda-K(\lambda,F_s(l))}.
\]
Moreover $\Pi_t(v)=N_t(v)\ln2$,
$\check\mu_t(v)=\widehat q_t(v)$ because $f_s(v)=1$ exactly at $\sR$-moves,
and $P_t(v)=G_t(v)$ because $p_s(v)=F_s(\inf C_s(v))$.
Corollary~\ref{cor:A}(ii) at level $\alpha/2$ gives the lower bound.

\emph{Upper bound.} Apply Proposition~\ref{prop:A} to $g$ in exact form with
$\pi(a\sL)=\pi(a\sR)=\frac12$.
Then $\Pi_t(v)=N_t(v)\ln2$, and
$\check\mu_t(v)=1-\Fh_t(v)-\NL_t(v)/t=1-\widehat q^{\,\prime}_t(v)$ because
$g_s(v)=1$ exactly at $\sL$-moves. Moreover
$P^g_t(v)\le\frac1t\sum_s(1-F_s(v))=1-\Fb_t(v)$.
Corollary~\ref{cor:A}(ii) at level $\alpha/2$ gives
\[
  1-\Fb_t(v)\ \ge\ \frac{e^{\lambda(1-\widehat q^{\,\prime}_t(v))-c_t(N_t(v))}-1}{e^\lambda-1},
\]
which is the upper bound.
\end{proof}

We now give some intuition for why the resulting band adapts to the variance.
\begin{remark}[Why the band adapts to the variance]\label{rem:varadapt}
Write $c_t(v):=\bigl(N_t(v)\ln2+\ln\frac2\alpha\bigr)/t$ and
$\bar\sigma_t^2(v):=G_t(v)\bigl(1-G_t(v)\bigr)$. On the event of
Theorem~\ref{thm:cdfbandvar}, the last display in the proof of
Corollary~\ref{cor:A}(ii), divided by $t$, gives
$\lambda\hat q_t(v)\le\Lambda_{G_t(v)}(\lambda)+c_t(v)$. Subtracting
$\lambda G_t(v)$ from both sides gives
\[
\lambda\bigl(\hat q_t(v)-G_t(v)\bigr)\ \le\ K\bigl(\lambda,G_t(v)\bigr)+c_t(v),
\]
with $K$ as in Lemma~\ref{lem:bern}. By Bennett's bound, applied to
$B-p\le1$ with $B\sim\mathrm{Bernoulli}(p)$,
\[
K(\lambda,p)\le p(1-p)(e^\lambda-1-\lambda)\le(e-2)\lambda^2p(1-p)
\]
for $\lambda\in(0,1]$. Hence, for every $\lambda\in(0,1]$,
\[
\hat q_t(v)-G_t(v)\ \le\ (e-2)\,\lambda\,\bar\sigma_t^2(v)+\frac{c_t(v)}{\lambda}.
\]
The first term is the variance $\bar\sigma_t^2(v)$ of an indicator with mean
$G_t(v)$, multiplied by $\lambda$. The second collects the confidence level
$\ln\frac2\alpha$ and the cost $N_t(v)\ln2$ of uniformity over thresholds,
divided by $\lambda$.

The choice of $\lambda$ trades one off against the other. Choosing
$\lambda=\min\{1,\sqrt{c_t(v)/((e-2)\bar\sigma_t^2(v))}\}$ gives
\[
\hat q_t(v)-G_t(v)\ \le\ 2\max\Bigl\{\sqrt{(e-2)\,\bar\sigma_t^2(v)\,c_t(v)},\ c_t(v)\Bigr\}.
\]
The best $\lambda$ depends on the unknown $G_t(v)$. As in Corollary~\ref{cor:mixture} and Remark~\ref{rem:grid}, the band is therefore computed on the grid $\lambda_j=2^{-j}$ with error probabilities $\alpha_j$. This loses at most a constant factor and replaces $\ln\frac2\alpha$ by
$\ln\frac2{\alpha_j}$ in $c_t(v)$.\\
The variance $\bar\sigma_t^2(v)$ is that of a single Bernoulli variable with the time-averaged mean $G_t(v)=\frac1t\sum_{s\le t}F_s(\inf C_s(v))$, not an
average of conditional variances. Since $p\mapsto p(1-p)$ is concave,
\[
  \bar\sigma_t^2(v)\ \ge\ \frac1t\sum_{s\le t}p_s(v)\bigl(1-p_s(v)\bigr),
  \qquad p_s(v)=F_s\bigl(\inf C_s(v)\bigr),
\]
with equality when the floors $p_s(v)$ do not change with $s$. For i.i.d.\ data, $\bar\sigma_t^2(v)$
is close to $F(v)(1-F(v))$ once the cell of $v$ is small. Under nonstationarity the gap can be large. If the floors alternate between $0$ and $1$, every conditional variance vanishes while
$\bar\sigma_t^2(v)=\frac14$. The band thus adapts to the spread of the averaged distribution at $v$,
which is small in its tails, without estimating it.
\end{remark}

\newpage

\section{IMPORTANCE-WEIGHTED CDF}\label{sec:iw}\label{app:iw}
% =====================================================================

Let $(W_s,X_s)$ be $\cF_s$-measurable with $W_s\ge0$ and
$\E_{s-1}W_s<\infty$. With $F^W_s$, $H^>_s$ and $H_s$ as in
Lemma~\ref{lem:families}, put
\[
  \Fh^W_t(v):=\frac1t\sum_{s\le t}W_s\ind\{X_s\le v\},\qquad
  \Fb^W_t(v):=\frac1t\sum_{s\le t}F^W_s(v),\qquad
  \overline W_t:=\frac1t\sum_{s\le t}W_s .
\]
Fix $\lambda\in(0,1)$. All bounds below come from Corollary~\ref{cor:Bpsi},
applied to $f_s(v)=W_s\ind\{X_s\le v\}$ or to $g_s(v)=W_s\ind\{X_s>v\}$, with
centerings in $[0,1]$, floor $0$ at divisions, $\pi_s(a\stay)=1$ without a
division, and \emph{split weights} $\pi_s(a\sL)+\pi_s(a\sR)\le1$ at a
division, which may depend on $W_s$. In this setting $\check f_s(v)$ is
$f_s(v)$ with the observations that split the cell of $v$ counted as missing.
Since $f_s(v)=0$ at a left split, $\check f_s(v)$ differs from $f_s(v)$ only
at right splits. We write $\check g_s$ and $\Pi^>_t$ for the same
quantities for $g$.

\begin{theorem}\label{thm:iw-lower}
Let $\alpha\in(0,1)$, let $\Ah$ be a centering, and fix split weights. With
probability at least $1-\alpha$, simultaneously for all $t\ge1$ and
$v\in\mathbb R$,
\[
  \Fb^W_t(v)\ \ge\ G^W_t(v)\ \ge\ \frac1t\sum_{s\le t}\psi_\lambda\bigl(\check f_s(v);\Ah_s(v)\bigr)-\frac{\Pi_t(v)+\ln\frac1\alpha}{\lambda t},
\]
where $G^W_t(v):=\frac1t\sum_{s\le t}F^W_s(\inf C_s(v))$.
\end{theorem}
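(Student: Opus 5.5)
The plan is to obtain Theorem~\ref{thm:iw-lower} as a direct instance of Corollary~\ref{cor:Bpsi}, applied to the family $f_s(v)=W_s\ind\{X_s\le v\}$ on the half-open threshold refinement of Definition~\ref{def:cellsR}. First I verify the hypotheses of Proposition~\ref{prop:B} and Corollary~\ref{cor:Bpsi}.

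\begin{itemize}
\item Nonnegativity of $f_s$ is immediate from $W_s\ge0$.
\item Assumption~\ref{ass:meas} holds by Lemma~\ref{lem:threshold}(i).
\item Homogeneity holds by Lemma~\ref{lem:families}(i). In particular, on each nonempty child $f^\uparrow_s(ak)=f_s[ak]\in\{0,W_s,W_s\ind\{X_s\le l\}\}$, which is $\cF_s$-measurable and finite.
\item Assumption~\ref{ass:floor} holds by Lemma~\ref{lem:families}(ii). The cell minimum on $C_s(a)=[l,r)$ is $W_s\ind\{X_s\le l\}$, with conditional mean $F^W_s(l)=\inf_{[l,r)}F^W_s$ and integrable since $\E_{s-1}W_s<\infty$.
\end{itemize}

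The centering $\Ah_s(a)\in[0,1]$ is $\cF_{s-1}$-measurable, so $0\le\lambda\Ah_s(a)<1$ because $\lambda<1$. The split weights satisfy $\pi_s(a\sL)+\pi_s(a\sR)\le1$, with $\pi_s(a\stay)=1$ when there is no division, so $\sum_k\pi_s(ak)\le1$ in either case. Choosing the floor $y_s(a)=0$ at divisions is admissible by Corollary~\ref{cor:Bpsi}.

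Corollary~\ref{cor:Bpsi} then yields, with probability at least $1-\alpha$ and for all $t,v$,
\[
\mu_t(v)\ \ge\ \frac1t\sum_{s\le t}\psi_\lambda\bigl(\check f_s(v);\Ah_s(v)\bigr)-\frac{\Pi_t(v)+\ln\frac1\alpha}{\lambda t}.
\]
This statement is in terms of $\mu_t=\Fb^W_t$. The theorem asks for the sharper intermediate quantity $G^W_t(v)$. To get it I reopen the proof chain rather than use the corollary as a black box. Corollary~\ref{cor:B} actually bounds $\lambda\Dt_t(a_{t+1}(v))$, and its envelope is $\bar f_s(ak)=f^\uparrow_s(ak)-q_s(a)$. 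Along the path of $v$ this gives
\[
\Dt_t(a_{t+1}(v))=t\widehat\mu_t(v)-\sum_{s\le t}q_s(a_s(v)),
\qquad
q_s(a_s(v))=\E_{s-1}[y_s(a_s(v))].
\]
Since $y_s\le f^\downarrow_s$, we have $q_s(a_s(v))\le p_s(a_s(v))=F^W_s(\inf C_s(v))$. Summing, $\frac1t\sum_s q_s\le G^W_t(v)$. Redoing the cancellation in the proof of Corollary~\ref{cor:Bpsi} with $\frac1t\sum_s q_s$ in place of $\mu_t(v)$ gives the same $\psi_\lambda$ lower bound for $\frac1t\sum_s q_s$, and hence for $G^W_t(v)$.

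The remaining inequality $G^W_t(v)\le\Fb^W_t(v)$ holds pointwise. Lemma~\ref{lem:threshold}(iii) gives $\inf C_s(v)\le v$, and $F^W_s$ is nondecreasing. On unbounded cells both sides use the convention $F^W_s(-\infty)=0$.

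The main obstacle is bookkeeping rather than any new inequality. I need to check that the lower bound is really on $\frac1t\sum q_s$, that is, that the master band controls $\Dt_t$ and not only $D_t$. I also need that $\psi_\lambda(\check f_s;\Ah_s)$ is well defined at dividing steps, where $\check f_s=0$ and $1-\lambda\Ah_s>0$.
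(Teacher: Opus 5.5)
Your proposal is correct and follows the paper's proof. Both verify the hypotheses of Corollary~\ref{cor:Bpsi} through Lemmas~\ref{lem:threshold} and~\ref{lem:families}, rerun its cancellation on the envelope sum $\Dt_t$ (not $D_t$) to bound $G^W_t$, and finish with $\inf C_s(v)\le v$ and the monotonicity of $F^W_s$. The only cosmetic difference is that the paper notes the floor $0$ at a division is exactly the cell minimum, so $q_s=F^W_s(\inf C_s(v))$ and $\Dt_t(a_{t+1}(v))=t(\Fh^W_t(v)-G^W_t(v))$ with equality, whereas your inequality $q_s\le p_s$ serves equally well.
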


\begin{proof}
The hypotheses of Corollary~\ref{cor:Bpsi} hold by
Lemmas~\ref{lem:threshold} and~\ref{lem:families}, and $\lambda\Ah<1$. The
cell minimum $W_s\ind\{X_s\le\inf C_s(a)\}$ vanishes on a division, so the
floor $0$ there is the cell minimum. By homogeneity, the envelope of
Proposition~\ref{prop:B} on the path of $v$ is
$W_s\ind\{X_s\le v\}-F^W_s(\inf C_s(v))$, so
$\Dt_t(a_{t+1}(v))=t(\Fh^W_t(v)-G^W_t(v))$. The proof of
Corollary~\ref{cor:Bpsi} therefore gives the second inequality. The first
holds because $\inf C_s(v)\le v$ and $F^W_s$ is nondecreasing.
\end{proof}

\begin{remark}[Choice of centering]\label{rem:centering}
Any $\Ah_s(v)\in[0,1]$ that is $\cF_{s-1}$-measurable and constant on the
cells of $\cC_s$ is a valid centering in Theorems~\ref{thm:iw-lower}
and~\ref{thm:iw-two}. Since $\psi_\lambda(x;a)=x-\eta(\lambda(x-a))/\lambda$,
the lower bound of Theorem~\ref{thm:iw-lower} equals
\[
  \frac1t\sum_{s\le t}\check f_s(v)
  -\frac1{\lambda t}\sum_{s\le t}\eta\bigl(\lambda(\check f_s(v)-\Ah_s(v))\bigr)
  -\frac{\Pi_t(v)+\ln\frac1\alpha}{\lambda t}.
\]
The centering enters only through the middle term, which is nonnegative. By
Lemma~\ref{lem:fan}, this term is at most
$\kappa(\lambda)\frac1t\sum_{s\le t}(\check f_s(v)-\Ah_s(v))^2$, where
$\lambda/2\le\kappa(\lambda)\le\lambda/(2(1-\lambda))$. 
The band is therefore tighter the better $\Ah_s(v)$ predicts $\check f_s(v)$. Natural choices are the following.

\emph{No centering.} The choice $\Ah\equiv0$ is always a centering. Since $\psi_\lambda(x;0)=\ln(1+\lambda x)/\lambda$, the lower bound of Theorem~\ref{thm:iw-lower} then reads
\[
  \frac1{\lambda t}\sum_{s\le t}\ln\bigl(1+\lambda\check f_s(v)\bigr)
  -\frac{\Pi_t(v)+\ln\frac1\alpha}{\lambda t}.
\]
\emph{Running mean.} Put $\Ah_1(v):=0$ and, for $s\ge2$,
\[
  \Ah_s(v):=\min\left\{1,\ \frac1{s-1}\sum_{r<s}W_r\ind\{X_r\le v\}\right\}.
\]
For $r<s$ the indicator $\ind\{X_r\le v\}$ is constant on every cell of
$\cC_s$, so this is a centering. It estimates $\Fb^W_{s-1}(v)$, and the
clipping keeps $\lambda\Ah_s(v)<1$.

\emph{Discounted running mean.} Under drift the running mean lags behind. For
a fixed $\rho\in(0,1)$, put $\Ah_1(v):=0$ and, for $s\ge2$,
\[
  \Ah_s(v):=\min\left\{1,\ \frac{\sum_{r<s}\rho^{s-1-r}W_r\ind\{X_r\le v\}}{\sum_{r<s}\rho^{s-1-r}}\right\}.
\]
This is again a centering, and it tracks the recent conditional means.

\emph{Prior guess.} If a deterministic or $\cF_{s-1}$-measurable guess $F_0$ of
the target CDF is available, for instance from a model or a pilot study, then
$\Ah_s(v):=\min\{1,F_0(\inf C_s(v))\}$ is a centering. It is evaluated at the
left endpoint of the cell, because a centering must be constant on cells.

\emph{Oracle.} The centering $A_s(v):=F^W_s(\inf C_s(v))$ cannot be computed,
but it gives the explicit band of Corollary~\ref{cor:iw-explicit}, in which
Fan's inequality bounds the centering cost.

For the upper bound of Theorem~\ref{thm:iw-two}, the same constructions apply
with $W_r\ind\{X_r>v\}$ in place of $W_r\ind\{X_r\le v\}$. Several centerings
can also be combined. Running the theorem with $K$ centerings at level
$\alpha/K$ each and taking the largest lower bound only replaces
$\ln\frac1\alpha$ by $\ln\frac K\alpha$.
\end{remark}

\begin{theorem}\label{thm:iw-two}
Assume in addition that $\E_{s-1}W_s=1$ for all $s$. Let $\Ah$ and $\Ah^>$ be
centerings, and fix split weights for $f$ and for $g$. With probability at
least $1-\alpha$, simultaneously for all $t$ and $v$,
\[
  \frac1t\sum_{s\le t}\psi_\lambda\bigl(\check f_s(v);\Ah_s(v)\bigr)-\frac{\Pi_t(v)+\ln\frac2\alpha}{\lambda t}
  \ \le\ \Fb^W_t(v)\ \le\
  1-\frac1t\sum_{s\le t}\psi_\lambda\bigl(\check g_s(v);\Ah^>_s(v)\bigr)+\frac{\Pi^>_t(v)+\ln\frac2\alpha}{\lambda t}.
\]
\end{theorem}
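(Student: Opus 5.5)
The plan is to obtain each side from Theorem~\ref{thm:iw-lower}, or the same argument applied to a second family, at level $\alpha/2$, and then take a union bound.

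\emph{Lower bound.} Apply Theorem~\ref{thm:iw-lower} directly with centering $\Ah$, the split weights fixed for $f$, and miscoverage $\alpha/2$. This gives, with probability at least $1-\alpha/2$, simultaneously for all $t,v$,
\[
  \Fb^W_t(v)\ \ge\ \frac1t\sum_{s\le t}\psi_\lambda\bigl(\check f_s(v);\Ah_s(v)\bigr)-\frac{\Pi_t(v)+\ln\frac2\alpha}{\lambda t}.
\]
This side needs no use of $\E_{s-1}W_s=1$.

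\emph{Upper bound.} Repeat the argument of Theorem~\ref{thm:iw-lower} for the family $g_s(v)=W_s\ind\{X_s>v\}\ge0$. Lemma~\ref{lem:families} supplies the two ingredients: homogeneity on the half-open cells, and the cell floor $W_s\ind\{X_s\ge r\}$ with conditional mean $H_s(r)=\inf_{[l,r)}H^>_s$. A split cell has an $\sR$-child on which $g_s=0$, so the floor $0$ at divisions is admissible. Corollary~\ref{cor:Bpsi} with centering $\Ah^>$ (which satisfies $\lambda\Ah^><1$ because $\Ah^>\in[0,1]$ and $\lambda<1$), the split weights fixed for $g$, and level $\alpha/2$ then yields, with probability at least $1-\alpha/2$,
\[
  \frac1t\sum_{s\le t}H^>_s(v)\ \ge\ \frac1t\sum_{s\le t}\psi_\lambda\bigl(\check g_s(v);\Ah^>_s(v)\bigr)-\frac{\Pi^>_t(v)+\ln\frac2\alpha}{\lambda t}.
\]
Now invoke $\E_{s-1}W_s=1$, which gives $H^>_s(v)=\E_{s-1}W_s-F^W_s(v)=1-F^W_s(v)$. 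Averaging over $s$ turns the left side into $1-\Fb^W_t(v)$, and rearranging gives the stated upper bound.

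A union bound over the two events, each of probability at least $1-\alpha/2$, completes the proof. The only delicate step is verifying that Corollary~\ref{cor:Bpsi} applies to $g$. The cell infimum of $H^>_s$ is attained only as a left limit at $r$, which Lemma~\ref{lem:families}(ii) handles via dominated convergence. On unbounded cells the floors are $0$, which matches the convention $H_s(+\infty)=0$.
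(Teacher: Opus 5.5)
Your proof is correct and follows the paper's argument. The lower side is Theorem~\ref{thm:iw-lower} at level $\alpha/2$; the upper side is Corollary~\ref{cor:Bpsi} applied to $g_s$ via Lemma~\ref{lem:families}, then the identity $H^>_s=1-F^W_s$ from $\E_{s-1}W_s=1$, and a union bound. The only cosmetic difference is that the paper first bounds the average of the floor means $H_s(\sup C_s(v))$ and then uses $H_s(\sup C_s(v))\le H^>_s(v)$, whereas you read the bound on $\frac1t\sum_{s\le t}H^>_s(v)$ directly off Corollary~\ref{cor:Bpsi}, which already absorbs that step.
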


\begin{proof}
The lower bound is Theorem~\ref{thm:iw-lower} at level $\alpha/2$. For the
upper bound, apply Corollary~\ref{cor:Bpsi} in the same way to $g_s$, whose
conditional mean is $H^>_s$, at level $\alpha/2$. By
Lemma~\ref{lem:families}, its floor is the cell minimum
$W_s\ind\{X_s\ge\sup C_s(a)\}$, which is $0$ on a division. As in the
previous proof, this gives
\[
  G^{W,>}_t(v)\ \ge\ \frac1t\sum_{s\le t}\psi_\lambda\bigl(\check g_s(v);\Ah^>_s(v)\bigr)-\frac{\Pi^>_t(v)+\ln\frac2\alpha}{\lambda t},
  \qquad G^{W,>}_t(v):=\frac1t\sum_{s\le t}H_s(\sup C_s(v)).
\]
Since $\sup C_s(v)>v$, we have $H_s(\sup C_s(v))\le H^>_s(v)$, and since
$\E_{s-1}W_s=1$, we have $H^>_s(v)=1-F^W_s(v)$. So
$G^{W,>}_t(v)\le1-\Fb^W_t(v)$. A union bound completes the proof.
\end{proof}

\begin{remark}[Unweighted case]\label{rem:iw-unweighted}
For $W_s\equiv1$, the condition $\E_{s-1}W_s=1$ holds and $\Fb^W_t$ is the
running-average CDF $\Fb_t$ of Section~\ref{sec:cdf}. With the split weights
$\frac12,\frac12$, Theorem~\ref{thm:iw-two} then gives a further band for
$\Fb_t$. That is, with probability at least $1-\alpha$, for all $t$ and $v$,
\[
  \frac1t\sum_{s\le t}\psi_\lambda\bigl(\check f_s(v);\Ah_s(v)\bigr)-\rho_t(v)
  \ \le\ \Fb_t(v)\ \le\
  1-\frac1t\sum_{s\le t}\psi_\lambda\bigl(\check g_s(v);\Ah^>_s(v)\bigr)+\rho_t(v),
\]
where $\rho_t(v):=\bigl(N_t(v)\ln2+\ln\frac2\alpha\bigr)/(\lambda t)$,
$\check f_s(v)=\ind\{X_s\le v\}$ and $\check g_s(v)=\ind\{X_s>v\}$ if $v$ is
not split at time $s$, and $\check f_s(v)=\check g_s(v)=0$ otherwise. Its
width depends on how well the centering predicts $\check f_s(v)$
(Remark~\ref{rem:centering}).
\end{remark}
\begin{lemma}[{\citealp[proof of Lemma~4.1]{fan2015exponentialinequalitiesmartingalesapplications}}]\label{lem:fan}
For $\lambda\in[0,1)$ and $\xi\ge-1$,
$0\le\eta(\lambda\xi)\le\psi_E(\lambda)\,\xi^2$, where
$\psi_E(\lambda):=-\lambda-\ln(1-\lambda)$.
\end{lemma}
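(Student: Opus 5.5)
The plan is to reduce the inequality to a monotonicity statement about the ratio $h(x):=\eta(x)/x^2$ on $(-1,\infty)$, extended by $h(0):=\tfrac12$, and then evaluate $h$ at the worst admissible point $x=-\lambda$.

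\emph{Trivial cases and nonnegativity.} If $\lambda=0$ or $\xi=0$, then $\lambda\xi=0$. Both sides vanish, since $\eta(0)=0$ and $\psi_E(0)=0$. Otherwise $x:=\lambda\xi\ge-\lambda>-1$, so $\eta(x)$ is well defined. Moreover $\eta(x)\ge0$ by the elementary inequality $\ln(1+x)\le x$ for $x>-1$.

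\emph{Integral representation.} The function $\eta$ is smooth on $(-1,\infty)$ with $\eta(0)=\eta'(0)=0$ and $\eta''(x)=1/(1+x)^2$. Taylor's formula with integral remainder therefore gives
\[
  \eta(x)=x^2\int_0^1\frac{1-u}{(1+ux)^2}\,du ,
  \qquad\text{so}\qquad
  h(x)=\int_0^1\frac{1-u}{(1+ux)^2}\,du
\]
for $x>-1$, including $x=0$. For each $u\in[0,1]$ we have $1+ux>0$ on $(-1,\infty)$, and the integrand is nonincreasing in $x$. Hence $h$ is nonincreasing on $(-1,\infty)$. This monotonicity is the only real step. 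I expect no obstacle beyond checking that $1+ux$ stays positive, which holds because $u\le1$ and $x>-1$.

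\emph{Conclusion.} Since $\lambda\xi\ge-\lambda$ and $h$ is nonincreasing, $h(\lambda\xi)\le h(-\lambda)$. Evaluating at $-\lambda$ gives
\[
  h(-\lambda)=\frac{-\lambda-\ln(1-\lambda)}{\lambda^2}=\frac{\psi_E(\lambda)}{\lambda^2}.
\]
Therefore
\[
  \eta(\lambda\xi)=h(\lambda\xi)\,\lambda^2\xi^2\le\psi_E(\lambda)\,\xi^2,
\]
which together with nonnegativity proves the lemma.
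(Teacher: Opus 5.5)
Your proof is correct and follows essentially the same route as the paper. Both write $\eta(x)=x^2h(x)$, where $h$ is an integral whose integrand is nonincreasing in $x$ on $(-1,\infty)$, and then bound $h(\lambda\xi)\le h(-\lambda)=\psi_E(\lambda)/\lambda^2$; the paper obtains its kernel from $\eta(y)=\int_0^y\frac{u}{1+u}\,du$, giving $k(y)=\int_0^1\frac{w}{1+wy}\,dw$, whereas you use the second-order Taylor remainder $\int_0^1\frac{1-u}{(1+ux)^2}\,du$, which is the same function.
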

\begin{proof}
This is inequality~(4.11) in the proof of Lemma~4.1 of
\citet{fan2015exponentialinequalitiesmartingalesapplications}. We include
their argument for completeness. For $y>-1$,
\[
  \eta(y)=\int_0^y\frac{u}{1+u}\,du=y^2\,k(y),\qquad
  k(y):=\int_0^1\frac{w}{1+wy}\,dw,
\]
by the substitution $u=wy$. The function $k$ is nonnegative and nonincreasing
on $(-1,\infty)$. Hence, for $\xi\ge-1$ and $\lambda\in(0,1)$,
\[
  \eta(\lambda\xi)=\lambda^2\xi^2k(\lambda\xi)\le\lambda^2\xi^2k(-\lambda)=\xi^2\,\eta(-\lambda)=\xi^2\psi_E(\lambda).
\]
For $\lambda=0$ both sides vanish.
\end{proof}

\begin{corollary}\label{cor:iw-explicit}
Assume $\E_{s-1}W_s=1$, fix split weights as in Theorem~\ref{thm:iw-two}, and
let $\kappa(\lambda):=\psi_E(\lambda)/\lambda$. Put
\[
  \check F^W_t(v):=\frac1t\sum_{s\le t}\check f_s(v),\qquad
  \widehat V_t(v):=\sum_{s\le t}\check f_s(v)^2,
\]
and define $\check F^{W,>}_t(v)$ and $\widehat V^>_t(v)$ in the same way from
$\check g_s(v)$. With probability at least $1-\alpha$, simultaneously for all $t$
and $v$,
\begin{align*}
  \Fb^W_t(v)&\ge\frac1{1+\kappa(\lambda)}\Bigl[\check F^W_t(v)-\frac{\kappa(\lambda)\widehat V_t(v)}t-\frac{\Pi_t(v)+\ln\frac2\alpha}{\lambda t}\Bigr],\\
  \Fb^W_t(v)&\le1-\frac1{1+\kappa(\lambda)}\Bigl[\check F^{W,>}_t(v)-\frac{\kappa(\lambda)\widehat V^>_t(v)}t-\frac{\Pi^>_t(v)+\ln\frac2\alpha}{\lambda t}\Bigr].
\end{align*}
\end{corollary}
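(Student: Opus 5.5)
We start from Theorem~\ref{thm:iw-two} with the oracle centerings $\Ah_s(v):=A_s(v)=F^W_s(\inf C_s(v))$ and $\Ah^>_s(v):=A^>_s(v)=H_s(\sup C_s(v))$. These are admissible. They are $\cF_{s-1}$-measurable and constant on the cells of $\cC_s$. They lie in $[0,1]$ because $\E_{s-1}W_s=1$, and $\lambda<1$ gives $\lambda\Ah<1$. The only task is then to bound $\psi_\lambda$ from below by a quantity free of the unknown centering.

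We write $\psi_\lambda(x;a)=x-\eta(\lambda(x-a))/\lambda$ and apply Lemma~\ref{lem:fan} with $\xi:=\check f_s(v)-A_s(v)$. This $\xi$ satisfies $\xi\ge-1$, since $\check f_s\ge0$ and $A_s\le1$. The lemma gives
\[
\eta\bigl(\lambda(\check f_s-A_s)\bigr)\le\psi_E(\lambda)(\check f_s-A_s)^2 .
\]
Expanding the square and discarding the nonnegative term $A_s^2$ would go the wrong way, so we bound it instead by
\[
(\check f_s-A_s)^2\le\check f_s^2+A_s^2\le\check f_s^2+A_s .
\]
This uses $\check f_s A_s\ge0$ and $A_s\le1$. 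Summing over $s$ and dividing by $\lambda t$ gives
\[
\frac1t\sum_s\psi_\lambda(\check f_s;A_s)\ge\check F^W_t-\kappa(\lambda)\frac{\widehat V_t}{t}-\kappa(\lambda)G^W_t(v),
\]
where $G^W_t=\frac1t\sum_sA_s(v)$.

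On the event of Theorem~\ref{thm:iw-two}, specifically the intermediate inequality from Theorem~\ref{thm:iw-lower}, this yields
\[
G^W_t\ge\check F^W_t-\kappa\widehat V_t/t-\kappa G^W_t-\frac{\Pi_t+\ln\frac2\alpha}{\lambda t}.
\]
Moving $\kappa G^W_t$ to the left gives $(1+\kappa)G^W_t\ge[\cdots]$, and then $\Fb^W_t\ge G^W_t$ gives the lower bound. The upper bound is the same argument for $g$ with $A^>_s$. From the proof of Theorem~\ref{thm:iw-two} we have
\[
(1+\kappa)G^{W,>}_t\ge\check F^{W,>}_t-\kappa\widehat V^>_t/t-\frac{\Pi^>_t+\ln\frac2\alpha}{\lambda t},
\]
and $G^{W,>}_t\le1-\Fb^W_t$ finishes it. Both sides hold on one event of probability $1-\alpha$ by the union bound already built into Theorem~\ref{thm:iw-two}.

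The main obstacle is the step that handles the cross term. It is the step where we keep the appearance of the centering on the right-hand side linear, through $A_s^2\le A_s$, so that it can be absorbed into the factor $1+\kappa$. This is exactly why the bound works with $G^W_t$ rather than directly with $\Fb^W_t$. A second point to check is that Theorem~\ref{thm:iw-two} in fact delivers the statement for $G^W_t$ and $G^{W,>}_t$. Its proof does, through Theorem~\ref{thm:iw-lower}.
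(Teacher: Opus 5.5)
Your proposal is correct and follows the paper's proof essentially step for step. The shared steps are: the oracle centerings $A_s(v)=F^W_s(\inf C_s(v))$ and $A^>_s(v)=H_s(\sup C_s(v))$; Lemma~\ref{lem:fan} with $\xi_s\ge-1$; the bound $\xi_s^2\le\check f_s(v)^2+A_s(v)^2\le\check f_s(v)^2+A_s(v)$; absorbing $\kappa(\lambda)\,t\,G^W_t(v)$ into the factor $1+\kappa(\lambda)$; and finally using $\Fb^W_t\ge G^W_t$ and $G^{W,>}_t\le 1-\Fb^W_t$ with a union bound. The only cosmetic slip is that after summing you divide by $t$, not $\lambda t$, since the $1/\lambda$ already sits inside $\psi_\lambda$.
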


\begin{proof}
\emph{Lower bound.} Use the oracle centering $A_s(v):=F^W_s(\inf C_s(v))$.
It lies in $[0,1]$ because $F^W_s\le\E_{s-1}W_s=1$, it is
$\cF_{s-1}$-measurable, and it depends on $v$ only through the cell
$C_s(v)$, so it is a centering. With this choice,
\[
  G^W_t(v)=\frac1t\sum_{s\le t}A_s(v).
\]
Theorem~\ref{thm:iw-lower} at level $\alpha/2$ therefore gives, on an event of
probability at least $1-\alpha/2$, for all $t$ and $v$,
\begin{equation}\label{eq:iwx-start}
  \sum_{s\le t}A_s(v)\ \ge\ \sum_{s\le t}\psi_\lambda\bigl(\check f_s(v);A_s(v)\bigr)
  -\frac{\Pi_t(v)+\ln\frac2\alpha}{\lambda}.
\end{equation}

We now bound each term $\psi_\lambda(\check f_s(v);A_s(v))$ from below. By
definition of $\psi_\lambda$,
\[
  \psi_\lambda\bigl(\check f_s(v);A_s(v)\bigr)
  =\check f_s(v)-\frac1\lambda\,\eta(\lambda\xi_s),
  \qquad \xi_s:=\check f_s(v)-A_s(v).
\]
Since $\check f_s(v)\ge0$ and $A_s(v)\le1$, we have $\xi_s\ge-1$, so
Lemma~\ref{lem:fan} gives $\eta(\lambda\xi_s)\le\psi_E(\lambda)\xi_s^2$.
Moreover, since $\check f_s(v)$ and $A_s(v)$ are both nonnegative,
\[
  \xi_s^2=\check f_s(v)^2-2\check f_s(v)A_s(v)+A_s(v)^2
  \ \le\ \check f_s(v)^2+A_s(v)^2 .
\]
With $\kappa(\lambda)=\psi_E(\lambda)/\lambda$, this yields
\[
  \psi_\lambda\bigl(\check f_s(v);A_s(v)\bigr)
  \ \ge\ \check f_s(v)-\kappa(\lambda)\bigl(\check f_s(v)^2+A_s(v)^2\bigr).
\]
Summing over $s\le t$ and using the definitions of $\check F^W_t(v)$ and
$\widehat V_t(v)$,
\[
  \sum_{s\le t}\psi_\lambda\bigl(\check f_s(v);A_s(v)\bigr)
  \ \ge\ t\,\check F^W_t(v)-\kappa(\lambda)\widehat V_t(v)
  -\kappa(\lambda)\sum_{s\le t}A_s(v)^2 .
\]
Since $A_s(v)\in[0,1]$, we have $A_s(v)^2\le A_s(v)$, and hence
$\sum_{s\le t}A_s(v)^2\le\sum_{s\le t}A_s(v)=t\,G^W_t(v)$. Inserting both
bounds into \eqref{eq:iwx-start} gives
\[
  t\,G^W_t(v)\ \ge\ t\,\check F^W_t(v)-\kappa(\lambda)\widehat V_t(v)
  -\kappa(\lambda)\,t\,G^W_t(v)-\frac{\Pi_t(v)+\ln\frac2\alpha}{\lambda}.
\]
Moving the term
$\kappa(\lambda)\,t\,G^W_t(v)$ to the left and dividing by $t(1+\kappa(\lambda))>0$ gives
\[
  G^W_t(v)\ \ge\ \frac1{1+\kappa(\lambda)}\Bigl[\check F^W_t(v)
  -\frac{\kappa(\lambda)\widehat V_t(v)}t
  -\frac{\Pi_t(v)+\ln\frac2\alpha}{\lambda t}\Bigr].
\]
Finally, $\Fb^W_t(v)\ge G^W_t(v)$ by Theorem~\ref{thm:iw-lower}, which is the
lower bound.

\emph{Upper bound.} Argue in the same way for $g$, with the oracle centering
$A^>_s(v):=H_s(\sup C_s(v))=\inf_{C_s(v)}H^>_s\in[0,1]$, and use
$G^{W,>}_t(v)\le1-\Fb^W_t(v)$ from the proof of Theorem~\ref{thm:iw-two}. A
union bound completes the proof.
\end{proof}

\begin{proof}[Proof of Theorem~\ref{thm:wfv}]
Apply Corollary~\ref{cor:iw-explicit} with the split weights $\pi_s(a\sL)=\pi_s(a\sR)=\frac{1}{2}$, giving $\Pi_t(v)=N_t(v)\ln2$. This gives the band.
\end{proof}

\newpage

\section{RIGHT-CENSORED DATA}\label{sec:cens}\label{app:censored}
% =====================================================================

At round $s$, a lifetime $T_s>0$ and a censoring time $\cens_s>0$ arrive. We
observe $\Tt_s:=T_s\wedge\cens_s$ and $\delta_s:=\ind\{T_s\le\cens_s\}$, which
are $\cF_s$-measurable. We assume that $(T_s,\cens_s)$ has a regular
conditional distribution given $\cF_{s-1}$, and put
\[
  S_s(u):=\PP_{s-1}(T_s>u),\qquad
  \Psi_s(u):=\PP_{s-1}(T_s\ge u)=S_s(u^-),\qquad
  F_s:=1-S_s,
\]
\[
  \Kt_s(u):=\PP_{s-1}(\cens_s>u),\qquad
  K_s(u):=\PP_{s-1}(\cens_s\ge u)=\Kt_s(u^-).
\]
Fix a horizon $\tau$, let $\cV:=(0,\tau)$, and let
$\overline S_t(v):=\frac1t\sum_{s\le t}S_s(v)$ be the target.

\begin{assumption}[Censoring]\label{ass:cens}
Conditionally on $\cF_{s-1}$, the times $T_s$ and $\cens_s$ are independent.
The function $u\mapsto\Kt_s(u)$ is $\cF_{s-1}$-measurable and known, and
$K_s(\tau)>0$.
\end{assumption}

Proposition~\ref{prop:cens-nopos} removes the condition $K_s(\tau)>0$.

\begin{lemma}\label{lem:cens-moments}
Let $U$ be $\cF_{s-1}$-measurable, and let
$W_s:=\delta_s\ind\{\Tt_s<\tau\}/K_s(\Tt_s)$.
\begin{enumerate}
\item[(i)] If $U\in(0,\tau)$, then $\Kt_s(U)\ge K_s(\tau)>0$ and
$\E_{s-1}[\ind\{\Tt_s>U\}]=S_s(U)\Kt_s(U)$.
\item[(ii)] If $U\in[0,\tau)$, then $\E_{s-1}[W_s\ind\{\Tt_s\le U\}]=F_s(U)$.
\item[(iii)] If $U\in(0,\tau]$, then
$\E_{s-1}[\ind\{\Tt_s\ge U\}]=\Psi_s(U)K_s(U)$.
\end{enumerate}
\end{lemma}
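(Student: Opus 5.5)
All three parts come from the same two facts: conditional independence of $T_s$ and $\cens_s$ given $\cF_{s-1}$ (Assumption~\ref{ass:cens}), and the existence of a regular conditional distribution. Because $U$ is $\cF_{s-1}$-measurable, we may freeze it and integrate against the product of the conditional laws of $T_s$ and $\cens_s$. Write $\PP_{s-1}$ for that regular conditional law. Then each conditional expectation becomes an integral of a deterministic function evaluated at $U$, and the identities reduce to the usual fixed-$u$ computations for independent pairs.

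\emph{Part (i).} Since $U<\tau$, monotonicity gives $\Kt_s(U)=\PP_{s-1}(\cens_s>U)\ge\PP_{s-1}(\cens_s\ge\tau)=K_s(\tau)>0$. Next, $\{\Tt_s>U\}=\{T_s>U\}\cap\{\cens_s>U\}$, so conditional independence yields
\[
\E_{s-1}\bigl[\ind\{\Tt_s>U\}\bigr]=S_s(U)\,\Kt_s(U).
\]

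\emph{Part (iii).} This is the same argument, using $\{\Tt_s\ge U\}=\{T_s\ge U\}\cap\{\cens_s\ge U\}$, which gives $\Psi_s(U)K_s(U)$.

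\emph{Part (ii).} This is the main computation, and the only care needed is to check that the weight is well defined. On the event $\{\delta_s=1,\ \Tt_s<\tau\}$ we have $\Tt_s=T_s<\tau$. Hence $K_s(T_s)\ge K_s(\tau)>0$, since $K_s$ is nonincreasing, and $W_s$ is finite. Because $U<\tau$, the condition $\Tt_s\le U$ already implies $\Tt_s<\tau$, so
\[
W_s\ind\{\Tt_s\le U\}=\frac{\ind\{T_s\le \cens_s\}\,\ind\{T_s\le U\}}{K_s(T_s)}.
\]
Take the conditional expectation by first integrating over $\cens_s$ with $T_s=u$ held fixed. By conditional independence (Fubini for the product regular conditional law), the inner integral is
\[
\PP_{s-1}(\cens_s\ge u)/K_s(u)=1
\]
for every $u\le U<\tau$; this uses $K_s(u)>0$. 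What remains is $\PP_{s-1}(T_s\le U)=F_s(U)$.

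The main obstacle is this last measure-theoretic step. We must justify that integrating out $\cens_s$ with $U$ and $u$ frozen is legitimate when $U$ is random. I would handle it by applying the disintegration lemma for regular conditional distributions to the $\cF_{s-1}\otimes\mathcal B$-measurable integrand. Nonnegativity of the integrand makes Tonelli applicable, so no integrability issue arises.
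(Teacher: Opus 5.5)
Your proposal is correct and follows the paper's proof. Parts (i) and (iii) factor the indicator through the conditional product law. Part (ii) rewrites $W_s\ind\{\Tt_s\le U\}$ in terms of $T_s$, uses $K_s(T_s)\ge K_s(\tau)>0$ on $\{T_s\le U\}$, and integrates out $\cens_s$ first by Fubini. Your appeal to the disintegration lemma for the $\cF_{s-1}$-measurable $U$ spells out what the paper abbreviates as ``$U$ is a constant.''
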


\begin{proof}
Conditionally on $\cF_{s-1}$, the pair $(T_s,\cens_s)$ has product law by
Assumption~\ref{ass:cens}, and $U$ is a constant.

(i) We have $\Kt_s(U)\ge K_s(\tau)$ since $U<\tau$, and
$\ind\{\Tt_s>U\}=\ind\{T_s>U\}\ind\{\cens_s>U\}$. Part (iii) is the same with
$\ind\{\Tt_s\ge U\}=\ind\{T_s\ge U\}\ind\{\cens_s\ge U\}$.

(ii) Since $\Tt_s=T_s$ on $\{\delta_s=1\}$ and $U<\tau$,
$W_s\ind\{\Tt_s\le U\}=\ind\{T_s\le U\}\ind\{\cens_s\ge T_s\}/K_s(T_s)$. On
$\{T_s\le U\}$ we have $K_s(T_s)\ge K_s(\tau)>0$. Integrate out $\cens_s$
first, by Fubini.
\end{proof}

\paragraph{Refinement on $(0,\tau)$.}
Run Definition~\ref{def:cellsR} on $\Tt_s$, using only the values in
$(0,\tau)$ and with $-\infty,+\infty$ replaced by $0,\tau$. For
$v\in(0,\tau)$, the cell of $v$ at time $s$ is
$[l_s(v),r_s(v))\cap(0,\tau)$, where
\[
  l_s(v):=\max\bigl(\{\Tt_r:r<s,\ 0<\Tt_r\le v\}\cup\{0\}\bigr),\qquad
  r_s(v):=\min\bigl(\{\Tt_r:r<s,\ v<\Tt_r<\tau\}\cup\{\tau\}\bigr).
\]
A probe $v$ is split at time $s$ if and only if $\Tt_s$ falls strictly inside its cell. In particular, observations $\Tt_s\ge\tau$ split no probe. Lemma~\ref{lem:threshold} holds verbatim. In
particular, $\Tt_s$ lies in the interior of no cell of $\cC_{s+1}$, and
$\sup C_{s+1}(v)\in(v,\tau]$. The counters $N_t$, $\NR_t$, $\NL_t$ and the sets
$\cR_t$, $\cL_t$ are as before.
\begin{theorem}\label{thm:cens-upper}
Let $\lambda,\alpha\in(0,1)$, let $W_s:=\delta_s\ind\{\Tt_s<\tau\}/K_s(\Tt_s)$,
and let $\Ah$ be a centering. Let $\check f_s(v)$ be as in
Appendix~\ref{sec:iw} for the family $f_s(v)=W_s\ind\{\Tt_s\le v\}$, with split
weights $\frac12,\frac12$. With probability at least $1-\alpha$,
simultaneously for all $t\ge1$ and $v\in(0,\tau)$,
\[
  \overline S_t(v)\ \le\ 1-\frac1t\sum_{s\le t}\psi_\lambda\bigl(\check f_s(v);\Ah_s(v)\bigr)
  +\frac{N_t(v)\ln2+\ln\frac1\alpha}{\lambda t}.
\]
\end{theorem}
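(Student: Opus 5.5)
The plan is to reduce Theorem~\ref{thm:cens-upper} to the argument of Theorem~\ref{thm:iw-lower}, applied to the inverse-probability-weighted family $f_s(v)=W_s\ind\{\Tt_s\le v\}$ on the refinement of $(0,\tau)$ just described. We then convert the resulting lower bound on the running average of $F_s$ into an upper bound on $\overline S_t=1-\frac1t\sum_{s\le t}F_s$.

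\emph{Step 1: structural hypotheses.} Lemma~\ref{lem:threshold} holds verbatim on $(0,\tau)$. Lemma~\ref{lem:families} then gives homogeneity of the refinement for $f_s$, with $W_s:=\delta_s\ind\{\Tt_s<\tau\}/K_s(\Tt_s)\ge0$. This random variable is $\cF_s$-measurable, because $K_s$ is known and $\cF_{s-1}$-measurable by Assumption~\ref{ass:cens}. It is finite, because $K_s(\Tt_s)\ge K_s(\tau)>0$ whenever $\Tt_s<\tau$.

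\emph{Step 2: cell minimum and its mean.} On a cell $[l,r)\cap(0,\tau)$ the cell minimum is $W_s\ind\{\Tt_s\le l\}$. Lemma~\ref{lem:cens-moments}(ii), with $U=l\in[0,\tau)$ being $\cF_{s-1}$-measurable, gives its conditional mean as $F_s(l)$. The case $l=0$ yields $0$, since $T_s>0$. Because $F_s$ is nondecreasing, this is at most $\theta_s(v)=\E_{s-1}[W_s\ind\{\Tt_s\le v\}]=F_s(v)$ for every $v$ in the cell. This is again Lemma~\ref{lem:cens-moments}(ii), now with $U=v$. Hence Assumption~\ref{ass:floor} holds. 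The integrability $\E_{s-1}W_s<\infty$ follows since $\E_{s-1}W_s\le F_s(\tau^-)\le 1$.

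\emph{Step 3: apply the template.} Next we apply Corollary~\ref{cor:Bpsi}. We use the given centering $\Ah$, with $\lambda\Ah<1$ since $\Ah\in[0,1]$ and $\lambda<1$. We take floor $0$ at divisions, which is the cell minimum there because the left child has value $0$. The split weights are $\frac12,\frac12$, so $\Pi_t(v)=N_t(v)\ln 2$. At full level $\alpha$, this yields, simultaneously for all $t$ and $v$,
\[
\frac1t\sum_{s\le t}F_s(v)\ \ge\ \frac1t\sum_{s\le t}\psi_\lambda\bigl(\check f_s(v);\Ah_s(v)\bigr)-\frac{N_t(v)\ln2+\ln\frac1\alpha}{\lambda t}.
\]
Since $S_s=1-F_s$, rearranging gives the claim.

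The main obstacle is Step 2, the measurability and moment bookkeeping. Lemma~\ref{lem:cens-moments}(ii) must be applied at the random but predictable endpoint $l=\inf C_s(a)$. This requires a regular conditional distribution, so that the identity holds simultaneously for all $U$ in $[0,\tau)$ almost surely. I would obtain this by applying the lemma on each of the countably many events $\{C_s(a)\ne\emptyset\}$. I would also check that probes near $\tau$ cause no trouble: $W_s$ vanishes for $\Tt_s\ge\tau$, and such observations split no cell.
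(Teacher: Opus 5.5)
Your proposal is correct and follows essentially the same route as the paper: you run the argument of Theorem~\ref{thm:iw-lower} (that is, Corollary~\ref{cor:Bpsi}) on the truncated threshold refinement of $(0,\tau)$, identify the conditional mean of the cell minimum as $F_s(\inf C_s(v))$ via Lemma~\ref{lem:cens-moments}(ii) with floor $0$ on the leftmost cell, and conclude with $\overline S_t=1-\Fb_t$. The only cosmetic difference is that you apply Corollary~\ref{cor:Bpsi} directly to $\mu_t=\Fb_t$, whereas the paper passes through the intermediate quantity $G_t(v)=\frac1t\sum_{s\le t}F_s(\inf C_s(v))$.
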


Other split weights can be used as in Remark~\ref{rem:weights}.

\begin{proof}
We follow the proof of Theorem~\ref{thm:iw-lower} for the family
$f_s(v)=W_s\ind\{\Tt_s\le v\}$ on $(0,\tau)$. Its conditional mean is $F_s$
on $[0,\tau)$ by Lemma~\ref{lem:cens-moments}(ii), and $F_s$ is
nondecreasing and right-continuous with $F_s(0)=0$. The
proof of Lemma~\ref{lem:families} applies verbatim with $F^W_s$ replaced by
$F_s$. On the leftmost cell $(0,r)$ the cell minimum is $0$, since $\Tt_s>0$,
so the floor there is $0=F_s(0)$. With
$G_t(v):=\frac1t\sum_{s\le t}F_s(\inf C_s(v))$, the argument gives
\[
  \Fb_t(v)\ \ge\ G_t(v)\ \ge\ \frac1t\sum_{s\le t}\psi_\lambda\bigl(\check f_s(v);\Ah_s(v)\bigr)
  -\frac{N_t(v)\ln2+\ln\frac1\alpha}{\lambda t}.
\]
Finally, $\overline S_t=1-\Fb_t$.
\end{proof}

\paragraph{The at-risk probe.}
For $v\in(0,\tau)$, put $f_s(v):=\ind\{\Tt_s>v\}/\Kt_s(v)$. Its values lie in
$[0,1/K_s(\tau)]$, and its conditional mean is $S_s$ by
Lemma~\ref{lem:cens-moments}(i). So
this probe estimates the target $S_s(v)$ directly. It is not homogeneous,
because the weight $1/\Kt_s(v)$ varies within cells. For a cell $C$ write
$r_C:=\sup C\in(0,\tau]$, and put $Q_s:=1/K_s(\Tt_s)$ on
$\{\Tt_s<\tau\}$.

\begin{lemma}\label{lem:atrisk}
The at-risk probe has the following properties.
\begin{enumerate}
\item[(i)] For $C\in\cC_{s+1}$, we have
$\sup_Cf_s=\ind\{\Tt_s\ge r_C\}/K_s(r_C)$.
\item[(ii)] For $C=C_s(a)$, the variable
$y_s(a):=\ind\{\Tt_s\ge r_C\}/K_s(r_C)$ is a floor in the sense of Proposition~\ref{prop:B}, with $\E_{s-1}[y_s(a)]=\Psi_s(r_C)=\inf_CS_s$.
\end{enumerate}
\end{lemma}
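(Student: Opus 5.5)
For (i), I will use Lemma~\ref{lem:threshold}(iii), which carries over verbatim to the refinement on $(0,\tau)$. Fix a cell $C\in\cC_{s+1}$ with left endpoint $l$ and $r_C=\sup C$. Then $\Tt_s$ is not in the interior of $C$, so either $\Tt_s\le l$ or $\Tt_s\ge r_C$.

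\begin{itemize}
\item \emph{Case $\Tt_s\le l$.} Then $\ind\{\Tt_s>v\}=0$ for every $v\in C$, because $v\ge l\ge\Tt_s$. So $f_s\equiv0$ on $C$. The right-hand side also vanishes, since $\Tt_s<r_C$.
\item \emph{Case $\Tt_s\ge r_C$.} Then $\Tt_s>v$ for all $v\in C$, so $f_s(v)=1/\Kt_s(v)$. The map $v\mapsto1/\Kt_s(v)$ is nondecreasing, so its supremum over $C=[l,r_C)$ is the left limit at $r_C$, namely $1/\Kt_s(r_C^-)=1/K_s(r_C)$. The leftmost cell $(0,r)$ is handled the same way.
\end{itemize}

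The bound $K_s(r_C)\ge K_s(\tau)>0$ keeps the expression finite. The expression is $\cF_s$-measurable because $r_C$ is $\cF_s$-measurable (it is an endpoint built from $\Tt_{1:s}$) and $K_s$ is $\cF_{s-1}$-measurable.

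For (ii), I take $C=C_s(a)$ with $r_C$ now $\cF_{s-1}$-measurable, and check the requirements of Proposition~\ref{prop:B} in turn.

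\begin{enumerate}
\item $y_s(a)\ge0$ and $y_s(a)$ is $\cF_s$-measurable; both are immediate.
\item Its conditional mean is given by Lemma~\ref{lem:cens-moments}(iii) with $U=r_C\in(0,\tau]$:
\[\E_{s-1}[y_s(a)]=\Psi_s(r_C)K_s(r_C)/K_s(r_C)=\Psi_s(r_C).\]
\item $\Psi_s(r_C)=S_s(r_C^-)$. Because $S_s$ is nonincreasing, this equals $\inf_{v\in C}S_s(v)$ on the half-open cell $[l,r_C)$.
\item Hence $q_s(a)\le S_s(v)=\theta_s(v)$ for all $v\in C$, which is the floor condition.
\end{enumerate}

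The floor condition can also be seen pointwise: $y_s(a)\le f_s(v)$ for each $v\in C$. Indeed, if $\Tt_s\ge r_C>v$, then $1/K_s(r_C)\le1/\Kt_s(v)$, because $\Kt_s(v)\le K_s(r_C)$ for $v<r_C$; this shows $y_s(a)$ is in fact a cell minimum.

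The main care point is the left-limit bookkeeping. One must verify that $\sup_{v<r}1/\Kt_s(v)=1/K_s(r)$, which uses the right-continuity of $\Kt_s$ and the identity $K_s(r)=\lim_{v\uparrow r}\Kt_s(v)$. One must also check that $S_s(r^-)=\Psi_s(r)$, which holds by monotone convergence of $\PP_{s-1}(T_s>v)$ as $v\uparrow r$.
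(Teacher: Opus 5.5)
Your argument for both parts is correct and follows the paper's proof. It uses the dichotomy $\Tt_s\le l$ or $\Tt_s\ge r_C$ from Lemma~\ref{lem:threshold}(iii), the left limit $1/K_s(r_C)$ of the nondecreasing map $v\mapsto 1/\Kt_s(v)$, Lemma~\ref{lem:cens-moments}(iii) at the $\cF_{s-1}$-measurable point $r_C$, and the monotonicity of $S_s$ to identify $\Psi_s(r_C)$ with $\inf_C S_s$.

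You should, however, delete the paragraph claiming that $y_s(a)\le f_s(v)$ pointwise and that $y_s(a)$ is a cell minimum. The inequality there is reversed. For $v<r_C$ one has $\{\cens_s\ge r_C\}\subseteq\{\cens_s>v\}$, so $K_s(r_C)\le\Kt_s(v)$. On $\{\Tt_s\ge r_C\}$ this gives $y_s(a)=1/K_s(r_C)\ge 1/\Kt_s(v)=f_s(v)$, which is exactly what your own part (i) asserts.

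The true cell minimum of the at-risk probe over $C_s(a)=[l,r_C)$ is $\ind\{\Tt_s\ge r_C\}/\Kt_s(l)$. This is generally strictly smaller than $y_s(a)$, as the paper remarks in the proof of Theorem~\ref{thm:cens-lower}. That is the point of the construction: $y_s(a)$ is a floor only in the conditional-mean sense that Proposition~\ref{prop:B} requires, namely $\E_{s-1}[y_s(a)]=\Psi_s(r_C)\le S_s(v)$ for $v\in C$. It is not a pointwise floor, and this weaker requirement is what allows the larger weight $1/K_s(r_C)$ to be used.

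Your items 1--4 already verify precisely this conditional-mean condition. The proof therefore stands once the erroneous remark is removed.
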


\begin{proof}
(i) Let $C$ have endpoints $l<r$. Then $\Tt_s$ is not in the interior of $C$. Either $\Tt_s\in(0,\tau)$, and then Lemma~\ref{lem:threshold}(iii)
applies, or $\Tt_s\ge\tau\ge r$. Hence
$\ind\{\Tt_s>u\}=\ind\{\Tt_s\ge r\}$ for all $u\in C$. If $\Tt_s\ge r$, both indicators equal $1$, and if $\Tt_s\le l$, both equal
$0$, since $u\ge l$.
The map $u\mapsto1/\Kt_s(u)$ is nondecreasing, with left limit $1/K_s(r)$ at
$r$, so the supremum over $C$ equals $\ind\{\Tt_s\ge r\}/K_s(r)$. Moreover
$K_s(r)\ge K_s(\tau)>0$.

(ii) The cell $C$ and its right endpoint $r_C$ are known at time $s-1$, so
Lemma~\ref{lem:cens-moments}(iii) gives
$\E_{s-1}[y_s(a)]=\Psi_s(r_C)$. The infimum of the nonincreasing $S_s$ over
$C$ is its left limit at $r_C$, which is $\Psi_s(r_C)$.
\end{proof}

\begin{theorem}\label{thm:cens-lower}
Let $\lambda,\alpha\in(0,1)$, and let $\Ah$ be a centering. For $v\in(0,\tau)$
put $r_s(v):=\sup C_s(v)$ and
\[
  Y_s(v):=\frac{\ind\{\Tt_s\ge r_s(v)\}}{K_s(r_s(v))},\qquad
  \check S_t(v):=\frac1t\sum_{s\le t}Y_s(v),\qquad
  S^\eta_t(v):=\sum_{s\le t}\eta\bigl(\lambda(Y_s(v)-\Ah_s(v))\bigr).
\]
With probability at least $1-\alpha$, simultaneously for all $t\ge1$ and
$v\in(0,\tau)$,
\[
  \overline S_t(v)\ \ge\ \check S_t(v)-\frac1{\lambda t}\Bigl[S^\eta_t(v)+N_t(v)\ln2+\ln\frac1\alpha\Bigr].
\]
\end{theorem}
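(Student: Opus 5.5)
We apply Proposition~\ref{prop:B} and Corollary~\ref{cor:B} to the at-risk probe $f_s(v)=\ind\{\Tt_s>v\}/\Kt_s(v)$ on the refinement of $(0,\tau)$. The ingredients are the floor $y_s(a)=\ind\{\Tt_s\ge r_C\}/K_s(r_C)$ from Lemma~\ref{lem:atrisk}(ii), split weights $\frac12,\frac12$, $\pi_s(a\stay)=1$, and the given centering $\Ah$, which must satisfy $\lambda\Ah<1$.

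\emph{Hypotheses.} First we check the conditions of Proposition~\ref{prop:B}.
\begin{itemize}
\item[] Nonnegativity of $f_s$ is clear.
\item[] Assumption~\ref{ass:meas} holds by Lemma~\ref{lem:threshold}, which holds verbatim on $(0,\tau)$.
\item[] Each nonempty child $b$ is a cell of $\cC_{s+1}$, so Lemma~\ref{lem:atrisk}(i) gives $f^\uparrow_s(b)=\ind\{\Tt_s\ge r_b\}/K_s(r_b)$. This is finite because $K_s(r_b)\ge K_s(\tau)>0$, and it is $\cF_s$-measurable.
\item[] The floor condition $q_s(a)=\Psi_s(r_C)=\inf_C S_s\le\theta_s(v)$ for $v\in C$ is Lemma~\ref{lem:atrisk}(ii). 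The conditional mean $\theta_s(v)=S_s(v)$ comes from Lemma~\ref{lem:cens-moments}(i).
\end{itemize}
Corollary~\ref{cor:B} then gives, with probability at least $1-\alpha$,
\[
\overline S_t(v)\ge\widehat\mu_t(v)-\frac{\ell_t(v)+\ln\frac1\alpha}{\lambda t},
\]
where the path cost is
\[
\ell_t(v)=\sum_{s\le t}\Bigl[\ln\tfrac1{\pi_s(v)}-\ln\bigl(1+\lambda(y_s(v)-\Ah_s(v))\bigr)+\lambda\bigl(f^\uparrow_s(v)-\Ah_s(v)\bigr)\Bigr].
\]

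\emph{Path bookkeeping.} Along the path of $v$ we have $y_s(v)=Y_s(v)$ by definition, since $r_s(v)=\sup C_s(v)$. The weights give $\sum_s\ln(1/\pi_s(v))=N_t(v)\ln2$. Instead of the final display of Corollary~\ref{cor:B}, it is cleaner to use the intermediate inequality $\lambda\Dt_t(a_{t+1}(v))\le\ell_t(v)+\ln\frac1\alpha$ together with
\[
\Dt_t(a_{t+1}(v))=\sum_s\bigl(f^\uparrow_s(v)-q_s(a_s(v))\bigr)\ge\sum_s f^\uparrow_s(v)-t\,\overline S_t(v),
\]
which uses $q_s(a_s(v))\le S_s(v)$. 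In the resulting inequality the terms $\lambda\sum_s f^\uparrow_s(v)$ cancel exactly on both sides, and no homogeneity is needed. We are left with
\[
\lambda t\,\overline S_t(v)\ge\sum_s\Bigl[\lambda\Ah_s(v)+\ln\bigl(1+\lambda(Y_s(v)-\Ah_s(v))\bigr)\Bigr]-N_t(v)\ln2-\ln\tfrac1\alpha.
\]
Writing $\ln(1+x)=x-\eta(x)$ with $x=\lambda(Y_s-\Ah_s)$, each bracket becomes $\lambda Y_s(v)-\eta(\lambda(Y_s(v)-\Ah_s(v)))$. Dividing by $\lambda t$ gives the stated bound with $\check S_t$ and $S^\eta_t$.

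\emph{Expected obstacle.} The step needing most care is the envelope and charge bookkeeping when $f^\uparrow_s$ on the child differs from the floor $y_s$ on the parent. In Template~B the $f^\uparrow$ terms enter both $\Dt$ and $\ell_t$, and we must verify that they cancel rather than leave an oscillation term. They do cancel, because the charge contains $e^{-\lambda f^\uparrow_s(ak)}$, which is why Template~B needs no homogeneity. A secondary point is that the logarithms are well defined, i.e. $1+\lambda(Y_s-\Ah_s)>0$. This follows from $Y_s\ge0$ and $\lambda\Ah_s<1$, which must be part of the definition of a centering. Finally, measurability of $r_s(v)$ and $K_s(r_s(v))$ given $\cF_{s-1}$ follows because the cells are $\cF_{s-1}$-measurable and $\Kt_s$ is known.
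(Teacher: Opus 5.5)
Your proof is correct and follows the paper's route. Both apply Proposition~\ref{prop:B} to the at-risk probe with the floor of Lemma~\ref{lem:atrisk}(ii) and weights $\frac12,\frac12$, invoke Corollary~\ref{cor:masterband}, cancel the child-maximum terms, and conclude with $\Psi_s(r_s(v))\le S_s(v)$. The only difference is cosmetic: the paper computes the child maxima explicitly ($Y_s$, $Q_s$, $0$) and tracks $\lambda\sum_{s\in\cL_t(v)}Q_s$ until it cancels, whereas you cancel $\lambda\sum_s f^\uparrow_s(v)$ wholesale, which needs only that these maxima are finite.
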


As in Appendix~\ref{sec:iw}, any split weights with
$\pi(a\sL)+\pi(a\sR)\le1$ can be used instead of $\frac12,\frac12$, with
$\sum_{s\in\cS_t(v)}\ln(1/\pi_s(v))$ in place of $N_t(v)\ln2$
(Remark~\ref{rem:weights}).

\begin{proof}
\emph{Setup.} Apply Proposition~\ref{prop:B} to the at-risk probe. Fix a
cell $a$, let $r:=\sup C_s(a)$ and $Z_s(a):=\ind\{\Tt_s\ge r\}/K_s(r)$.
We take the floor $y_s(a):=Z_s(a)$ of Lemma~\ref{lem:atrisk}(ii), with
$q_s(a)=\Psi_s(r)$. It is larger than the cell minimum, which divides by
$\Kt_s(\inf C_s(a))$ instead. On a division $\Tt_s<r$, so $Z_s(a)=0$. By
Lemma~\ref{lem:atrisk}(i), the maxima are $f^\uparrow=Z_s(a)$ for $\stay$,
$Q_s$ for $\sL=[l,\Tt_s)$, and $0$ for $\sR=[\Tt_s,r)$. The weights are
$\pi(a\stay)=1$ without a division, and $\pi(a\sL)=\pi(a\sR)=\frac12$ with a
division.
The charges of Proposition~\ref{prop:B} are then
\[
  c(a\stay)=e^{-\eta(\lambda(Z_s-\Ah))},\qquad
  c(a\sL)=\tfrac12(1-\lambda\Ah)e^{-\lambda(Q_s-\Ah)},\qquad
  c(a\sR)=\tfrac12e^{-\eta(-\lambda\Ah)}.
\]
\emph{Path cost.} We have $Y_s(v)=Z_s(a_s(v))$ at $\stay$-steps and $Y_s(v)=0$
at divisions. By Corollary~\ref{cor:B}, the cost per step is $\eta(\lambda(Y_s(v)-\Ah_s(v)))$ at $\stay$-steps,
$\ln2+\lambda Q_s+\eta(-\lambda\Ah_s(v))$ at $\sL$-steps, and
$\ln2+\eta(-\lambda\Ah_s(v))$ at $\sR$-steps.
So $\ell_t(v)=S^\eta_t(v)+\lambda\sum_{s\in\cL_t(v)}Q_s+N_t(v)\ln2$.

\emph{Envelope.} By Proposition~\ref{prop:B} and Lemma~\ref{lem:atrisk}(i),
with $r_{s+1}(v)=\sup C_{s+1}(v)$,
\[
  \bar f_s(a_{s+1}(v))=\frac{\ind\{\Tt_s\ge r_{s+1}(v)\}}{K_s(r_{s+1}(v))}-\Psi_s(r_s(v)).
\]
At $\stay$- and $\sR$-steps, $r_{s+1}(v)=r_s(v)$ and the first term is $Y_s(v)$, which vanishes at an $\sR$-step since $\Tt_s<r_s(v)$. At $\sL$-steps,
$r_{s+1}(v)=\Tt_s$, the first term is $Q_s$, and $Y_s(v)=0$.
Hence
$\Dt_t(a_{t+1}(v))=t\check S_t(v)+\sum_{s\in\cL_t(v)}Q_s-\sum_{s\le t}\Psi_s(r_s(v))$.

\emph{Conclusion.} By Corollary~\ref{cor:masterband}, on an event of probability at
least $1-\alpha$, for all $t$ and $v$, the terms $\lambda\sum_{\cL_t(v)}Q_s$
cancel, and
\begin{equation}\label{eq:D33}
  \lambda\Bigl(t\check S_t(v)-\sum_{s\le t}\Psi_s(r_s(v))\Bigr)\le S^\eta_t(v)+N_t(v)\ln2+\ln\tfrac1\alpha .
\end{equation}
Since $r_s(v)>v$, we have $\Psi_s(r_s(v))\le\PP_{s-1}(T_s>v)=S_s(v)$. Hence
$\sum_s\Psi_s(r_s(v))\le t\overline S_t(v)$, which gives the claim at $v$
itself.
\end{proof}

\begin{theorem}\label{thm:cens-two}\label{thm:censband}
Under the assumptions of Theorems~\ref{thm:cens-lower}
and~\ref{thm:cens-upper}, each applied at level $\alpha/2$, both bounds hold
simultaneously with probability at least $1-\alpha$.
\end{theorem}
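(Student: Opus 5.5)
This is a union bound over the two one-sided results, with no new supermartingale construction needed.

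Let $E_{\mathrm{up}}$ be the failure event of Theorem~\ref{thm:cens-upper} run at level $\alpha/2$. It is the event that for some $t\ge1$ and $v\in(0,\tau)$,
\[
\overline S_t(v)> 1-\frac1t\sum_{s\le t}\psi_\lambda\bigl(\check f_s(v);\Ah_s(v)\bigr)+\frac{N_t(v)\ln2+\ln\frac2\alpha}{\lambda t}.
\]
Let $E_{\mathrm{lo}}$ be the failure event of Theorem~\ref{thm:cens-lower} at level $\alpha/2$, defined in the same way with $\ln\frac2\alpha$ replacing $\ln\frac1\alpha$. Each theorem is an anytime statement obtained from Ville's inequality applied to its own master supermartingale (Theorem~\ref{thm:master}, Corollary~\ref{cor:masterband}). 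Hence $\PP(E_{\mathrm{up}})\le\alpha/2$ and $\PP(E_{\mathrm{lo}})\le\alpha/2$, each uniformly over all $t$ and all $v\in(0,\tau)$.

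Both theorems must hold under the common hypotheses, so I will check that they use compatible ingredients. Both work on the same filtration $(\cF_s)$. Both use the same refinement, Definition~\ref{def:cellsR} run on $\Tt_s$ restricted to $(0,\tau)$, so $N_t(v)$ means the same thing in the two statements. Both rely on Assumption~\ref{ass:cens} and Lemma~\ref{lem:cens-moments}. The centerings $\Ah$ in the two theorems may differ, since each is only required to be a centering for its own family. The upper bound uses $f_s(v)=W_s\ind\{\Tt_s\le v\}$ and the lower bound uses the at-risk probe. Neither proof uses any property of the other's supermartingale, so the two results can be applied to the same sample path without interference.

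Then $\PP(E_{\mathrm{up}}\cup E_{\mathrm{lo}})\le\alpha$. On the complement, which has probability at least $1-\alpha$, both inequalities hold simultaneously for all $t\ge1$ and $v\in(0,\tau)$. This gives a two-sided band for $\overline S_t$.

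The only point needing care is that the two stated bounds carry $\ln\frac1\alpha$ and must be restated with $\ln\frac2\alpha$ when each is run at level $\alpha/2$. I do not expect a genuine obstacle.
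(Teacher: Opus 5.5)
Your proposal is correct and matches the paper's proof, which is a one-line union bound over the two failure events, each of probability at most $\alpha/2$. Your compatibility checks (shared refinement, filtration, separate centerings) are harmless but not needed, since a union bound requires no relation between the two events beyond living on the same probability space.
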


\begin{proof}
Union bound.
\end{proof}

\begin{corollary}\label{cor:cens-explicit}
Let $\kappa(\lambda):=\psi_E(\lambda)/\lambda$,
$\widehat V^Y_t(v):=\sum_{s\le t}Y_s(v)^2$,
$\check F^W_t(v):=\frac1t\sum_{s\le t}\check f_s(v)$ and
$\widehat V^W_t(v):=\sum_{s\le t}\check f_s(v)^2$, with $\check f_s$ as in
Theorem~\ref{thm:cens-upper}. With probability at least $1-\alpha$,
simultaneously for all $t$ and $v\in(0,\tau)$,
\begin{align*}
  \overline S_t(v)&\ge\frac1{1+\kappa(\lambda)}\left[\check S_t(v)-\frac{\kappa(\lambda)\widehat V^Y_t(v)}t-\frac{N_t(v)\ln2+\ln\frac2\alpha}{\lambda t}\right],\\
  \overline S_t(v)&\le1-\frac1{1+\kappa(\lambda)}\left[\check F^W_t(v)-\frac{\kappa(\lambda)\widehat V^W_t(v)}t-\frac{N_t(v)\ln2+\ln\frac2\alpha}{\lambda t}\right].
\end{align*}
\end{corollary}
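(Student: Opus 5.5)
The plan is to follow the proof of Corollary~\ref{cor:iw-explicit} exactly, applied separately to the two sides of Theorem~\ref{thm:censband}. Each side gets level $\alpha/2$, and the two events are combined by a union bound. On each side the centering is taken to be the (uncomputable) oracle value, namely the conditional mean of the floor on the current cell. This is legitimate because Theorems~\ref{thm:cens-lower} and~\ref{thm:cens-upper} hold for any centering, i.e.\ any $\cF_{s-1}$-measurable variable that is constant on the cells of $\cC_s$ and satisfies $0\le\lambda\Ah<1$.

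\emph{Upper bound.} Take $A_s(v):=F_s(\inf C_s(v))$.
\begin{itemize}
\item It is $\cF_{s-1}$-measurable and constant on cells, and it lies in $[0,1]$, so $\lambda A_s<1$ for $\lambda<1$.
\item The proof of Theorem~\ref{thm:cens-upper} at level $\alpha/2$ gives $tG_t(v)\ge\sum_s\psi_\lambda(\check f_s(v);A_s(v))-(N_t(v)\ln2+\ln\frac2\alpha)/\lambda$, where $G_t(v)=\frac1t\sum_s A_s(v)$.
\item Write $\psi_\lambda(x;a)=x-\eta(\lambda(x-a))/\lambda$ and set $\xi_s=\check f_s-A_s$. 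Since $\check f_s\ge0$ and $A_s\le1$, we have $\xi_s\ge-1$.
\item Lemma~\ref{lem:fan} gives $\eta(\lambda\xi_s)\le\psi_E(\lambda)\xi_s^2$. Nonnegativity of both terms gives $\xi_s^2\le\check f_s^2+A_s^2$, and $A_s\in[0,1]$ gives $A_s^2\le A_s$.
\item Rearranging yields $(1+\kappa)G_t\ge\check F^W_t-\kappa\widehat V^W_t/t-(N_t\ln2+\ln\frac2\alpha)/(\lambda t)$.
\item Finally $\Fb_t\ge G_t$ and $\overline S_t=1-\Fb_t$ give the stated upper bound.
\end{itemize}

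\emph{Lower bound.} Take $A_s(v):=\Psi_s(r_s(v))$, which is the conditional mean $q_s$ of the floor $Y_s(v)$.
\begin{itemize}
\item It lies in $[0,1]$ and depends on $v$ only through $r_s(v)=\sup C_s(v)$, which is fixed on each cell of $\cC_s$. So it is a centering.
\item Rather than using only the final display of Theorem~\ref{thm:cens-lower}, I will use the sharper intermediate inequality \eqref{eq:D33} at level $\alpha/2$:
\[
\lambda\Bigl(t\check S_t-\sum_s A_s\Bigr)\le S^\eta_t+N_t\ln2+\ln\tfrac2\alpha.
\]
\item Bound $S^\eta_t$ as before. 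Here $\xi_s=Y_s-A_s\ge-1$, because $Y_s\ge0$ and $A_s\le1$. Then $\eta(\lambda\xi_s)\le\psi_E(\lambda)(Y_s^2+A_s^2)\le\psi_E(\lambda)(Y_s^2+A_s)$.
\item Dividing by $\lambda$ gives $t\check S_t-\sum A_s\le\kappa\widehat V^Y_t+\kappa\sum A_s+(N_t\ln2+\ln\frac2\alpha)/\lambda$.
\item Hence $(1+\kappa)\frac1t\sum_s A_s\ge\check S_t-\kappa\widehat V^Y_t/t-(\cdots)/(\lambda t)$.
\item Since $r_s(v)>v$, we have $\Psi_s(r_s(v))\le S_s(v)$, so $\frac1t\sum_s A_s\le\overline S_t(v)$, which gives the lower bound.
\end{itemize}

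\emph{Main obstacle.} I expect the difficulty to lie on the lower side, in confirming that the oracle centering satisfies the requirements of Proposition~\ref{prop:B} there. The unbounded values $Y_s\in[0,1/K_s(\tau)]$ are not a problem: Lemma~\ref{lem:fan} only needs $\xi\ge-1$, which holds because $Y_s\ge0$ and $A_s\le1$. The step to check carefully is that the absorption $A_s^2\le A_s$ is applied to the same $\sum_s A_s$ that appears on the left-hand side of \eqref{eq:D33}.
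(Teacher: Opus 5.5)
Your proposal is correct and follows the paper's proof essentially step for step. On each side you take the oracle centering at level $\alpha/2$: $\Psi_s(r_s(v))$ on the lower side and $F_s(\inf C_s(v))$ on the upper side. You bound the centering cost with Lemma~\ref{lem:fan}, using $\xi_s^2\le \check f_s^2+A_s^2$ and $A_s^2\le A_s$. You absorb the result into the same sum $\Gamma_t(v)=\sum_{s\le t}A_s(v)$ appearing in \eqref{eq:D33}, and finish with $\Gamma_t(v)\le t\overline S_t(v)$, $\Fb_t\ge G_t$ and a union bound.
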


\begin{proof}
\emph{Lower bound.} Use the oracle centering $A_s(v):=\Psi_s(r_s(v))$. By
Lemma~\ref{lem:atrisk}(ii) it equals $\inf_{C_s(v)}S_s$, so it lies in
$[0,1]$, is $\cF_{s-1}$-measurable, and is indexed by cells.
Inequality~\eqref{eq:D33} at level $\alpha/2$ holds for all $v$.

Put $\Gamma_t(v):=\sum_{s\le t}\Psi_s(r_s(v))=\sum_{s\le t}A_s(v)$ and
$\xi_s:=Y_s(v)-A_s(v)\ge-1$.
Lemma~\ref{lem:fan} gives
$\eta(\lambda\xi_s)\le\psi_E(\lambda)(Y_s(v)^2+A_s(v)^2)$, and
$A_s(v)^2\le A_s(v)$ since $A_s(v)\in[0,1]$.
Hence $S^\eta_t(v)\le\psi_E(\lambda)(\widehat V^Y_t(v)+\Gamma_t(v))$, and
\eqref{eq:D33} becomes
\[
  \frac{\Gamma_t(v)}t\ \ge\ \frac1{1+\kappa(\lambda)}\Bigl[\check S_t(v)-\frac{\kappa(\lambda)\widehat V^Y_t(v)}t-\frac{N_t(v)\ln2+\ln\frac2\alpha}{\lambda t}\Bigr].
\]
Since $r_s(v)>v$, we have $\Gamma_t(v)\le t\overline S_t(v)$.

\emph{Upper bound.} In the proof of Theorem~\ref{thm:cens-upper} at level
$\alpha/2$, use the oracle centering $A_s(v):=F_s(\inf C_s(v))\in[0,1]$, and
argue exactly as in the proof of Corollary~\ref{cor:iw-explicit}. This gives
\[
  \Fb_t(v)\ge G_t(v)\ge\frac1{1+\kappa(\lambda)}\left[\check F^W_t(v)-\frac{\kappa(\lambda)\widehat V^W_t(v)}t-\frac{N_t(v)\ln2+\ln\frac2\alpha}{\lambda t}\right],
\]
and $\overline S_t=1-\Fb_t$. A union bound completes the proof.
\end{proof}

\paragraph{Without positivity.}
The condition $K_s(\tau)>0$ in Assumption~\ref{ass:cens} can be removed.
Take $\tau:=+\infty$, use the convention $0/0:=0$, and let
\[
  h_s:=\inf\{u>0:\ \Kt_s(u)=0\}\in(0,\infty]
\]
be the time after which subject $s$ is censored almost surely. It is
$\cF_{s-1}$-measurable, and $\Kt_s(v)>0$ if and only if $v<h_s$, that is, if
subject $s$ is uncensored at time $v$ with positive probability. For
instance, if subject $s$ is censored at a fixed time $m_s$, so that
$\cens_s=m_s$, then $h_s=m_s$. For a probe $v$, let
$\cE_t(v):=\{s\le t:\ v<h_s\}$ be the subjects that are uncensored at time $v$
with positive probability, let $n_t(v):=|\cE_t(v)|$, and, for $n_t(v)\ge1$,
\[
  \overline S^{\circ}_t(v):=\frac1{n_t(v)}\sum_{s\in\cE_t(v)}S_s(v).
\]
Put $W_s:=\delta_s/K_s(\Tt_s)$. Let $\check g_s(v):=W_s\ind\{\Tt_s\le v\}$ if
$v$ is not split at time $s$ and $r_s(v)\le h_s$, and $\check g_s(v):=0$
otherwise.

\begin{proposition}[Bands without positivity]\label{prop:cens-nopos}
Let $\lambda,\alpha\in(0,1)$, and let $\Ah^\circ$ be a centering. Let $L_t(v)$
be the lower bound of Theorem~\ref{thm:cens-lower} at level $\alpha/2$,
computed with $\tau=+\infty$ and the convention $0/0:=0$. With probability at
least $1-\alpha$, simultaneously for all $t\ge1$ and all $v>0$ with
$n_t(v)\ge1$,
\[
  \frac{t}{n_t(v)}\,L_t(v)\ \le\ \overline S^\circ_t(v)\ \le\
  1-\frac1{n_t(v)}\left[\sum_{s\le t}\psi_\lambda\bigl(\check g_s(v);\Ah^\circ_s(v)\bigr)
  -\frac{N_t(v)\ln2+\ln\frac2\alpha}{\lambda}\right].
\]
On the same event, $L_t(v)\le\overline S_t(v)$.
\end{proposition}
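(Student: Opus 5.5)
The plan is to run the two one-sided arguments of Theorems~\ref{thm:cens-lower} and~\ref{thm:cens-upper} again, each at level $\alpha/2$, with $\tau=+\infty$ and $0/0:=0$. The new work is to check that every floor stays admissible once $K_s$ may vanish, and that it is charged only to the subjects in $\cE_t(v)$. A union bound then combines the two sides. Throughout I use $h_s$ as follows: if $K_s(r)=\PP_{s-1}(\cens_s\ge r)>0$, then $\Kt_s(u)>0$ for every $u<r$, and hence $r\le h_s$.

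\emph{Lower bound.} I apply Proposition~\ref{prop:B} to the at-risk probe $f_s(v)=\ind\{\Tt_s>v\}/\Kt_s(v)$.
\begin{itemize}
\item Its conditional mean is $\theta_s(v)=S_s(v)\ind\{v<h_s\}$. This follows from Lemma~\ref{lem:cens-moments}(i) when $\Kt_s(v)>0$. Otherwise the numerator vanishes almost surely, and $0/0=0$ gives mean $0$.
\item I keep the floor $y_s(a)=\ind\{\Tt_s\ge r_C\}/K_s(r_C)$.
\item If $K_s(r_C)>0$, Lemma~\ref{lem:cens-moments}(iii) gives $\E_{s-1}y_s(a)=\Psi_s(r_C)$. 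Since $r_C\le h_s$, every $v\in C$ satisfies $v<h_s$, so $\Psi_s(r_C)\le S_s(v)=\theta_s(v)$.
\item If $K_s(r_C)=0$, then $\ind\{\Tt_s\ge r_C\}=0$ almost surely, so $y_s(a)=0\le\theta_s(v)$.
\item The maxima of Lemma~\ref{lem:atrisk}(i) remain finite, being either $0$ or a reciprocal of a positive $K_s$. Proposition~\ref{prop:B} only needs $f_s\ge0$.
\end{itemize}
The proof of Theorem~\ref{thm:cens-lower} then goes through verbatim and yields \eqref{eq:D33}, with $\Psi_s(r_s(v))$ replaced by $q_s(v)=\Psi_s(r_s(v))\ind\{K_s(r_s(v))>0\}$. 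Each such term is nonzero only if $r_s(v)\le h_s$, which forces $v<h_s$, that is, $s\in\cE_t(v)$. In that case the term is at most $S_s(v)$. Hence $\sum_{s\le t}q_s(v)\le n_t(v)\,\overline S^\circ_t(v)$, which gives $tL_t(v)\le n_t(v)\overline S^\circ_t(v)$. Since also $q_s(v)\le S_s(v)$ for all $s$, the same sum is at most $t\overline S_t(v)$, which gives the final claim $L_t(v)\le\overline S_t(v)$.

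\emph{Upper bound.} I use the homogeneous family $g_s(v)=W_s\ind\{\Tt_s\le v\}$ with $W_s=\delta_s/K_s(\Tt_s)$ on the threshold refinement of $\Tt_s$, and apply Proposition~\ref{prop:B} with the following floor:
\[
y_s(a)=W_s\ind\{\Tt_s\le l_C\}\quad\text{if $C=C_s(a)$ is not split and } r_C\le h_s,
\qquad y_s(a)=0\ \text{otherwise}.
\]
\begin{itemize}
\item In the first case every $T_s\le l_C<h_s$ satisfies $K_s(T_s)\ge\Kt_s(l_C)>0$. The Fubini computation of Lemma~\ref{lem:cens-moments}(ii) then gives $\E_{s-1}y_s(a)\le F_s(l_C)\le F_s(v)$ for $v\in C$. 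I read the split indicator as $\ind\{l_C<\Tt_s<r_C\}$, so $y_s(a)\le W_s\ind\{\Tt_s\le l_C\}$ pointwise.
\item Since $\theta_s\ge0$, the floor $0$ is always admissible.
\end{itemize}
I do not invoke Corollary~\ref{cor:Bpsi} directly, because its rule for $y$ on undivided cells differs. Instead I repeat its computation from Corollary~\ref{cor:B}. By homogeneity $f^\uparrow_s(v)=g_s(v)$, and the terms $\lambda\sum_s g_s(v)$ cancel against $\lambda t\widehat\mu_t$ in $\Dt$. With $y_s(a_s(v))=\check g_s(v)$ this leaves
\[
\sum_{s\le t}q_s(v)\ \ge\ \sum_{s\le t}\psi_\lambda\bigl(\check g_s(v);\Ah^\circ_s(v)\bigr)-\frac{N_t(v)\ln2+\ln\frac2\alpha}{\lambda}.
\]
A nonzero $q_s(v)$ needs $r_s(v)\le h_s$, hence $s\in\cE_t(v)$, and then $q_s(v)\le F_s(v)=1-S_s(v)$. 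So $\sum_s q_s(v)\le n_t(v)\bigl(1-\overline S^\circ_t(v)\bigr)$. Dividing by $n_t(v)$ gives the stated upper bound.

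\emph{Main obstacle.} I expect the delicate part to be measurability and admissibility of these modified floors. The event $\{r_C\le h_s\}$ is $\cF_{s-1}$-measurable, since both $r_C$ and $h_s$ are, so $y_s$ is $\cF_s$-measurable. What needs care is checking, with $0/0$ and possibly infinite $1/K_s$, that no charge in Proposition~\ref{prop:B} becomes undefined on null sets; I would handle this by modifying the variables on null events.
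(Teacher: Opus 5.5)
Your proposal is correct and follows the paper's proof. The lower side uses the same at-risk floor with conditional mean $\Psi_s(r)\ind\{K_s(r)>0\}$. The upper side uses the same event floor $W_s\ind\{\Tt_s\le l\}\ind\{r\le h_s\}$; your ``not split'' clause is automatic when $\Tt_s\le l$. Both sides then cancel the $\lambda f^\uparrow_s$ terms, compare $\sum_{s\le t}q_s(v)$ with sums over $\cE_t(v)$, and finish with a union bound.

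The only difference is cosmetic. The paper applies Proposition~\ref{prop:B} to the non-homogeneous family $W_s\ind\{\Tt_s\le v\}\ind\{v<h_s\}$, for which this floor is exactly the cell minimum. You use the homogeneous family without that factor. Since only $y_s$ and $q_s$ survive the cancellation, both give the same bound.
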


\begin{proof}
\emph{Lower bound.} Run the proof of Theorem~\ref{thm:cens-lower} with
$\tau=+\infty$. If $K_s(r)=0$, then $\ind\{\Tt_s\ge r\}=0$ almost surely, so
the convention is consistent, Lemma~\ref{lem:atrisk}(i) holds almost surely,
and $Q_s=1/K_s(\Tt_s)$ is finite almost surely. The floor
$y_s(a)=\ind\{\Tt_s\ge r\}/K_s(r)$ now has conditional mean
$q_s(a)=\Psi_s(r)\ind\{K_s(r)>0\}$. Using this $q_s(a)$ in
Proposition~\ref{prop:B}, the proof gives \eqref{eq:D33} with
$\Psi_s(r_s(v))$ replaced by
$q_s(v):=\Psi_s(r_s(v))\ind\{K_s(r_s(v))>0\}$, that is,
$t\,L_t(v)\le\sum_{s\le t}q_s(v)$. If $K_s(r_s(v))>0$, then
$\Kt_s(v)\ge K_s(r_s(v))>0$ because $v<r_s(v)$, so $s\in\cE_t(v)$, and
$\Psi_s(r_s(v))\le S_s(v)$. Hence
$\sum_{s\le t}q_s(v)\le\sum_{s\in\cE_t(v)}S_s(v)=n_t(v)\,\overline S^\circ_t(v)$.
This sum is also at most $t\,\overline S_t(v)$.

\emph{Upper bound.} Apply Proposition~\ref{prop:B} on the threshold
refinement of the $\Tt_s$ to the family
$g_s(v):=W_s\ind\{\Tt_s\le v\}\ind\{v<h_s\}$, with split weights
$\frac12,\frac12$ and centering $\Ah^\circ$. On a cell $[l,r)$ of $\cC_s$,
take as floor the cell minimum
$y_s(a):=W_s\ind\{\Tt_s\le l\}\ind\{r\le h_s\}$. If $r\le h_s$ and
$\Tt_s=T_s\le l$, then $K_s(T_s)\ge\Kt_s(l)>0$ because $l<h_s$. Conditioning
on $T_s$ therefore gives $q_s(a)=\E_{s-1}[y_s(a)]=F_s(l)\ind\{r\le h_s\}$.
For $v\in[l,r)$ this is at most $F_s(v)\ind\{v<h_s\}=\E_{s-1}[g_s(v)]$, so
$y_s(a)$ is a floor. By Corollaries~\ref{cor:B} and~\ref{cor:masterband} at
level $\alpha/2$, $\lambda\Dt_t(a_{t+1}(v))\le\ell_t(v)+\ln\frac2\alpha$.
Writing out $\Dt_t$ and $\ell_t$, the terms $\lambda f^\uparrow_s(v)$ cancel,
and
\[
  \sum_{s\le t}q_s(v)\ \ge\ \sum_{s\le t}\psi_\lambda\bigl(y_s(v);\Ah^\circ_s(v)\bigr)
  -\frac{N_t(v)\ln2+\ln\frac2\alpha}\lambda,
\]
where $y_s(v)$ and $q_s(v)$ are the floor of the cell of $v$ at step $s$ and
its conditional mean. If $v$ is not split at step $s$, then
$\Tt_s\le l_s(v)$ if and only if $\Tt_s\le v$, so $y_s(v)=\check g_s(v)$. If
$v$ is split, then $\Tt_s>l_s(v)$, so $y_s(v)=0=\check g_s(v)$. Finally,
$\sum_{s\le t}q_s(v)\le\sum_{s\in\cE_t(v)}F_s(v)
=n_t(v)\bigl(1-\overline S^\circ_t(v)\bigr)$. A union bound over the two
sides completes the proof.
\end{proof}

\begin{remark}[Capped weights]\label{rem:cens-floor}
When subject $s$ is unlikely to be uncensored at the relevant time, that is,
when $K_s$ is small there, the weights
$Y_s(v)=\ind\{\Tt_s\ge r_s(v)\}/K_s(r_s(v))$ and $W_s=\delta_s/K_s(\Tt_s)$ can
be large. Fix $c\in(0,1]$ before seeing the data, and replace $K_s$ and
$\Kt_s$ everywhere by $\max\{K_s,c\}$ and $\max\{\Kt_s,c\}$. This caps every
weight at $1/c$, that is, the weights become
\[
  Y^c_s(v):=\min\{Y_s(v),1/c\},\qquad W^c_s:=\min\{W_s,1/c\}.
\]

\emph{The capped bands are valid.} The proofs use the weights only through the
conditional means of the floors, which must be at most the targets, and
capping only lowers these means. For the at-risk probe, the floor of a cell
with right endpoint $r$ has conditional mean
\[
  \frac{\Psi_s(r)K_s(r)}{\max\{K_s(r),c\}}
  =\Psi_s(r)\min\Bigl\{1,\frac{K_s(r)}c\Bigr\}
  \ \le\ \Psi_s(r)\ind\{K_s(r)>0\}.
\]
For the observed events, conditioning on $T_s$ gives
$\E_{s-1}[W^c_s\ind\{\Tt_s\le U\}]\le F_s(U)$. All results of this section
therefore hold with $Y^c_s$ and $W^c_s$ in place of $Y_s$ and $W_s$. Each choice of $c$ gives its own supermartingale, and hence a band that holds
on its own event of probability at least $1-\alpha$. The value of $c$ must
therefore be fixed before seeing the data. It may depend on $\lambda$. Several
values of $c$ can be combined by a union bound, as for the grid of $\lambda$
in Corollary~\ref{cor:mixture}.

\emph{Capping does not help Theorems~\ref{thm:cens-upper}
and~\ref{thm:cens-lower} or Proposition~\ref{prop:cens-nopos}.} The lower
bound of Theorem~\ref{thm:cens-lower} is
\[
  \frac1t\sum_{s\le t}\psi_\lambda\bigl(Y_s(v);\Ah_s(v)\bigr)
  -\frac{N_t(v)\ln2+\ln\frac1\alpha}{\lambda t},
\]
and the upper bound of Theorem~\ref{thm:cens-upper} is $1$ minus the same
expression with the event weights $\check f_s(v)$ in place of $Y_s(v)$.
Proposition~\ref{prop:cens-nopos} has the same structure, with $\check g_s(v)$
on the upper side. Since $\psi_\lambda(x;a)$ is increasing in $x$, capping
lowers every term of these sums. It therefore lowers the lower bounds and
raises the upper bounds, so with the same centering it never helps. 

\emph{Capping helps Corollary~\ref{cor:cens-explicit}.} With
$\phi(x):=x-\kappa(\lambda)x^2$, we have
$\check S_t(v)-\kappa(\lambda)\widehat V^Y_t(v)/t=\frac1t\sum_{s\le t}\phi(Y_s(v))$,
so the lower bound of Corollary~\ref{cor:cens-explicit} is
\[
  \frac1{1+\kappa(\lambda)}\left[\frac1t\sum_{s\le t}\phi\bigl(Y_s(v)\bigr)
  -\frac{N_t(v)\ln2+\ln\frac2\alpha}{\lambda t}\right],
\]
and its upper bound is $1$ minus the same expression with $\check f_s(v)$ in
place of $Y_s(v)$. On both sides, the
band is tighter the larger each term $\phi(\cdot)$ is. The function $\phi$
increases up to $x^*:=1/(2\kappa(\lambda))$ and decreases afterwards. A weight
above $x^*$ therefore contributes less than the weight $x^*$ would, and a
weight above $2x^*$ even contributes a negative amount. Capping at $x^*$
removes this loss. If $2\kappa(\lambda)\le1$, take $c:=2\kappa(\lambda)$, so
that $1/c=x^*$. Capping then leaves the weights below $x^*$ unchanged and
moves the larger ones down to $x^*$, where $\phi$ is largest. Otherwise take
$c:=1$. Every nonzero weight is at least $1>x^*$, since $K_s\le1$, and capping
moves it down towards $x^*$. In both cases capping does not decrease any term
$\phi(\cdot)$, so it tightens both the lower and the upper bound of
Corollary~\ref{cor:cens-explicit}.

\end{remark}

\newpage

\section{MULTIVARIATE CDF}\label{app:mcdf}
% =====================================================================
Let $X_s\in\mathbb R^d$ and write $u\le v$ componentwise, with
$f_s(v)=\ind\{X_s\le v\}$, $F_s(v)=\PP_{s-1}(X_s\le v)$, and $\Fh_t$, $\Fb_t$
as in Section~\ref{sec:mcdf}.

\paragraph{The idea.}
In one dimension, $X_s$ cuts the cell containing it at the point $X_s$, and
this makes $v\mapsto\ind\{X_s\le v\}$ constant on every new cell
(Definition~\ref{def:homog}). In $d$ dimensions,
\[
  f_s(v):=\ind\{X_s\le v\}=1\quad\Longleftrightarrow\quad v\in O_s,
  \qquad\text{where } O_s:=\{v\in\mathbb R^d:\ v\ge X_s\}.
\]
Here $O_s$ is the orthant above $X_s$. Homogeneity therefore requires each cell
to be cut along the boundary of $O_s$. That boundary consists of the
hyperplanes $\{v_j=X_{s,j}\}$, $j=1,\dots,d$. Cutting a box along all $d$
hyperplanes produces up to $2^d$ pieces. Two observations keep this cheap.
First, a box only needs to be cut if $f_s$ is not already constant on it, that
is, if $O_s$ meets the box without containing it. Second, a hyperplane
$\{v_j=X_{s,j}\}$ that does not pass through the interior of the box produces
an empty piece, which can be ignored, so only the coordinates in which
$X_{s,j}$ lies strictly inside the box create new pieces.
Figure~\ref{fig:boxes} illustrates the four possible situations for $d=2$.

\begin{figure}[t]
\centering
\begin{tikzpicture}[scale=0.8,
  lab/.style={font=\scriptsize},
  capt/.style={font=\footnotesize,align=center}]
  % (a) X_s inside the box: cut in both coordinates
  \begin{scope}[shift={(0,0)}]
    \fill[black!15] (1,1) rectangle (2,2);
    \draw[thick] (0,0) rectangle (2,2);
    \draw[dashed] (1,0) -- (1,2);
    \draw[dashed] (0,1) -- (2,1);
    \fill (1,1) circle (1.6pt) node[below left,lab] {$X_s$};
    \node[capt] at (1,-1.2) {(a) inside:\\ $4$ pieces, $m=2$};
  \end{scope}
  % (b) X_s below the box in coordinate 2: cut only in coordinate 1
  \begin{scope}[shift={(3.6,0)}]
    \fill[black!15] (1,0) rectangle (2,2);
    \draw[thick] (0,0) rectangle (2,2);
    \draw[dashed] (1,0) -- (1,2);
    \draw[dotted] (1,-0.5) -- (1,0);
    \fill (1,-0.5) circle (1.6pt) node[left,lab] {$X_s$};
    \node[capt] at (1,-1.2) {(b) below in $v_2$:\\ $2$ pieces, $m=1$};
  \end{scope}
  % (c) X_s above the box in coordinate 2: f_s = 0 on the box
  \begin{scope}[shift={(7.2,0)}]
    \draw[thick] (0,0) rectangle (2,2);
    \fill (1,2.5) circle (1.6pt) node[left,lab] {$X_s$};
    \node[capt] at (1,-1.2) {(c) above in $v_2$:\\ no cut, $f_s\equiv0$};
  \end{scope}
  % (d) X_s below the box in both coordinates: f_s = 1 on the box
  \begin{scope}[shift={(10.8,0)}]
    \fill[black!15] (0,0) rectangle (2,2);
    \draw[thick] (0,0) rectangle (2,2);
    \fill (-0.4,-0.4) circle (1.6pt) node[left,lab] {$X_s$};
    \node[capt] at (1,-1.2) {(d) below $l$:\\ no cut, $f_s\equiv1$};
  \end{scope}
\end{tikzpicture}
\caption{Cutting a box $B$ (solid) at an observation $X_s$ (dot) in $d=2$.
The shaded part is $B\cap O_s=\{v\in B:v\ge X_s\}$, where $f_s=1$. The box is
cut only when the shaded part is neither empty nor all of $B$, as in (a) and
(b), and only along the coordinates in which $X_s$ lies strictly inside $B$
(dashed).}
\label{fig:boxes}
\end{figure}

\begin{definition}[Box refinement]\label{def:boxes}
A \emph{half-open box} is a product $B=\prod_{j=1}^dI_j$, where each $I_j$ is
an interval $[l_j,r_j)$, with $l_j\in\mathbb R\cup\{-\infty\}$ and
$r_j\in\mathbb R\cup\{+\infty\}$. For $l_j=-\infty$ this means
$(-\infty,r_j)$. Its lower corner is $l=(l_1,\dots,l_d)$ and its upper corner
is $r=(r_1,\dots,r_d)$.

\emph{Pieces.} Given $x\in\mathbb R^d$ and a label
$\varepsilon\in\{0,1\}^d$, the $\varepsilon$-piece of $B$ at $x$ is
\[
  B^x_\varepsilon:=\prod_{j=1}^d I_j^{\varepsilon_j},\qquad
  I_j^{0}:=I_j\cap(-\infty,x_j),\quad I_j^{1}:=I_j\cap[x_j,\infty).
\]
The label says, coordinate by coordinate, whether a point lies to the left
($0$) or to the right ($1$) of $x$. The $2^d$ pieces partition $B$, and
$B^x_{(1,\dots,1)}=B\cap\{v\ge x\}$.

\emph{Refinement.} Use the alphabet $\Alp:=\{\stay\}\cup\{0,1\}^d$ and start from
$\cC_1:=\{\mathbb R^d\}$. At step $s$, a box $B\in\cC_s$ is \emph{cut} if and
only if
\[
  X_{s,j}<r_j\ \text{for all }j
  \qquad\text{and}\qquad
  X_{s,j}>l_j\ \text{for some }j .
\]
The first condition says that $O_s$ meets $B$, and the second says that $O_s$ does
not contain $B$. If $B$ is cut, it is replaced by its nonempty pieces
$B^{X_s}_\varepsilon$, and a probe in $B^{X_s}_\varepsilon$ receives the
symbol $\varepsilon$. If $B$ is not cut, it is kept, and its probes receive
$\stay$.
\end{definition}

Suppose $B$ is cut. In a coordinate with
$l_j<X_{s,j}<r_j$, both $I_j^0$ and $I_j^1$ are nonempty. In a coordinate
with $X_{s,j}\le l_j$, we have $I_j^0=\emptyset$ and $I_j^1=I_j$, so only
labels with $\varepsilon_j=1$ survive. The case $X_{s,j}\ge r_j$ is excluded
by the first cutting condition. Hence, with
\[
  m:=\#\{j:\ l_j<X_{s,j}<r_j\}\ \ge1,
\]
the number of coordinates in which $B$ is actually cut, exactly $2^m$ pieces
are nonempty. For $d=1$, a cut box has $m=1$ and the two pieces $[l,X_s)$ and
$[X_s,r)$, which is the refinement of Definition~\ref{def:cellsR}.

\begin{lemma}[Structure of the box refinement]\label{lem:boxes}
The box refinement has the following properties.
\begin{enumerate}
\item[(i)] Definition~\ref{def:boxes} defines a refinement scheme whose cells
are half-open boxes, and Assumption~\ref{ass:meas} holds.
\item[(ii)] The scheme is homogeneous for $f_s(v)=\ind\{X_s\le v\}$ and for
$1-f_s$.
On a box that is not cut, $f_s\equiv\ind\{X_s\le l\}$, where
$\ind\{X_s\le l\}:=0$ if some $l_j=-\infty$. On the pieces of a cut box,
$f_s\equiv1$ on $B^{X_s}_{(1,\dots,1)}$ and $f_s\equiv0$ on the other $2^m-1$
nonempty pieces.
\item[(iii)] Assumption~\ref{ass:floor} holds for $f_s$, with conditional
mean $F_s$, and for $1-f_s$, with conditional mean $1-F_s$. On a box $B$ with
corners $l$ and $r$, the cell minima are $\ind\{X_s\le l\}$ and
$1-\ind\{X_s<r\}$, where $X_s<r$ is meant componentwise.
\end{enumerate}
\end{lemma}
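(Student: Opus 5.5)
We argue by induction on $s$ that every cell of $\cC_s$ is a nonempty half-open box $\prod_j[l_j,r_j)$ whose corners are coordinates of earlier observations or $\pm\infty$, and that the addresses $a_s(v)$ are generated exactly as in Definition~\ref{def:boxes}.

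\textbf{Part (i).} The base case is $\cC_1=\{\R^d\}$. For the inductive step, take a box $B$ and let $x=X_s$. The pieces $B^{x}_\varepsilon$ are again half-open boxes: each is a product of sets $I_j\cap(-\infty,x_j)$ and $I_j\cap[x_j,\infty)$, and each such set is a half-open interval or empty. The pieces partition $B$.

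We then check the axioms one by one.
\begin{itemize}
\item (R1) holds by construction.
\item (R2) holds because a symbol is appended at every step.
\item (R4) holds because an uncut box keeps its address extended by $\stay$, while a cut box gives $C_{s+1}(a\stay)=\emptyset$.
\item For (R3) and Assumption~\ref{ass:meas}: whether $a_s(v)=a$, and whether $C_s(a)\neq\emptyset$, is decided by finitely many strict or weak comparisons among the coordinates of $v,X_1,\dots,X_{s-1}$. These events are therefore $\Ft_{s-1}\otimes\Sigma_\V$- and $\Ft_{s-1}$-measurable.
\end{itemize}

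\textbf{Part (ii), homogeneity.} Fix $B\in\cC_s$ with corners $l,r$, and let $x=X_s$.

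Suppose first that $B$ is not cut. Then either some $x_j\ge r_j$, or $x\le l$ componentwise.
\begin{itemize}
\item If some $x_j\ge r_j$, every $v\in B$ has $v_j<r_j\le x_j$, so $f_s\equiv0$. This also equals $\ind\{x\le l\}$, because $x_j\ge r_j>l_j$.
\item If $x\le l$ (which forces all $l_j$ finite), every $v\in B$ satisfies $v\ge l\ge x$, so $f_s\equiv1=\ind\{x\le l\}$.
\end{itemize}
If some $l_j=-\infty$ and the box is not cut, we are in the first case, consistent with the stated convention.

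Suppose now that $B$ is cut. On the piece $B^x_\varepsilon$, $v\ge x$ holds if and only if $\varepsilon=(1,\dots,1)$, since $\varepsilon_j=0$ means $v_j<x_j$. Thus $f_s\equiv1$ on $B^x_{(1,\dots,1)}$ and $f_s\equiv0$ on the other pieces. From the preceding count, exactly $2^m$ pieces are nonempty, and all of them except $B^x_{(1,\dots,1)}$ carry the value $0$. Since $1-f_s$ is constant wherever $f_s$ is, homogeneity for $1-f_s$ follows.

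\textbf{Part (iii), the floor.} Again fix $B$ with corners $l,r$. We show that $\inf_B f_s=\ind\{X_s\le l\}$ and that $\sup_B f_s=\ind\{X_s<r\}$, where $X_s<r$ is componentwise.

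\emph{Infimum.} If $X_s\le l$, then $f_s\equiv1$ on $B$. Otherwise some $X_{s,j}>l_j$, and we exhibit a point of $B$ where $f_s=0$. If $l$ is finite, we may take $v=l\in B$. If $l_j=-\infty$, we take $v_j$ below both $X_{s,j}$ and $r_j$, and let the other coordinates be arbitrary points in their intervals.

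\emph{Supremum.} If $X_s<r$, we take $v_j\in[\max(l_j,X_{s,j}),r_j)$ in each coordinate, which is nonempty, so $f_s(v)=1$. Otherwise some $X_{s,j}\ge r_j>v_j$ for all $v\in B$, so $f_s\equiv0$. Hence the minimum of $1-f_s$ is $1-\ind\{X_s<r\}$.

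\emph{Measurability and integrability.} Both minima are $\Ft_s$-measurable, since the corners are $\Ft_{s-1}$-measurable. They are bounded, hence integrable.

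\emph{Floor inequality for $f_s$.} We have $\E_{s-1}[\ind\{X_s\le l\}]=F_s(l)$, taken as $0$ if some $l_j=-\infty$. Since $F_s$ is componentwise nondecreasing and $l\le v$ for $v\in B$, this gives $F_s(l)\le F_s(v)$.

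\emph{Floor inequality for $1-f_s$.} We need $\PP_{s-1}(X_s<r)\ge F_s(v)$ for $v\in B$. This holds because $\{X_s\le v\}\subseteq\{X_s<r\}$ when $v<r$.

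We expect the only delicate step to be the careful handling of infinite corners and of strict versus weak inequalities at the boundary. The remaining steps are direct verification.
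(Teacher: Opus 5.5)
Your proposal is correct and follows essentially the same route as the paper: the address--box correspondence and finitely many coordinate comparisons for (i); for (ii), the uncut case (either $O_s\cap B=\emptyset$ or $B\subseteq O_s$, which you phrase through the negated cutting conditions) together with the $\varepsilon$-labelled pieces; and for (iii), the same witnesses $v=l$ (or $v_j$ small when $l_j=-\infty$) and $v=\max(X_s,l)$, followed by monotonicity. You are somewhat more explicit than the paper about infinite corners and about why $\{C_s(a)\neq\emptyset\}$ is $\Ft_{s-1}$-measurable, but there is no substantive difference.
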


\begin{proof}
(i) By induction on $s$, the address $a_s(v)$ determines the box containing
$v$ and conversely. Each step either keeps the box, with symbol $\stay$, or
replaces it by the piece containing $v$, with symbol $\varepsilon$. The event
$\{a_s(v)=a\}$ is determined by finitely many comparisons among $v$ and
$X_{1:s-1}$, which gives (R3). Condition (R4) holds by construction.

(ii) On a piece $B^{X_s}_\varepsilon$, a point $v$ satisfies $v\ge X_s$ if
and only if $\varepsilon=(1,\dots,1)$, by the definition of the pieces. This
gives the values on a cut box. If $B$ is not cut, then either
$O_s\cap B=\emptyset$, so that $f_s\equiv0$ and $X_s\not\le l$, or
$B\subseteq O_s$, so that $f_s\equiv1$ and $X_s\le l$. The claims for $1-f_s$
follow.

(iii) For $v\in B$ we have $l\le v<r$. Hence
$\ind\{X_s\le l\}\le\ind\{X_s\le v\}$, with equality at $v=l$ when $l$ is
finite. If some $l_j=-\infty$, both sides vanish for $v_j$ small enough.
Moreover $\ind\{X_s\le v\}\le\ind\{X_s<r\}$, with equality at
$v=\max(X_s,l)$, taken componentwise, when $X_s<r$. So the cell minima are as
stated, they are at most $f_s(v)$ and $1-f_s(v)$ for every $v\in B$, and
Assumption~\ref{ass:floor} follows.
\end{proof}

For the path of a probe $v$, let $m_s(v)$ be the number $m$ of coordinates in
which the box of $v$ is cut at step $s$, for $s\in\cS_t(v)$. Let
\[
  M_t(v):=\sum_{s\in\cS_t(v)}m_s(v)
\]
be the total number of coordinate cuts of the box of $v$ up to time $t$.

For a probe $v$, let $N_t(v):=|\cS_t(v)|$ be the number of steps at which its
box is cut, and split these steps according to whether $v$ lands in the piece above $X_s$,
\[
  N^{\le}_t(v):=\#\{s\in\cS_t(v):\ X_s\le v\},\qquad
  N^{\not\le}_t(v):=N_t(v)-N^{\le}_t(v).
\]

\begin{proof}[Proof of Theorem~\ref{thm:mcdf}]
\emph{Lower bound.} Apply Corollary~\ref{cor:A}(i) to $f_s$ at level
$\alpha/2$, with $\gamma\equiv\lambda^2/8$ and uniform weights $2^{-m}$ on the
$2^m$ nonempty pieces of a cut box. Then
$\Pi_t(v)=\sum_{s\in\cS_t(v)}m_s(v)\ln2=M_t(v)\ln2$. At a cut, $f_s(v)=1$
exactly when $v$ lies in the piece above $X_s$, that is, when $X_s\le v$. So
$\check\mu_t(v)=\Fh_t(v)-N^{\le}_t(v)/t$.

\emph{Upper bound.} Apply the same argument to $1-f_s$, whose conditional mean
is $1-F_s$. Now $1-f_s(v)=1$ at exactly the cut steps with $X_s\not\le v$, so
$\check\mu^{1-f}_t(v)=1-\Fh_t(v)-N^{\not\le}_t(v)/t$. This gives
$1-\Fb_t(v)\ge1-\Fh_t(v)-N^{\not\le}_t(v)/t-\rho_t(v)$.

\emph{Invariance.} The cutting rule and the labels depend only on the
coordinatewise order of $v$ and $X_{1:t}$.
\end{proof}
\newpage

\section{DETAILS OF THE NUMERICAL EXPERIMENTS}\label{app:experiments}

\begin{proposition}[Monotone and cross-time closure]\label{prop:closure}
Let $(L_t,U_t)$ cover $\Fbar_t(v)$ for all $t,v$ on an event $E$. Define
$L^*_t(v):=\sup_{u\le v}L_t(u)$, $U^*_t(v):=\inf_{u\ge v}U_t(u)$, and recursively
$L^{**}_t:=\max\left\{L^*_t,\frac{t-1}tL^{**}_{t-1}\right\}$ and
$U^{**}_t:=\min\left\{U^*_t,\frac{(t-1)U^{**}_{t-1}+1}t\right\}$,
with $L^{**}_0:=0$ and $U^{**}_0:=1$. On $E$, these also cover $\Fbar_t(v)$ for
all $t,v$, and they are nested inside $(L_t,U_t)$.
\end{proposition}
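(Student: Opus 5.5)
The proof works pointwise on the event $E$. It uses only two structural facts about the target: $\Fbar_t(\cdot)$ is nondecreasing in $v$, and $t\Fbar_t(v)$ is a sum of terms in $[0,1]$. No probabilistic argument is needed, since validity is inherited from $E$.

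\emph{Monotone closure.} Fix $t$ and $v$ and work on $E$. Each $F_s$ is a conditional CDF and hence nondecreasing, so $\Fbar_t$ is nondecreasing as well. For every $u\le v$ this gives $L_t(u)\le\Fbar_t(u)\le\Fbar_t(v)$. Taking the supremum over $u\le v$ yields $L^*_t(v)\le\Fbar_t(v)$. Symmetrically, for $u\ge v$ we have $U_t(u)\ge\Fbar_t(u)\ge\Fbar_t(v)$, so $U^*_t(v)\ge\Fbar_t(v)$. Nesting holds because the choice $u=v$ is admissible in both the supremum and the infimum. Hence $L^*_t\ge L_t$ and $U^*_t\le U_t$.

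\emph{Cross-time closure.} The key identity is
\[
t\Fbar_t(v)=(t-1)\Fbar_{t-1}(v)+F_t(v), \qquad F_t(v)\in[0,1].
\]
It implies
\[
\tfrac{t-1}{t}\Fbar_{t-1}(v)\le\Fbar_t(v)\le\tfrac{(t-1)\Fbar_{t-1}(v)+1}{t}.
\]
I then argue by induction on $t$ that $L^{**}_t(v)\le\Fbar_t(v)\le U^{**}_t(v)$ for all $v$. The base case $t=0$ is trivial with the conventions $L^{**}_0=0$ and $U^{**}_0=1$; the value of $\Fbar_0$ is irrelevant because its coefficient $t-1$ vanishes at $t=1$. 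For the inductive step, the maps $x\mapsto\frac{t-1}{t}x$ and $x\mapsto\frac{(t-1)x+1}{t}$ are nondecreasing. The induction hypothesis therefore gives
\[
\tfrac{t-1}{t}L^{**}_{t-1}\le\tfrac{t-1}{t}\Fbar_{t-1}\le\Fbar_t,
\qquad
\Fbar_t\le\tfrac{(t-1)\Fbar_{t-1}+1}{t}\le\tfrac{(t-1)U^{**}_{t-1}+1}{t}.
\]
Combining these with the monotone step gives $L^{**}_t\le\Fbar_t\le U^{**}_t$. Nesting is immediate, since a maximum with $L^*_t$ is at least $L^*_t\ge L_t$, and a minimum with $U^*_t$ is at most $U^*_t\le U_t$.

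There is no real obstacle here. The only point needing care is the direction of monotonicity of the affine maps in the induction step, together with the observation that the argument is deterministic on $E$.
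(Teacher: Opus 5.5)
Your proof is correct and follows the paper's argument step for step. You use the monotonicity of $\Fbar_t$ for the sup/inf closure, the identity $t\Fbar_t=(t-1)\Fbar_{t-1}+F_t$ with $F_t\in[0,1]$ plus induction on $t$ for the cross-time closure, and nesting from the admissible choice $u=v$ together with the max/min. Your remark that $\Fbar_0$ plays no role, since its coefficient vanishes at $t=1$, makes explicit a base-case detail the paper leaves implicit.
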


\begin{proof}
Work on $E$. Since $\Fbar_t$ is nondecreasing, $L_t(u)\le\Fbar_t(u)\le\Fbar_t(v)$
for $u\le v$, so $L^*_t(v)\le\Fbar_t(v)$, and likewise $U^*_t(v)\ge\Fbar_t(v)$.
Since $t\Fbar_t=(t-1)\Fbar_{t-1}+F_t$ with $F_t\in[0,1]$,
$\frac{t-1}t\Fbar_{t-1}\le\Fbar_t\le\frac{(t-1)\Fbar_{t-1}+1}t$, and induction on
$t$ gives $L^{**}_t\le\Fbar_t\le U^{**}_t$. Finally $L^{**}_t\ge L^*_t\ge L_t$ and
$U^{**}_t\le U^*_t\le U_t$.
\end{proof}

The same argument applies to $\overline F^W_t$, whose terms
$F^W_s(v)=\mathbb E_{s-1}[W_s\mathbf 1\{X_s\le v\}]$ lie in $[0,1]$ and are
nondecreasing in $v$ when all $W_s$ are non-negative. It also applies to
$\overline S_t$, with the monotone part reversed:
$L^*_t(v)=\sup_{u\ge v}L_t(u)$ and $U^*_t(v)=\inf_{u\le v}U_t(u)$. The closure
costs no error probability and can only tighten a band. It matters for the
threshold bands, which are not monotone in $v$ because $N_t(v)$ is not.

\paragraph{Our bands.}
All bands use $\alpha=0.05$ and are combined over $\lambda_j=2^{-j}$ as in
Corollary~\ref{cor:mixture}. We take $\alpha_j=\alpha/((j+1)(j+2))$,
$j=0,\dots,23$, for Theorem~\ref{thm:cdfbandvar}, and $\alpha_j=\alpha/(j(j+1))$,
$j=1,\dots,16$, for Theorem~\ref{thm:iw-two} and Theorem~\ref{thm:cens-two}
, which need $\lambda<1$. Truncating the grid only lowers the total error
probability. Every band is closed with Proposition~\ref{prop:closure} at every
$t$. For Theorem~\ref{thm:cdfbandvar} we do this exactly on all cells; when a
cell splits, both children inherit the closure of their parent. The weighted and
censored bands are tracked on a fixed grid of probes.

\begin{remark}[Closure and the baseline]\label{rem:closure-mh}
Proposition~\ref{prop:closure} applies to any band that covers
$\overline F_t(v)$ for all $t$ and $v$, including the MH bands, but in
Section~\ref{sec:experiments} we apply it only to ours. The MH bands are
already monotone in $v$, and the cross-time closure changes them little,
because their grid is fixed in advance and their width varies smoothly
with~$t$. Our band, in contrast, is not monotone in $v$, and $N_t(v)$ jumps
when the cell of $v$ splits, so the closure matters for ours.
\end{remark}

\paragraph{Baseline.}
We compare with \citet{mineiro2023} (MH), using the authors' reference
implementation (\url{github.com/microsoft/csrobust}, MIT License) with error
$\alpha/2$ on each side. Their CDF bands evaluate a beta-binomial mixture
confidence sequence on a dyadic grid fixed in advance. For data in $[0,1]$
(Figure~2(a), (c), (d)) we use their algorithm for the unit interval with the
uniform reference measure, as in their experiments, and for importance weights
(Figure~3(a)) their DDRM confidence sequence with $k=0.25$. In both cases their
functions are used verbatim, and we only add a wrapper that evaluates them at
our probes. For Unif$(0,x_{\max})$ (Figure~2(b)) we use their variant for
arbitrary support (Algorithm~3 in their Appendix~D.1), which needs no reference
measure, with one change. Algorithm~3 bounds the CDF at $v$ on level $d$ through
the grid point next to $v$, $\lceil 2^d v\rceil/2^d$ for the upper bound and
$\lfloor 2^d v\rfloor/2^d$ for the lower bound, whose error budget decreases
with its index. Their implementation instead moves this index to that of the
nearest observation when the observation is closer to the origin
\citep[App.~D.1]{mineiro2023}, and skips the level when the new index is
even. For positive data, this only affects their lower bound, and for probes
just below a dyadic point it can make levels unusable. We therefore charge the
grid point next to $v$ at every level, exactly as in Algorithm~3. This changes
two lines of their code, which are marked in the supplementary material. Every
grid point keeps its pre-allocated error budget, so the band remains valid.

\subsection{CDF Bands}

Within each of Figure~\ref{fig:cdf}(a) and (b), all distributions use the same uniforms
$U_s$, $s\le 10^4$, with a different stream in each panel. The Beta$(6,b)$
samples are $F_b^{-1}(U_s)$ for $b\in\{3,3\cdot 10^4\}$. We plot MH for
$b=3$ and $b=3\cdot 10^4$. The
Unif$(0,x_{\max})$ samples are $x_{\max}U_s$. Both maps are strictly increasing, so our band at a given
quantile level is identical for every $b$ and $x_{\max}$. Independent
samples would change it only through sampling noise. For i.i.d.\ data from a
continuous distribution, our band at the quantile level $q$ depends only on the
relative order of the observations and the probe, and the law of this order
does not depend on the distribution. In
Figure~\ref{fig:cdf}(d) the data are i.i.d.\ Beta$(6,3)$ up to $t=10^5$.

The continuous P\'olya urn of MH draws
$X_s\sim\mathrm{Beta}(2+\gamma_s n^{>}_{s},\,2+\gamma_s n^{\le}_{s})$ with
$\gamma_s=s^{-1/2}$, where $n^{>}_s$ and $n^{\le}_s$ count the earlier
observations above $1/2$ and at most $1/2$. Figure~\ref{fig:cdf}(c) uses the
two seeds of MH, for which the urn concentrates near $0$ and near $1$, and shows
the bands at $t=10^3$.
MH pay for every point of their dyadic grid in a union bound, more for deeper
levels and, in the variant for arbitrary support, for points far from $0$. Data
concentrated on a short interval need deep levels, and data far from $0$ need
points far from $0$, which is why their width grows with $b$ in
Figure~\ref{fig:cdf}(a) and changes with $x_{\max}$ in Figure~\ref{fig:cdf}(b).

\subsection{Importance-Weighted CDF}

Figure~\ref{fig:ope}(a) uses the importance-weighted P\'olya urn of MH. The
weights are i.i.d.\ with $\mathbb P(W_s=4)=1/4$ and $\mathbb P(W_s=0)=3/4$, so
that $\mathbb EW_s=1$. The event $W_s=4$ means that the logging policy takes
the action of the target policy, which it does with probability $1/4$. Each
value $w\in\{0,4\}$ runs its own nonstationary P\'olya urn. Given $W_s=w$,
\[
X_s\sim\mathrm{Beta}\bigl(2+\gamma_s n^{>}_s(w),\;2+\gamma_s n^{\le}_s(w)\bigr),
\qquad \gamma_s=s^{-1/2},
\]
where $n^{>}_s(w)$ and $n^{\le}_s(w)$ count the $X_r$, $r<s$, with $W_r=w$ that
are above $1/2$ and at most $1/2$. Each urn reinforces only its own past, so
the two urns can concentrate at opposite ends of $[0,1]$. Let $F^{(w)}_s$ be
the conditional CDF of the urn with weight $w$. Then
$\mathbb E_{s-1}[W_s\mathbf 1\{X_s\le v\}]=F^{(4)}_s(v)$, so the target
$\overline F^W_t$ is the running average of the $F^{(4)}_s$, while the factual
running-average CDF in the figure mixes both urns. We use
Theorem~\ref{thm:iw-two}, centred at the clipped running means of
Remark~\ref{rem:centering},
$\widehat A_s(v)=\min\{1,(s-1)^{-1}\sum_{r<s}W_r\mathbf 1\{X_r\le v\}\}$ for
the lower bound and the same with $\mathbf 1\{X_r>v\}$ for the upper bound,
with $\widehat A_1:=0$. 

\subsection{Censored Data}

Subjects arrive one at a time and belong to one of two groups, which are not
observed. In the high-risk group, $Z_s=1$, lifetimes are exponential with mean
$0.7$, so $S_1(v)=e^{-v/0.7}$. In the low-risk group, $Z_s=0$, they are Weibull
with shape $1.5$ and scale $2$, so $S_0(v)=e^{-(v/2)^{1.5}}$. The high-risk
group has the larger hazard on $(0,\tau)$, and its median lifetime is $0.49$,
against $1.57$.

The share of high-risk subjects depends on the outcomes observed so far.
Subject $s$ is high-risk with probability $p_s=0.2+0.6\,q_s$, where $q_s$ is
the fraction of the last 100 subjects (of all earlier subjects if $s\le 100$)
whose event was observed by time $1$, that is, with $\delta_r=1$ and
$\widetilde T_r\le 1$, and $p_1=0.5$. Many early events thus bring in more
high-risk subjects, who produce more early events, so the lifetimes are not
i.i.d. Since $q_s$ uses only observed data, $p_s$ is
$\mathcal F_{s-1}$-measurable, and the conditional survival function of $T_s$
is $S_s=(1-p_s)S_0+p_sS_1$. The target is
$\overline S_t(v)=t^{-1}\sum_{s\le t}S_s(v)$.

Each lifetime is censored by an independent $\mathsf C_s\sim\mathrm{Exp}(r)$,
and we observe $\widetilde T_s=T_s\wedge\mathsf C_s$ and
$\delta_s=\mathbf 1\{T_s\le\mathsf C_s\}$. We set $r=0.5$, so that the lifetimes
of 40\% of the subjects are censored, and $\tau=3$, so that
$K_s(\tau)=e^{-1.5}\approx0.22$ in Assumption~\ref{ass:cens}. We use Theorem~\ref{thm:cens-two}
with the clipped running means of Remark~\ref{rem:centering} as centerings, for
both the at-risk term and the event term. For reference,
Figure~\ref{fig:ope}(b) also shows the band of Theorem~\ref{thm:cdfbandvar},
with the closure, computed from the observed times $\widetilde T_s$. It ignores
the censoring and therefore targets the survival function of $\widetilde T_s$,
not $\overline S_t$.

\end{document}

%% file: fig_tree_column.tex
\definecolor{trpath}{RGB}{204,85,0}%
\definecolor{trstay}{RGB}{140,140,140}%

\begin{tikzpicture}[
  x=1cm, y=1cm,
  nd/.style={draw, rounded corners=1.5pt, fill=white, align=center,
             inner sep=0.8pt, minimum height=0.68cm, font=\scriptsize},
  iv/.style={nd, minimum width=1.08cm},
  sg/.style={nd, minimum width=0.70cm, fill=gray!13},
  hotn/.style={draw=trpath, line width=1pt, fill=trpath!10},
  split/.style={semithick, black},
  stay/.style={densely dashed, trstay},
  hot/.style={line width=1.4pt, trpath},
  sym/.style={font=\tiny, fill=white, inner sep=0.6pt},
  ssym/.style={sym, text=trstay},
  hsym/.style={sym, text=trpath},
  chg/.style={font=\scriptsize, text=trpath, inner sep=0.5pt},
]
\def\dx{1.04}
\def\dy{1.1}
\newcommand{\ad}[2]{{\footnotesize$#2$}\\[-1.5pt]$#1$}

% ---- level 0: C_1
\node[iv] (r) at (3*\dx, 0)
  {{\footnotesize$\mathbb R$}\\[-1.5pt]$\varnothing$};

% ---- level 1: C_2
\node[iv] (L) at (1*\dx, -\dy)
  {\ad{\mathsf L}{(-\infty,3)}};
\node[iv,hotn] (R) at (5*\dx, -\dy)
  {\ad{\mathsf R}{[3,\infty)}};

% ---- level 2: C_3
\node[iv] (LL) at (0*\dx, -2*\dy)
  {\ad{\mathsf L\mathsf L}{(-\infty,1)}};
\node[iv] (LR) at (2*\dx, -2*\dy)
  {\ad{\mathsf L\mathsf R}{[1,3)}};
\node[iv,hotn] (Rs) at (5*\dx, -2*\dy)
  {\ad{\mathsf R\stay}{[3,\infty)}};

% ---- level 3: C_4
\node[iv] (LLs) at (0*\dx, -3*\dy)
  {\ad{\mathsf L\mathsf L\stay}{(-\infty,1)}};
\node[iv] (LRs) at (2*\dx, -3*\dy)
  {\ad{\mathsf L\mathsf R\stay}{[1,3)}};
\node[iv,hotn] (RsL) at (4*\dx, -3*\dy)
  {\ad{\mathsf R\stay\mathsf L}{[3,5)}};
\node[iv] (RsR) at (6*\dx, -3*\dy)
  {\ad{\mathsf R\stay\mathsf R}{[5,\infty)}};

% ---- edges
\draw[split] (r.south) -- (L.north)
  node[sym,pos=.5] {$\trsym L$};
\draw[hot] (r.south) -- (R.north)
  node[hsym,pos=.5] {$\trsym R$};

\draw[split] (L.south) -- (LL.north)
  node[sym,pos=.5] {$\trsym L$};
\draw[split] (L.south) -- (LR.north)
  node[sym,pos=.5] {$\trsym R$};

\draw[hot,dash pattern=on 2.5pt off 1.5pt]
  (R.south) -- (Rs.north)
  node[hsym,pos=.5] {$\trsym s$};

\draw[stay] (LL.south) -- (LLs.north)
  node[ssym,pos=.5] {$\trsym s$};
\draw[stay] (LR.south) -- (LRs.north)
  node[ssym,pos=.5] {$\trsym s$};

\draw[hot] (Rs.south) -- (RsL.north)
  node[hsym,pos=.5] {$\trsym L$};
\draw[split] (Rs.south) -- (RsR.north)
  node[sym,pos=.5] {$\trsym R$};

% ---- step labels
\foreach \s/\yy in {1/0.5, 2/1.5, 3/2.5}
  \node[font=\scriptsize, text=black!70, anchor=east]
    at (-0.57, -\yy*\dy) {$X_{\s}$};

% ---- bottom rank-order line
\def\ya{-3*\dy-0.66}

\foreach \n/\xx in {LLs/0, LRs/2, RsL/4, RsR/6}
  \draw[densely dotted, gray] (\n.south) -- (\xx*\dx, \ya);

\draw[trpath, line width=2.5pt, opacity=.35]
  (3*\dx, \ya) -- (5*\dx, \ya);

\draw[-{Latex[length=1.6mm]}]
  (-0.57, \ya) -- (6*\dx+0.6, \ya)
  node[font=\scriptsize, anchor=south east, inner sep=1pt]
  {$\mathbb R$};

\foreach \xx/\lab in {1/X_2{=}1, 3/X_1{=}3, 5/X_3{=}5}{
  \fill (\xx*\dx, \ya) circle (1.6pt);
  \node[font=\scriptsize, below=0.5pt]
    at (\xx*\dx, \ya) {$\lab$};
}

\fill[trpath] (4*\dx, \ya-0.07)
  -- ++(-0.09,-0.15) -- ++(0.18,0) -- cycle;
\node[font=\scriptsize, text=trpath, below=3.5pt]
  at (4*\dx, \ya) {$v{=}4$};

\end{tikzpicture}%

%% file: refs.bib
@inproceedings{
mineiro2023,
title={Time-uniform confidence bands for the {CDF} under nonstationarity},
author={Paul Mineiro and Steven R Howard},
booktitle={Thirty-seventh Conference on Neural Information Processing Systems},
year={2023},
url={https://openreview.net/forum?id=39cFjnRpYm}
}

@article{Savisurvey,
author = {Aaditya Ramdas and Peter Gr{\"u}nwald and Vladimir Vovk and Glenn Shafer},
title = {{Game-Theoretic Statistics and Safe Anytime-Valid Inference}},
volume = {38},
journal = {Statistical Science},
number = {4},
publisher = {Institute of Mathematical Statistics},
pages = {576 -- 601},
year = {2023},
doi = {10.1214/23-STS894},
URL = {https://doi.org/10.1214/23-STS894}
}

@article{10.1214/aoms/1177696786,
author = {Herbert Robbins},
title = {{Statistical Methods Related to the Law of the Iterated Logarithm}},
volume = {41},
journal = {The Annals of Mathematical Statistics},
number = {5},
publisher = {Institute of Mathematical Statistics},
pages = {1397 -- 1409},
year = {1970},
doi = {10.1214/aoms/1177696786},
URL = {https://doi.org/10.1214/aoms/1177696786}
}

@article{10.1214/20-AOS1991,
author = {Steven R. Howard and Aaditya Ramdas and Jon McAuliffe and Jasjeet Sekhon},
title = {{Time-uniform, nonparametric, nonasymptotic confidence sequences}},
volume = {49},
journal = {The Annals of Statistics},
number = {2},
publisher = {Institute of Mathematical Statistics},
pages = {1055 -- 1080},
year = {2021},
doi = {10.1214/20-AOS1991},
URL = {https://doi.org/10.1214/20-AOS1991}
}

@article{10.3150/21-BEJ1388,
author = {Steven R. Howard and Aaditya Ramdas},
title = {{Sequential estimation of quantiles with applications to A/B testing and best-arm identification}},
volume = {28},
journal = {Bernoulli},
number = {3},
publisher = {Bernoulli Society for Mathematical Statistics and Probability},
pages = {1704 -- 1728},
year = {2022},
doi = {10.3150/21-BEJ1388},
URL = {https://doi.org/10.3150/21-BEJ1388}
}

@inproceedings{
chandak2021universal,
title={Universal Off-Policy Evaluation},
author={Yash Chandak and Scott Niekum and Bruno Castro da Silva and Erik Learned-Miller and Emma Brunskill and Philip S. Thomas},
booktitle={Advances in Neural Information Processing Systems},
editor={A. Beygelzimer and Y. Dauphin and P. Liang and J. Wortman Vaughan},
year={2021},
url={https://openreview.net/forum?id=09D1DIEHrJO}
}

@inproceedings{
huang2021offpolicy,
title={Off-Policy Risk Assessment in Contextual Bandits},
author={Audrey Huang and Liu Leqi and Zachary Chase Lipton and Kamyar Azizzadenesheli},
booktitle={Advances in Neural Information Processing Systems},
editor={A. Beygelzimer and Y. Dauphin and P. Liang and J. Wortman Vaughan},
year={2021},
url={https://openreview.net/forum?id=u9RvlvaBC7}
}

@article{anytimeoffpolicy, author = {Waudby-Smith, Ian and Wu, Lili and Ramdas, Aaditya and Karampatziakis, Nikos and Mineiro, Paul}, title = {Anytime-valid off-policy Inference for Contextual Bandits}, year = {2024}, issue_date = {September 2024}, publisher = {Association for Computing Machinery}, address = {New York, NY, USA}, volume = {1}, number = {3}, url = {https://doi.org/10.1145/3643693}, doi = {10.1145/3643693}, journal = {ACM / IMS J. Data Sci.}, month = may, articleno = {10}, numpages = {42} }

@article{CLERICO2026110558,
title = {Uniform mean estimation for monotonic processes},
journal = {Statistics \& Probability Letters},
volume = {228},
pages = {110558},
year = {2026},
issn = {0167-7152},
doi = {https://doi.org/10.1016/j.spl.2025.110558},
url = {https://www.sciencedirect.com/science/article/pii/S0167715225002032},
author = {Eugenio Clerico and Hamish E. Flynn and Patrick Rebeschini}
}

@Article{Rakhlin2015,
author={Rakhlin, Alexander
and Sridharan, Karthik
and Tewari, Ambuj},
title={Sequential complexities and uniform martingale laws of large numbers},
journal={Probability Theory and Related Fields},
year={2015},
month={Feb},
day={01},
volume={161},
number={1},
pages={111-153},
issn={1432-2064},
doi={10.1007/s00440-013-0545-5},
url={https://doi.org/10.1007/s00440-013-0545-5}
}

@book{ville1939,
  author    = {Ville, Jean},
  title     = {\'Etude critique de la notion de collectif},
  publisher = {Gauthier-Villars}, address = {Paris}, year = {1939}
}

@article{Ham_2026,
   title={Design-Based Confidence Sequences: A General Approach to Risk Mitigation in Panel Experiments},
   ISSN={1526-5501},
   url={http://dx.doi.org/10.1287/mnsc.2024.08235},
   DOI={10.1287/mnsc.2024.08235},
   journal={Management Science},
   publisher={Institute for Operations Research and the Management Sciences (INFORMS)},
   author={Ham, Dae Woong and Lindon, Michael and Tingley, Martin and Bojinov, Iavor},
   year={2026},
   month=Aug }

@ARTICLE{KT,
  author={Krichevsky, R. and Trofimov, V.},
  journal={IEEE Transactions on Information Theory}, 
  title={The performance of universal encoding}, 
  year={1981},
  volume={27},
  number={2},
  pages={199-207},
  doi={10.1109/TIT.1981.1056331}}

@article{chernoff,
author = {Howard, Steven and Ramdas, Aaditya and McAuliffe, Jon and Sekhon, Jagmohan},
year = {2020},
month = {01},
pages = {257-317},
title = {Time-uniform {C}hernoff bounds via nonnegative supermartingales},
volume = {17},
journal = {Probability Surveys},
doi = {10.1214/18-PS321}
}

@misc{fan2015exponentialinequalitiesmartingalesapplications,
      title={Exponential inequalities for martingales with applications}, 
      author={Xiequan Fan and Ion Grama and Quansheng Liu},
      year={2015},
      eprint={1311.6273},
      archivePrefix={arXiv},
      primaryClass={math.PR},
      url={https://arxiv.org/abs/1311.6273}, 
}

@inproceedings{alon,
author = {Alon, Noga and Ben-Eliezer, Omri and Dagan, Yuval and Moran, Shay and Naor, Moni and Yogev, Eylon},
title = {Adversarial laws of large numbers and optimal regret in online classification},
year = {2021},
isbn = {9781450380539},
publisher = {Association for Computing Machinery},
address = {New York, NY, USA},
url = {https://doi.org/10.1145/3406325.3451041},
doi = {10.1145/3406325.3451041},
booktitle = {Proceedings of the 53rd Annual ACM SIGACT Symposium on Theory of Computing},
pages = {447–455},
numpages = {9},
location = {Virtual, Italy},
series = {STOC 2021}
}

@article{PITTEL1984461,
title = {On growing random binary trees},
journal = {Journal of Mathematical Analysis and Applications},
volume = {103},
number = {2},
pages = {461-480},
year = {1984},
issn = {0022-247X},
doi = {https://doi.org/10.1016/0022-247X(84)90141-0},
url = {https://www.sciencedirect.com/science/article/pii/0022247X84901410},
author = {Boris Pittel}
}

@article{Luc,
author = {Devroye, Luc},
title = {A note on the height of binary search trees},
year = {1986},
issue_date = {July 1986},
publisher = {Association for Computing Machinery},
address = {New York, NY, USA},
volume = {33},
number = {3},
issn = {0004-5411},
url = {https://doi.org/10.1145/5925.5930},
doi = {10.1145/5925.5930},
journal = {J. ACM},
month = may,
pages = {489–498},
numpages = {10}
}

@article{betting,
    author = {Waudby-Smith, Ian and Ramdas, Aaditya},
    title = {Estimating means of bounded random variables by betting},
    journal = {Journal of the Royal Statistical Society Series B: Statistical Methodology},
    volume = {86},
    number = {1},
    pages = {1-27},
    year = {2024},
    month = {02},
    issn = {1369-7412},
    doi = {10.1093/jrsssb/qkad009},
    url = {https://doi.org/10.1093/jrsssb/qkad009},
    eprint = {https://academic.oup.com/jrsssb/article-pdf/86/1/1/56961777/qkad009.pdf},
}

@misc{mineiro2022lowerconfidencesequencechanging,
      title={A lower confidence sequence for the changing mean of non-negative right heavy-tailed observations with bounded mean}, 
      author={Paul Mineiro},
      year={2022},
      eprint={2210.11133},
      archivePrefix={arXiv},
      primaryClass={stat.ML},
      url={https://arxiv.org/abs/2210.11133}, 
}

@article{dkw,
author = {A. Dvoretzky and J. Kiefer and J. Wolfowitz},
title = {{Asymptotic Minimax Character of the Sample Distribution Function and of the Classical Multinomial Estimator}},
volume = {27},
journal = {The Annals of Mathematical Statistics},
number = {3},
publisher = {Institute of Mathematical Statistics},
pages = {642 -- 669},
year = {1956},
doi = {10.1214/aoms/1177728174},
URL = {https://doi.org/10.1214/aoms/1177728174}
}

@article{massart,
author = {Massart, Pascal},
year = {1990},
month = {07},
pages = {},
title = {The Tight Constant in the {D}voretzky-{K}iefer-{W}olfowitz Inequality},
volume = {18},
journal = {The Annals of Probability},
doi = {10.1214/aop/1176990746}
}
